\documentclass{article}

\usepackage[left=1in, right=1in, top=1in, bottom=1in]{geometry}

\usepackage[utf8]{inputenc}                     % allow utf-8 input
\usepackage{beton}  % removed: causes T1/ccr/b/n font shape undefined warnings with \bm{}

\usepackage{amsmath}
\usepackage{amsfonts}
\usepackage{amssymb}
\usepackage{bm}
\usepackage{nicefrac}       % compact symbols for 1/2, etc.
\usepackage{microtype}      % microtypography
\usepackage[T1]{fontenc}
\usepackage[dvipsnames]{xcolor}
\usepackage[
  pagebackref=true,
  colorlinks=true,
  citecolor=MidnightBlue,
  linkcolor=PineGreen,
  urlcolor=Sepia,
]{hyperref}
\hypersetup{
  pdftitle={Leaner Transformers Can Easily Learn to Cluster},
  pdfauthor={Charlotte Park, Kenneth L. Clarkson, Lior Horesh, Takuya Ito, Parikshit Ram},
  pdfsubject={},
  pdfkeywords={transformer circuits, clustering, in-context learning}
}

\usepackage{mathtools}% http://ctan.org/pkg/mathtools
\usepackage[ruled,vlined,linesnumbered]{algorithm2e}
\usepackage{xspace}
\usepackage{booktabs}
\usepackage[capitalize,noabbrev]{cleveref}
\usepackage{natbib}
\usepackage{amsthm}
\usepackage{mdframed}
\usepackage[many]{tcolorbox}
\usepackage{xkcdcolors}

\newcounter{hyperrefeqctr}

\AtBeginDocument{%
  \let\OldRefstepcounterEquation\refstepcounter
  \renewcommand{\refstepcounter}[1]{%
    \OldRefstepcounterEquation{#1}%
    \ifstrequal{#1}{equation}{\stepcounter{hyperrefeqctr}}{}%
  }%
}

\theoremstyle{plain}
\newtheorem{theorem}{Theorem}[section]
\newtheorem{lemma}{Lemma}[section]

\newtheorem{proposition}{Proposition}[section]
\theoremstyle{definition}
\newtheorem{definition}{Definition}[section]
\newtheorem{assumption}{Assumption}[section]
\theoremstyle{remark}

\usepackage{enumitem}
\setlist[itemize]{noitemsep, topsep=0pt, leftmargin=*}
\setlist[enumerate]{noitemsep, topsep=0pt, leftmargin=*}

\usepackage{graphicx}
\usepackage{subcaption}
\usepackage{wrapfig}
\usepackage[normalem]{ulem}

\newcommand{\R}{\mathbb{R}}
\newcommand{\E}{\mathbb{E}}
\newcommand{\N}{\mathbb{N}}
\newcommand{\Ls}{\mathcal{L}}
\def\ceil#1{\lceil #1 \rceil}
\newcommand{\iset}[1]{[ #1 ]}

\def\gD{{\mathcal{D}}}
\def\gS{{\mathcal{S}}}
\def\gT{{\mathcal{T}}}
\def\gU{{\mathcal{U}}}
\def\gX{{\mathcal{X}}}
\def\gY{{\mathcal{Y}}}

\def\vb{{\bm{b}}}
\def\vc{{\bm{c}}}
\def\ve{{\bm{e}}}
\def\vl{{\bm{l}}}
\def\vlambda{{\bm{\lambda}}}
\def\vp{{\bm{p}}}
\def\vr{{\bm{r}}}
\def\vs{{\bm{s}}}
\def\vv{{\bm{v}}}
\def\vx{{\bm{x}}}
\def\vz{{\bm{z}}}
\def\rvx{{\mathbf{x}}}

\def\mA{{\bm{A}}}
\def\mB{{\bm{B}}}
\def\mC{{\bm{C}}}
\def\mI{{\bm{I}}}
\def\mK{{\bm{K}}}
\def\mQ{{\bm{Q}}}
\def\mV{{\bm{V}}}
\def\mW{{\bm{W}}}
\def\mX{{\bm{X}}}
\def\mY{{\bm{Y}}}
\def\mZ{{\bm{Z}}}
\def\tZ{{\mathcal{Z}}}

\newcommand{\softmax}{\mathrm{softmax}}
\newcommand{\hardmax}{\mathrm{hardmax}}
\newcommand{\ahmax}{\mathrm{avghardmax}}
\newcommand{\sqdist}{\mathrm{sqdist}}

\DeclareMathOperator*{\argmin}{arg\,min}

\newcommand{\norm}[1]{\left\Vert#1\right\Vert}
\newcommand{\inpr}[2]{\left \langle #1, #2 \right\rangle}
\newcommand{\eudi}[2]{\left\| #1 - #2 \right\|^2}

\newcommand{\demb}{{d_\mathsf{emb}}}

\def\attned#1{{{\mathcal{A}_2^{#1}}}}

\def\cmodel{F^{\gamma}_{\!\!d:k}}

\def\eohc{{{E_C^{\mathtt{OH}}}}}
\def\eohx{{{E_X^{\mathtt{OH}}}}}
\def\ebnc{{{E_C^{\mathtt{BN}}}}}
\def\ebnx{{{E_X^{\mathtt{BN}}}}}

\newcommand{\oh}{{\texttt{OH}}\xspace}
\newcommand{\bn}{{\texttt{BN}}\xspace}
\newcommand{\na}{{\texttt{NA}}\xspace}
\newcommand{\up}{$\blacktriangle$}
\newcommand{\down}{$\blacktriangledown$}

\newcommand{\qa}[2]{\left[ \begin{matrix} #1 \\ #2 \end{matrix} \right]}

\newcommand{\highlightmath}[2]{\colorbox{#1!50}{$\displaystyle#2$}}

\usepackage[toc,page,header]{appendix}
\usepackage{minitoc}

\title{Leaner Transformers Can Easily Learn to Cluster}

\author{%
Charlotte Park${}^\ddag$,
Kenneth L. Clarkson${}^\dag$,
Lior Horesh${}^\dag$,
Takuya Ito${}^\dag$,
Parikshit Ram${}^\dag$.
\\
{\normalsize
${}^\dag${\sf IBM Research},
${}^\ddag${\sf MIT}.
}
\qquad \qquad
{\small
Email: \texttt{parikshit.ram@ibm.com}
}}
\date{}

\begin{document}

\begin{mdframed}[
  linecolor=ForestGreen!30!black,
  backgroundcolor=ForestGreen!10,
  topline=false,
  rightline=false,
  bottomline=false,
  leftline=false,
  innertopmargin=-20pt,
  splittopskip=0pt,
  innerbottommargin=20pt,
]
\maketitle

\begin{abstract}
Transformers have in-context learning capabilities, where some known learning algorithms can be executed in the forward pass through the model. Recent work shows that transformers can exactly perform Lloyd's algorithm for $k$-means clustering with $n$ points in $d$ dimensions with an embedding size $\demb = d+k$ (thus, requiring attention projection matrices of size $(d+k)^2$). In this work, we build upon this result in the following ways: First, we present an equally expressive but smaller transformer that executes Lloyd's algorithm with embedding size $\demb = (d + \ceil{\log_2 k})$.
Next, we train these transformers to learn the clustering algorithms given a distribution of clustering tasks, and theoretically characterize and empirically validate the factors affecting the convergence and in-distribution generalization of learning algorithms based on stochastic gradients.
Finally, we probe the general clustering abilities of these learned algorithms (in the form of transformers), and try to understand situations where they succeed and fail.
\end{abstract}
\end{mdframed}
\doparttoc % Tell to minitoc to generate a toc for the parts
\faketableofcontents % Run a fake tableofcontents command for the partocs
%
%\part{} % Start the document part
%\parttoc % Insert the document TOC
%
\section{Transformers Can Execute Clustering Algorithms In-Context}
\label{sec:intro}
Clustering is a well-studied problem in computer science, having various applications across many fields, and possessing a nontrivial level of hardness~\citep{hartigan1975clustering, aggarwal2013data}. Within machine learning, clustering can help with tasks such as exploratory data analysis, unsupervised labeling, data compression, and quantization.
Given a set of items, such as points in Euclidean space, the goal of $k$-clustering is to partition the items into~$k$ groups.
Clustering is categorized through various axes such as discrete vs fuzzy or flat vs hierarchical, and have application-dependent clustering objectives such as
$k$-means, discrete~\citep{macqueen1967some, lloyd1982least} or soft~\citep{dunn1974fuzzy, bezdek2013pattern}; $k$-medoids~\citep{kaufman1990partitioning, ng2002clarans}; spectral~\citep{von2007tutorial}; kernelized~\citep{scholkopf1998nonlinear, girolami2002mercer, dhillon2004kernel}; and variants.

Here we focus on discrete (flat) Euclidean $k$-means clustering. Given a set of points $\gS = \{ \vx_1, \ldots, \vx_n \}$ in a $d$-dimensional Euclidean space, with $\vx_i \in \R^d$ for $i \in \iset{n} \triangleq \{1, 2, \ldots, n\}$, the goal is to find $k$ centroids/prototypes $\vc_j \in \R^d, j \in \iset{k}$ by solving the following problem:
\begin{equation} \label{eq:kmeans}
\min_{\{\vc_j \in \R^d,  j \in \iset{k} \}}
\sum_{i = 1}^n \min_{j \in \iset{k} } \eudi{\vx_i}{ \vc_j }.
\end{equation}
This has a discrete non-differentiable objective function, and so is inherently combinatorial~\citep{macqueen1967some, lloyd1982least}. This clustering problem is known to be NP-hard even with $k=2$, which highlights its non-trivial hardness~\citep{dasgupta2008hardness}, and prompts the development of various clustering algorithms~\citep{hartigan1975clustering, aggarwal2013data}.

Beyond learning predictive models from data (like a decision tree ensemble to predict credit default or a deep convolutional neural network to label an image), an area of interest in machine learning is the ability to learn algorithms from data.
In this line of work, attention-based transformers~\citep{vaswani2017attention, phuong2022formal} and recurrent neural networks~\citep{graves2014neural, andrychowicz2016learning} have garnered much interest.
With transformers, the ability to execute an algorithm is often studied as {\em in-context learning}, where a transformer is shown to be able to execute a specific learning algorithm within its forward pass~\citep{garg2022can, akyurek2023what, von2023transformers, li2023transformers}.
Given this ability to express a specific (learning) algorithm in-context --- thus demonstrating {\em algorithmic expressivity} --- an important question is how and when such algorithms can be learned~\citep{von2023transformers, li2023transformers, ahn2023transformers, huang2023context, zhang2023trained, cheng2024transformers}, and what kind of {\em algorithmic generalization} can be expected~\citep{li2023transformers, ito2025quantifying}.
Recently, \citet{clarkson2026transformer} established algorithmic expressivity of a transformer with a model embedding dimensionality $\demb = d + k$ for Euclidean clustering, demonstrating that this architecture can exactly execute Lloyd's algorithm for $k$-means~\citep{lloyd1982least}, and some variants for fuzzy~\citep{dunn1974fuzzy, bezdek2013pattern} and robust clustering~\citep{cuesta1997trimmed}.
\paragraph{Contributions.}
We build upon \citet{clarkson2026transformer} to make the following contributions:
\begin{itemize}
\item We present a novel transformer architecture that exactly executes Lloyd's algorithm for $k$-means with a reduced model embedding size $\demb = d+\ceil{\log_2 k}$, an up to 75\% reduction\footnotemark in the number of transformer parameters when $k \approx d$, and much larger if $k \gg d$.
\footnotetext{The number of parameters in a single-headed attention-only transformer is often $\demb^2$ for each of the query, key, value projection matrices. When $k \approx d$ and $\demb = (d+k)$, these matrices are of size $(2d)^2$, while $\demb = (d + \log_2 k)$ leads to size $(d+ \log_2 d)^2$. As $d$ grows, $\nicefrac{(d+\log_2 d)^2}{4 d^2} \to 1/4 = 0.25$.}
\item We study a computationally cheap algorithm to train these transformers to learn to cluster in an end-to-end differentiable manner given a distribution of clustering tasks, and theoretically characterize the various factors affecting its convergence and in-distribution generalization. Specific factors of interest here are the token embedding scheme in the transformer, and the smoothing procedure to optimize the discrete objective in \cref{eq:kmeans}.
\end{itemize}
One insight from this analysis is that, for a fixed number of clusters $k$, and a relatively small data dimensionality $d$, our proposed smaller transformer with $\demb = d + \ceil{ \log_2 k }$ takes longer to converge and generalize compared to the previous transformer with $\demb = d+k$. However, both eventually converge to the same performance, and this difference in convergence rate vanishes as $d$ grows.
\paragraph{Notation.}
We denote the index set as $\iset{ m } \triangleq \{1, 2, \ldots, m \}$ for a positive integer $m \in \N$. We use lowercase letters (such as $x$) to denote scalars and uppercase letters (such as $L$) to denote functions, boldface lowercase letters ($\vx$) to denote vectors, boldface uppercase ($\mX$) to denote matrices, and calligraphic letters ($\gX$) to denote sets. The $i^{\text{th}}$ row of a matrix $\mX$ is $\mX_{i,:}$, the $j^{\text{th}}$ column is $\mX_{:,j}$, the first $i$ rows of $\mX$ is $\mX_{:i,:}$. $\mI_k$ is the $(k \times k)$ identity matrix.
$\norm{\cdot}$ denotes the $\ell_2$ norm for vectors and the spectral norm for matrices.
For a set of matrices $\theta = \{ \mW_1, \mW_2, \ldots, \mW_m \}$ (such as model parameter sets), $\norm{\theta} \triangleq \max_{i \in \iset{ m }} \norm{ \mW_i }$. For any function $F: \gX \to \gY$ with learnable parameters $\theta$, we write it as $F(\rvx; \theta)$ for any $\rvx \in \gX$ to explicitly note the dependence on the parameters $\theta$.
\section{Expressing a Clustering Algorithm with a Leaner Transformer}
\label{sec:expressivity}
The existing algorithmic expressivity of transformers exactly emulating Lloyd's algorithm~\citep{clarkson2026transformer} uses three critical components: (i)~a procedure to embed the points (to be clustered) and the initial centers (to be refined), (ii)~a limiting version of softmax attention with the softmax temperature tending to zero, making the softmax behave like argmax, and (iii)~an update based on cross-attention and self-attention, first for the token embeddings of the points, and then for the token embeddings for the cluster centers. For a set of points $\gS = \{\vx_i, i \in \iset{ n } \} \subset \R^d$, represented as $\mX \in \R^{d \times n}$, and $k$ initial cluster centers $\{\vc_j, j \in \iset{ k } \}$, represented as $\mC \in \R^{d \times k}$, $\mX$ and $\mC$ are embedded in $(d+k)$ dimensions with $\eohx: \R^{d \times n} \to \R^{(d+k) \times n}$ and $\eohc: \R^{d \times k} \to \R^{(d+k) \times k}$ as follows:
\begin{equation} \label{eq:tk-emb-v1} \tag{\oh}
\bar{\mC}^{(0)}
\gets \eohc(\mC) \triangleq
\qa{\mC}{\mI_k},
\quad
\bar{\mX}^{(0)}
\gets  \eohx(\mX) \triangleq
\qa{\mX}{\mY^{(0)}},
\quad
\mY^{(0)} = \mathbf{0}_{k \times n} \in \{0, 1\}^{k \times n},
\end{equation}
where the \oh denotes the one-hot encoding of the cluster indices with $\mI_k$. This makes the transformer embedding dimension $\demb = d + k$.
Beginning with $\mY^{(0)} = \mathbf{0}_{k \times n}$, the all zero $(k \times n )$ matrix, the first set of cross-attention and self-attention updates $\bar{\mX}^{(t)} \to \bar{\mX}^{(t+1)}$ by modifying the $\mY^{(t)}$ part of the embeddings to match the current cluster assignments via Lloyd's algorithm, that is, each point's embedding is updated to contain the one-hot encoded index of the closest center. The next set of cross and self-attention updates $\bar{\mC}^{(t)} \to \bar{\mC}^{(t+1)}$ by modifying the $\mC^{(t)}$ part of the embeddings:
\begin{equation} \label{eq:tk-emb-up-v1}
\bar{\mX}^{(t)} = \qa{\mX}{\mY^{(t)}} \to \bar{\mX}^{(t+1)} = \qa{\mX}{\mY^{(t+1)}},
\quad
\bar{\mC}^{(t)} = \qa{\mC^{(t)}}{\mI_k} \to \bar{\mC}^{(t+1)} = \qa{\mC^{(t+1)}}{\mI_k}.
\quad
\end{equation}
With the $\demb=d+k$, the query/key/value projection matrices for the cross and self-attention are of $O(\demb^2) \sim O((d+k)^2)$ size. To mitigate the quadratic dependence on $k$, we propose the following:
\begin{definition}[Token embeddings] \label{def:tk-emb-v2}
For points $\mX \in \R^{d\times n}$ and initial centers $\mC \in \R^{d \times k}$, we use the embedding scheme
$\ebnx: \R^{d \times n} \to \R^{(d+\ceil{\log_2 k}) \times n}$ and $\ebnc: \R^{d \times k} \to \R^{(d+\ceil{\log_2 k}) \times k}$:
\begin{equation} \label{eq:tk-emb-v2} \tag{\bn}
\bar{\mC}^{(0)}
\gets \ebnc (\mC) \triangleq
\qa{\mC}{\mB_k},
\quad
\bar{\mX}^{(0)}
\gets \ebnx (\mX) \triangleq
\qa{\mX}{\mY^{(0)}},
\quad \mY^{(0)} = \mathbf{0}_{\ceil{\log_2 k} \times n},
\end{equation}
where $\mB_k = [\vb_1; \vb_2; \ldots; \vb_k] \in \{0, 1\}^{\ceil{\log_2 k} \times k}$ is a binary matrix with the $i^{\text{th}}$ column $\vb_i \in \{0, 1\}^{\ceil{\log_2 k}}$ as the binary representation of the integer $(i-1)$ with $\ceil{\log_2 k}$ bits for each $ i \in \iset{ k }$.
\end{definition}
The main intuition behind this \bn (binary) embedding scheme is that, while the \eqref{eq:tk-emb-v1} token embedding~\citep{clarkson2026transformer} uses one-hot vectors to represent a cluster index (thus requiring $k$ dimensions), we represent that index more compactly with a binary representation using only $\ceil{\log_2 k}$ dimensions. As an example, when $k = 6$, $\mB_6 = $
{\tiny $\qa{0, 1,  0,  1,  0,  1 \\ 0,  0,  1,  1,  0, 0}{0,  0,  0,  0,  1,  1 }$}
$\in \{0, 1\}^{3 \times 6}$.
Then we use the following softmax attention with negative squared Euclidean distances:
\begin{definition}[Attention] \label{def:attn-op}
Given queries $\mZ \in \R^{\demb \times q}$ and keys $\mZ' \in \R^{\demb \times r}$, we define
attention as:
\begin{equation} \label{eq:attn-op}
\attned{\gamma} (\mZ, \mZ'; (\mQ, \mK, \mV)) =  \mV \mZ' \, \softmax \left( -\frac{\gamma}{\sqrt{\demb}} \, \sqdist( \mQ \mZ, \mK \mZ') \right),
\end{equation}
where $\mQ, \mK, \mV \in \R^{\demb \times \demb}$ are the query, key and value projection matrices respectively, $\gamma$ is the inverse temperature of the columnwise $\softmax$ operation and $\sqdist: \R^{\demb \times q} \times \R^{\demb \times r} \to \R^{r \times q}$ computes the pairwise squared Euclidean distances
with the $(i,j)$-th entry of its output given by $\sqdist(\tZ, \tZ')_{ij} = \eudi{\tZ_{:,j}}{\tZ'_{:,i}}$ for some matrices $\tZ \in \R^{\demb \times q}, \tZ' \in \R^{\demb \times r}$.
\end{definition}
This distance-based attention~\citep{tsai2019transformer} was used to establish algorithmic expressivity for clustering, though this can be replaced with dot-product attention if we also utilize a single-hidden-layer feedforward network in the transformer~\citep{clarkson2026transformer}. Based on the above, we consider the following transformer updates:
\begin{definition}[Per-layer updates] \label{def:updates}
In the $(t+1)^\text{th}$ transformer layer, with parameters $\Theta = \{ (\mQ_\mu, \mK_\mu, \mV_\mu), \mu = 1, \ldots, 4 \}$ shared across all layers, the layer input token embeddings $( \bar{\mX}^{(t)}, \bar{\mC}^{(t)} )$ are updated with the attention operation in \Cref{def:attn-op} as follows:
\begin{equation} \label{eq:updates}
\begin{split}
&
\bar{\mX}^{(t+1)} =
\colorbox{gray!40}{\!$
\bar{\mX}^{(t)} +
$\!}\ \
\colorbox{RawSienna!40}{\!$
\attned{\gamma} (\bar{\mX}^{(t)}, \bar{\mC}^{(t)}; (\mQ_1, \mK_1, \mV_1))
$\!}
+
\colorbox{MidnightBlue!40}{\!$
\attned{\gamma} (\bar{\mX}^{(t)}, \bar{\mX}^{(t)}; (\mQ_2, \mK_2, \mV_2))
$\!},
\\
&
\bar{\mC}^{(t+1)} =
\colorbox{gray!40}{\!$
\bar{\mC}^{(t)} +
$\!}\ \
\colorbox{RawSienna!40}{\!$
\attned{\gamma} (\bar{\mC}^{(t)}, \bar{\mX}^{(t+1)}; (\mQ_3, \mK_3, \mV_3))
$\!}
+
\colorbox{MidnightBlue!40}{\!$
\attned{\gamma} (\bar{\mC}^{(t)}, \bar{\mC}^{(t)}; (\mQ_4, \mK_4, \mV_4))
$\!},
\end{split}
\end{equation}
where the first operation on the right-hand-side after the \colorbox{gray!30}{\!residual connection\!} in each of the update steps corresponds to \colorbox{RawSienna!30}{\!cross-attention\!}, while the second operation is \colorbox{MidnightBlue!30}{\!self-attention\!}.
\end{definition}
Given this architecture, we can show the following algorithmic expressivity result:
\begin{theorem}\label{thm:lloyds-exp}
Given a set of points $\mX \in \R^{d \times n}$ to be clustered into $k$ disjoint sets and initial cluster centers $\mC \in \R^{d \times k}$, both embedded with $(\ebnc, \ebnx)$ (\Cref{def:tk-emb-v2}) to give us a latent dimension $\demb = d + \ceil{\log_2 k}$, there exists parameters $\Theta = \{(\mQ_\mu, \mK_\mu, \mV_\mu), \mu = 1, \ldots, 4\}$ for the transformer (\Cref{def:updates}) with the inverse softmax temperature $\gamma = \infty$ such that the $t^{\text{th}}$ transformer layer output exactly matches the output of Lloyd's algorithm after $t$ iterations for any $t \geq 1$.
\end{theorem}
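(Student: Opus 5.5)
The plan is to write down explicit block-structured parameters and then check, by induction on $t$, that one layer of \Cref{def:updates} carries out exactly one Lloyd iteration: an assignment step followed by a centroid step.

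\textbf{Setup.} Write $m = \ceil{\log_2 k}$ and block all matrices according to $\demb = d + m$. Let
\[
P_X = \qa{\mI_d \;\; \mathbf{0}}{\mathbf{0} \;\; \mathbf{0}}, \qquad P_Y = \qa{\mathbf{0} \;\; \mathbf{0}}{\mathbf{0} \;\; \mI_m}
\]
be the coordinate projections onto the first $d$ and the last $m$ coordinates. At $\gamma = \infty$ the columnwise softmax of $-\gamma\,\sqdist/\sqrt{\demb}$ becomes a hardmax. I interpret it, as the limit does, as a uniform average over all keys attaining the minimal distance. The factor $1/\sqrt{\demb}$ is then irrelevant. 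The inductive hypothesis is that $\bar{\mX}^{(t)} = [\mX; \mY^{(t)}]$ and $\bar{\mC}^{(t)} = [\mC^{(t)}; \mB_k]$, where $\mC^{(t)}$ are Lloyd's centers after $t$ iterations. For $t=0$ this holds by \Cref{def:tk-emb-v2}.

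\textbf{Assignment step.} For the cross-attention from points to centers, take $\mQ_1 = \mK_1 = P_X$ and $\mV_1 = P_Y$.
\begin{itemize}
\item Then $\sqdist$ yields $\eudi{\vx_i}{\vc^{(t)}_j}$, independent of the $Y$ and $B$ coordinates.
\item The hardmax selects the nearest center $j^*(i)$.
\item The value contributes $[\mathbf{0}; \vb_{j^*(i)}]$.
\end{itemize}
The residual would leave a stale $\mY^{(t)}_{:,i}$, which the self-attention cancels. Take $\mQ_2 = \mK_2 = P_X$ and $\mV_2 = -P_Y$.
\begin{itemize}
\item Each point attends to the points at distance $0$ from it, namely itself and any exact duplicates.
\item Duplicates share the same $\mY^{(t)}$ column, by induction: they are all zero at $t=0$, and identical inputs receive identical assignments.
\item So the average returns exactly $-\mY^{(t)}_{:,i}$.
\end{itemize}
Hence $\bar{\mX}^{(t+1)} = [\mX; \mY^{(t+1)}]$ with $\mY^{(t+1)}_{:,i} = \vb_{j^*(i)}$, which is the Lloyd assignment in binary code.

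\textbf{Centroid step.} For the cross-attention from centers to updated points, take $\mQ_3 = \mK_3 = P_Y$ and $\mV_3 = P_X$.
\begin{itemize}
\item The distance between center $j$ and point $i$ is $\eudi{\vb_j}{\vb_{j^*(i)}}$.
\item Since the columns of $\mB_k$ are distinct binary vectors, this distance is $0$ if $j^*(i) = j$ and at least $1$ otherwise.
\item The averaging hardmax therefore returns the mean of $\{\vx_i : j^*(i) = j\}$ in the top block, which is the new Lloyd centroid.
\end{itemize}
To cancel the residual $\vc^{(t)}_j$, take $\mQ_4 = \mK_4 = P_Y$ and $\mV_4 = -P_X$. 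Distinctness of the $\vb_j$ again makes each center attend only to itself, giving $-\vc^{(t)}_j$. The bottom block $\mB_k$ is untouched, since the value matrices $\mV_3$ and $\mV_4$ vanish there. This closes the induction.

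\textbf{Main obstacle.} The delicate part is degeneracy. First, ties between equidistant centers in the assignment step must be resolved as Lloyd does; otherwise averaging would produce a non-binary $\mY$. I would assume generic position or a consistent tie-breaking rule. Second, an empty cluster breaks the centroid step: center $j$ then finds no point at distance $0$ and would average over the points at the minimal nonzero distance. I would state the standard convention that clusters remain nonempty, which is what Lloyd's update $\mathrm{mean}(\emptyset)$ needs anyway. Alternatively, I would sketch a patch: a large self-attention weight, or an extra constant coordinate, that makes an empty cluster's center retain its previous value. Everything else is routine block-matrix verification.
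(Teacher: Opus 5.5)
Your proposal is correct and is essentially the paper's construction: the same projections $\mQ_1=\mK_1=\mQ_2=\mK_2=\mI_{\demb}^{:d}$, $\mV_1=-\mV_2=\mI_{\demb}^{d+1:}$, $\mQ_3=\mK_3=\mQ_4=\mK_4=\mI_{\demb}^{d+1:}$, $\mV_3=-\mV_4=\mI_{\demb}^{:d}$, the same inductive invariant $\bar{\mX}^{(t)}=[\mX;\mY^{(t)}]$, $\bar{\mC}^{(t)}=[\mC^{(t)};\mB_k]$, and the same reliance on (averaging) hardmax under the assumptions of unique nearest centers and nonempty clusters. Your argument that exact duplicate points share the same $\mY^{(t)}$ column is a small sharpening of the paper, which simply assumes all points are distinct so that the point self-attention reduces to $\mI_n$.
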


\begin{figure}[t]
\centering
\includegraphics[clip, trim=0in 0in 0in 0.8in, width=\textwidth]{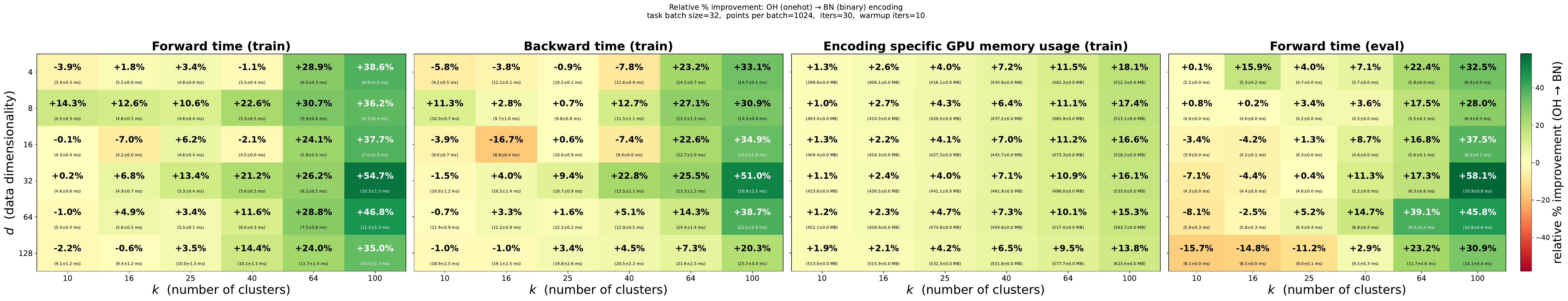}
\caption{Comparison of the proposed \eqref{eq:tk-emb-v2} embedding against the existing \eqref{eq:tk-emb-v1} embedding with varying values of $d \in \{4, 8, 16, 32, 64, 128\}$ and $k \in \{10, 16, 25, 40, 64, 100 \}$. Each heatmap reports the relative improvement (in \%) of \eqref{eq:tk-emb-v2} over \eqref{eq:tk-emb-v1} for each of following metrics: (i)~{\em Forward time (train)}: forward pass time during training, (ii)~{\em Backward time (train)}: backward pass time, (iv)~{\em Encoding specific GPU memory usage}: peak GPU memory usage at training (less the usage from the self-attention matrices which does not depend on $\demb$), and (iv)~{\em Forward pass (eval)}: forward pass time with the computation graph disabled (via \texttt{model.eval()}). Each cell in the $(6 \times 6)$ grid in the heatmap lists the relative gain (or degradation), with actual metrics for \eqref{eq:tk-emb-v1} also listed in the cell as (mean$\pm$std) for completeness; darker greens denote larger gain values. All runs use 32 clustering problems with $n = 1024$ points, and numbers are generated with 30 repetitions (10 warm-up rounds). In terms of memory usage, \eqref{eq:tk-emb-v2} shows gains across the board, ranging from +1-15\%. For the computation times (forward (train), backward, and forward (eval)), \eqref{eq:tk-emb-v2} does not always show a positive gain for smaller values of $k$, especially when the baseline \eqref{eq:tk-emb-v1} runtimes is already quite small ($<10$ms). However, once $k$ is large enough, the gains can be quite significant, reaching over 50\% in some cases.}
\label{fig:enc-profile}
\end{figure}

\begin{table}[!!!b]
\centering
\caption{
  Comparison between proposed \eqref{eq:tk-emb-v2} and existing \eqref{eq:tk-emb-v1} under the same measurement setting as in \cref{fig:enc-profile} but where $k$ is varied in the range $[6,1000]$ (column 1) for a fixed data dimensionality $d=32$. We note the actual forward (column 3) and backward pass (column 4) times (and the speedup of \eqref{eq:tk-emb-v2} over \eqref{eq:tk-emb-v1}), and the peak GPU memory usage (column 5) as described in \cref{fig:enc-profile} (and the reduction from using \eqref{eq:tk-emb-v2}). We also note the additional embedding dimensions $(\demb - d)$ for each value of $k$ for the two embeddings (column 2), and the significant reduction in the additional embedding dimensions lead to the significant reductions in the runtimes once $k$ is sufficiently larger than $d$ with up to 12$\times$/9$\times$ speedup in forward/backward pass, and almost 70\% reduction in memory usage.
}
\label{tab:enc-profile}
\begin{tabular}{r c c c c}
\toprule
$k$
  & $(\demb - d)$
  & Forward pass (ms)
  & Backward pass (ms)
  & GPU memory usage (MB) \\
  & OH\,/\,BN & OH\,/\,BN [speedup] & OH\,/\,BN [speedup] & OH\,/\,BN [reduction] \\
\midrule
6
  & 6\,/\,3
  & 4.25\,/\,4.83     [0.88$\times$]
  & 9.53\,/\,10.66    [0.89$\times$]
  & 419.93\,/\,416.29 [0.9\%] \\
25
  & 25\,/\,5
  & 5.24\,/\,4.99     [1.05$\times$]
  & 11.21\,/\,11.01   [1.02$\times$]
  & 441.08\,/\,423.23 [4.0\%] \\
100
  & 100\,/\,7
  & 9.76\,/\,4.52     [2.16$\times$]
  & 20.21\,/\,9.93    [2.03$\times$]
  & 532.96\,/\,447.04 [16.1\%] \\
400
  & 400\,/\,9
  & 38.68\,/\,7.89     [4.90$\times$]
  & 66.13\,/\,16.47    [4.02$\times$]
  & 1095.60\,/\,531.02 [51.5\%] \\
1000
  & 1000\,/\,10
  & 193.07\,/\,15.90    [12.14$\times$]
  & 308.28\,/\,32.33    [9.54$\times$]
  & 3930.34\,/\,1207.75 [69.3\%] \\
\bottomrule
\end{tabular}
\end{table}

The precise Lloyd's algorithm is presented in \Cref{alg:lloyds}. The proof follows a construction similar to that of \citet{clarkson2026transformer} with modifications to handle the binary version of the cluster indices instead of the one-hot version. See \Cref{asec:expressivity} for details and the proof.
This architecture now requires $O( (d+ \log_2 k)^2 )$ parameters reducing the dependence on $k$ from $\Omega(k^2)$ to $O((\log_2 k)^2)$. While the number of parameters is not really a factor in the expressivity results, it can affect computational cost of the inference (which in turn affects the training time).
\Cref{fig:enc-profile} compares the computational costs (in terms of forward pass time, backward pass time, GPU memory usage) of the models with $\demb = d+k$ for \eqref{eq:tk-emb-v1} and our proposed $\demb = d + \ceil{\log_2 k}$ for \eqref{eq:tk-emb-v2} for varying values of $(d,k)$. While the advantage of the proposed model is clear across the board in terms of memory consumption, the gain in terms of forward/backward pass time is only significant when values of $d$ and $k$ are large enough. \Cref{tab:enc-profile} profiles a different setup where we fix $d$ and vary $k$, and see that, once $k$ is sufficiently larger than $d$, the gains from the proposed \eqref{eq:tk-emb-v2} can be very significant even in terms of the forward and backward pass runtimes.
\section{Understanding Learning Convergence and Generalization}
\label{sec:learning}

\begin{wrapfigure}[21]{R}{0.37\textwidth}
\begin{center}
\begin{tcolorbox}[
  enhanced, sharp corners, before=-22pt, before skip=-22pt,
  left skip=0pt, right skip=0pt,
  left=0pt, right=0pt, top=0pt, bottom=0pt,
  colframe=Sepia!80!black,
  colback=Sepia!15!white,
  boxrule=0pt,
]
{\small
\begin{algorithm}[H]
\SetAlgoCaptionSeparator{}% no separator, default colon
\caption{Learning to $k$-cluster in $d$ dimensions using a transformer $\cmodel$ with learnable parameters $\Theta$, given a distribution $\gD$ of clustering tasks, an $\min$-regularizer $\Omega: \Delta^k \to \R$ with penalty $\tau > 0$, and optimization hyperparameters of learning rate $\eta$, task batch size $B$ and $M$ training steps.}
\label{alg:training}
\DontPrintSemicolon
{\footnotesize
Initialize $\cmodel$ parameters $\Theta$\;
\For{$m = 1, \ldots, M$} {
  \tcp{Compute batch loss}
  Initialize batch loss $l(\Theta) \texttt{ = } 0$\;
  \For{$b = 1, \ldots, B$} {
    Sample task $(\mX, \mC) \sim \gD$ \;
    $l(\Theta) \texttt{ += } \tilde{\Ls}_\Omega^\tau ( \mX, \cmodel(\mX, \mC; \Theta)  )$
  }
  \tcp{update model params}
  $\Theta \gets \texttt{Update}(\Theta, \nabla_\Theta l(\Theta), \eta)$
}
\Return{$\Theta$}
}
\end{algorithm}
}
\end{tcolorbox}
\end{center}
\end{wrapfigure}

Given the above architecture, we will train the transformers to learn a clustering algorithm given a distribution of clustering tasks.
In any learning setup, we need training data, a model defining the ``forward pass'' or inference, and a loss that quantifies the difference between the (current) model output and the desired one.
As we aim to learn a clustering {\em algorithm}, the training data will be a set of $k$-clustering tasks in $d$ dimensions, each with a pair $(\mX, \mC)$ of points to be clustered $\mX \in \R^{d \times n}$ and initial cluster centers $\mC \in \R^{d \times k}$, generated from a distribution $\gD$ of clustering tasks (for example, each $\mX$ is randomly sampled from a different mixture of Gaussians, and $\mC$ is random $k$-subset of $\mX$).
The model $\cmodel: \R^{d \times n} \times \R^{d \times k} \to \R^{d \times k}$ is a single-layer version of the one presented in \Cref{def:tk-emb-v2}, \ref{def:attn-op} and \ref{def:updates} with a finite $\gamma$ (in all our experiments, we set $\gamma = 1$),
and parameterized with $\Theta = \{ (\mQ_\mu, \mK_\mu, \mV_\mu), \mu = 1, \ldots, 4 \}$.

For a clustering task $(\mX, \mC)$, the initial token embeddings $(\bar{\mX}^{(0)}, \bar{\mC}^{(0)})$ will be generated using embedding functions $(E_C, E_X)$ -- either with $(\eohc, \eohx)$ from \eqref{eq:tk-emb-v1} or $(\ebnc, \ebnx)$ from \eqref{eq:tk-emb-v2} -- and passed through a single transformer layer to get $(\bar{\mX}^{(1)}, \bar{\mC}^{(1)} )$, and the model output $\cmodel(\mX, \mC; \Theta)$ is the first $d$ rows $\bar{\mC}^{(1)}_{:d, :}$ of $\bar{\mC}^{(1)}$
--- this choice implies that we are training the model to perform {\bf single-step clustering}.
Note that we need to learn a separate model for each value of $d$ and $k$ (denoted by the $d\!\!:\!\!k$ subscript in $\cmodel$); the same model can be applied to clustering tasks with different values of $n$.

Given the model output $\bar{\mC}(\Theta) \triangleq \cmodel(\mX, \mC; \Theta)$, we can compute the per-task clustering objective $\Ls: \R^{d \times n} \times \R^{d \times k} \to \R$ using the per-point clustering objective $L: \R^d \times \R^{d \times k} \to \R$ as
\begin{equation} \label{eq:loss}
\Ls(\mX, \cmodel(\mX, \mC; \Theta)) = \sum_{i = 1}^n L(\mX_{:,i}, \cmodel(\mX, \mC; \Theta)).
\end{equation}
As a concrete example, we will use the discrete $k$-means objective $L(\mX_{:,i}, \bar{\mC}(\Theta)) = \min_{j \in \iset{ k }} \eudi{\mX_{:,i}}{\bar{\mC}(\Theta)_{:,j}}$. With in-context supervised learning, the learning loss, such as mean-squared-error or negative cross-entropy, is continuous and thus differentiable, while the $k$-means objective in \cref{eq:loss} is not. Hence, we will consider a smoothed surrogate objective. While $\min$ is not a differentiable function, various regularized forms of $\min$, with a convex regularizer $\Omega: \Delta^k \to \R$ and a penalty $\tau > 0$, are differentiable, and give us an upper-bound for the per-point $k$-means objective $\tilde{L}_\Omega^\tau: \R^d \times \R^{d \times k} \to \R$ of the following form (see \Cref{asec:reg-argmax}) for any $\vx \in \R^d$ and cluster centers $\mC \in \R^{d \times k}$ with simplex weights $\{ \lambda^{\Omega_\tau}_j \geq 0 \}_{j \in \iset{k}}$ where $\sum_{j = 1}^k \lambda^{\Omega_\tau}_j = 1$:%
\begin{equation*} \label{eq:loss-ub}
L(\vx, \mC)
\triangleq
\min_{j \in \iset{ k }} \eudi{\vx}{\mC_{:,j}}
\leq \sum_{j = 1}^k \lambda_j^{\Omega_\tau} \eudi{\vx}{\mC_{:,j}}
\triangleq
\tilde{L}_\Omega^\tau(\vx, \mC),
\end{equation*}
Here,  $\tau = 0$ implies no regularization (and thus the exact minimum). While a common choice for the weights $\{\lambda^{\Omega_\tau}_j \}_{ j \in \iset{ k }}$ is the softmax function (obtained by a negative Shannon entropy or NE-regularized $\min$, with $\tau$ serving as the softmax temperature), a tighter bound can be obtained with the sparsemax function which uses the L2-norm regularization leading to sparse weights~\citep{martins2016softmax, blondel2020learning}.

With this smoothed loss surrogate $\tilde{L}_\Omega^\tau$, we have $L(\mX_{:,i}, \bar{\mC}(\Theta)) \leq \tilde{L}_\Omega^\tau(\mX_{:,i}, \bar{\mC}(\Theta))$; equality holds at $\tau = 0$. However, note that as $\tau \to 0$, $\tilde{L}_\Omega^\tau$ becomes progressively less smooth (\Cref{prop:smooth-obj-lip} in \Cref{asec:reg-argmax}). Given this continuous objective, we train the transformer by solving the following:
\begin{equation} \label{eq:learn-obj}
\min_{\Theta} \E_{(\mX, \mC) \sim \gD } \, \tilde{\Ls}_\Omega^\tau (\mX, \cmodel(\mX, \mC; \Theta)),
\quad
\text{where}
\quad
\tilde{\Ls}_\Omega^\tau (\mX, \bar{\mC}(\Theta))
\triangleq \sum_{i = 1}^n \tilde{L}_\Omega^\tau( \mX_{:i}, \bar{\mC}(\Theta) ).
\end{equation}
In practice we will utilize a finite approximation of the above expectation $\E_{(\mX, \mC) \sim \gD}$ by sampling clustering tasks from $\gD$. The training procedure is detailed in \Cref{alg:training}.
Note that while we focus on the discrete $k$-means objective, this learning scheme is applicable to any discrete clustering objective $L$ for which we can find a reasonably smoothed surrogate objective $\tilde{L}$ such that $L \leq \tilde{L}$.
In each training step, we sample a batch of clustering tasks, compute the surrogate loss $\tilde{\Ls}_\Omega^\tau$ of the model outputs $\cmodel(\mX, \mC; \Theta)$, and use the batch-loss-gradient to update the model.

We build upon existing analysis to discuss the effect of different choices in the learning procedure (namely the regularizer $\Omega$, the penalty $\tau$, and the token embedding) on the training convergence.
For any $\alpha$-Lipschitz and $\beta$-smooth learning loss, various results establish guarantees for SGD (stochastic gradient descent) based learning~\citep{bousquet2000algorithmic, hardt2016train}.
For a finite-sum nonconvex objective, such as the one in \cref{eq:learn-obj}, $M$ steps of SGD with a time-dependent learning rate $\eta_m, m \in \iset{ M }$ converges to $\epsilon$-stationarity with
\begin{equation}\label{eq:sgd-eps-convergence}
\epsilon \sim O \left( \beta \alpha^2 \frac{\sum_{m = 1}^M \eta_m^2 }{ \sum_{m = 1}^M \eta_m } \right),
\quad \text{with }
\eta_m =
\begin{cases}
\eta / m & \Rightarrow \epsilon \sim O(\beta \alpha^2 / \log M) \\
\eta / \sqrt{m} & \Rightarrow \epsilon \sim O( \beta \alpha^2 \log M / \sqrt{M} )
\end{cases},
\end{equation}
for some initial learning rate $\eta > 0$. The $(\alpha, \beta)$ properties of the loss function, along with the learning rate scheduling, drive the convergence rate guarantees. However, $(\alpha, \beta)$ also affect generalization. Given a set $\gS$ of clustering tasks sampled from $\gD$, model parameters $\Theta$ learned with $M$ steps of SGD using a time dependent learning rate $\eta_m = \eta / m$, the generalization error (the gap between the true and estimated loss) can be bounded by combining \citet[Theorems 2.2 and 3.12]{hardt2016train}:
\begin{equation} \label{eq:sgd-stab-gen}
\left| \E_\gS \Big[
\E_{(\mX, \mC) \sim \gD}
\tilde{\Ls}_\Omega^\tau (\mX,\bar{\mC}(\Theta))
- \frac{1}{|\gS|} \!\! \sum_{(\mX, \mC) \in \gS}
\!\!\!\!\!\! \tilde{\Ls}_\Omega^\tau (\mX, \bar{\mC}(\Theta))
\Big] \right| \leq
\underbrace{
\frac{1 + \frac{1}{\beta \eta}}{|\gS| - 1}
(2 \eta \alpha^2)^{\frac{1}{\beta \eta + 1}}
M^{\frac{\beta \eta}{\beta \eta + 1}}
}_{\varepsilon}
.
\end{equation}
The above provides a bound on the smoothed objective $\tilde{\Ls}_\Omega^\tau$. With $\varepsilon$ denoting the above gap bound, we have the following {\em in-distribution} generalization bound for the non-differentiable $k$-means objective:
\begin{equation} \label{eq:kmeans-obj-gen-ub}
\E_\gS \Big[ \E_{(\mX, \mC) \sim D}
\Ls( \mX, \cmodel( \mX, \mC; \Theta ) )
\Big]
\leq
\E_\gS \Big[
\frac{1}{|\gS|} \!\! \sum_{(\mX, \mC) \in \gS}
\!\!\!\! \Ls( \mX, \cmodel( \mX, \mC; \Theta ) )
\Big]
+ C_\Omega^\tau + \varepsilon,
\end{equation}
where $0 \leq C_\Omega^\tau \triangleq \max_{(\mX, \hat{\mC})} \tilde{\Ls}_\Omega^\tau( \mX, \hat{\mC} ) - \Ls( \mX, \hat{\mC} )$ quantifies the gap between the true non-differentiable objective $\Ls$ and smoothed $\tilde{\Ls}_\Omega^\tau$, with smaller $C_\Omega^\tau$ implying tighter generalization bounds, and can be controlled with $\tau$.
Note that, under certain conditions, some regularized forms of $\min$ can ensure that this gap $C_\Omega^\tau = 0$; as an example, if there is usually a sufficiently large margin between the inter-cluster distances and intra-cluster distances, the sparsemax based upperbound is tight.
If this gap $C_\Omega^\tau$ is large (relative to the other terms), the guarantee is weak.
Thus, it is important to ensure that this gap $C_\Omega^\tau$ is sufficiently small by using a tighter smoothed upper-bound of the clustering loss.
We will evaluate its effect empirically.
However, the penalty $\tau$ also plays a role in the Lipschitz constant $\alpha$ of the smoothed $\tilde{\Ls}_\Omega^\tau$ as we show in the following:
\begin{theorem}[informal, see \Cref{thm:loss-lip-full} in \Cref{asec:smoothed-obj-lip} for a detailed version] \label{thm:loss-lip-short}
Assume bounded points $\mX \in [0,1]^{d \times n}$ and initial cluster centers $\mC \in [0,1]^{d \times k}$ for all clustering tasks $(\mX, \mC) \sim \gD$, and bounded learnable model parameters $\Theta, \Theta'$. Let $d_E$ denote the maximum squared norm of the initial center embeddings, with $\norm{ \bar{\mC}^{(0)}_{:,j} }^2 \leq d_E$ for all $j \in \iset{ k }$ for any $\mC$, and let the $\Omega$-regularized $\arg\!\max$ with penalty $\tau > 0$ be $(\nicefrac{\omega_\tau}{\tau})$-input-stable. Then the smoothed clustering loss $\tilde{\Ls}_\Omega^\tau$ is $\alpha$-Lipschitz as follows:
\begin{equation} \label{eq:loss-lip-short}
\begin{split}
& \left|
\tilde{\Ls}_\Omega^\tau( \mX, \cmodel( \mX, \mC; \Theta) )
- \tilde{\Ls}_\Omega^\tau( \mX, \cmodel( \mX, \mC; \Theta' ) )
\right| \leq
\ \alpha \, \norm{ \Theta - \Theta' },
\\
& \quad \text{where} \ \alpha \sim O\left(
\frac{k n \omega_\tau}{\tau} \cdot
\max\{ 1, \gamma^2 \} \cdot
\sqrt{d_E} \cdot
\max\left\{
 1, \left(\frac{d_E}{\sqrt{\demb}} \right)^2
\right\}
\cdot
\exp \left( \gamma \frac{d_E}{\sqrt{\demb}} \right)
\right)
\end{split}
\end{equation}
\end{theorem}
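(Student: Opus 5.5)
The proof chains Lipschitz estimates: from the parameters $\Theta$, through the single-layer model output $\bar{\mC}(\Theta) = \cmodel(\mX,\mC;\Theta)$, to the smoothed loss. First I bound the sensitivity of $\tilde{\Ls}_\Omega^\tau$ to its center argument. For a fixed point $\vx$, write $\tilde{L}_\Omega^\tau(\vx,\hat{\mC}) = \sum_j \lambda_j(\hat{\mC})\,\eudi{\vx}{\hat{\mC}_{:,j}}$, where $\vlambda(\hat{\mC})$ is the $\Omega$-regularized argmax applied to the vector of negative squared distances. Comparing $\hat{\mC}$ and $\hat{\mC}'$, I split the difference into two terms. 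One has $\vlambda$ fixed and the squared distances changing; it is bounded by $\sum_j \lambda_j \cdot 2\max(\|\vx - \hat{\mC}_{:,j}\|,\|\vx-\hat{\mC}'_{:,j}\|)\,\|\hat{\mC}_{:,j}-\hat{\mC}'_{:,j}\|$. The other has the distances fixed and $\vlambda$ changing. By the assumed $(\omega_\tau/\tau)$-input-stability, this second term is at most $(\omega_\tau/\tau)$ times the change in the distance vector, times the maximal distance. Summing over the $n$ points, and losing a factor $k$ when converting from per-column to matrix norms, gives a constant of order $k n\,\omega_\tau/\tau$ times a polynomial in the radius $R$ of the outputs. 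Since $\mX\in[0,1]^{d\times n}$, $R$ is controlled by $\sqrt{d_E}$ plus the norm bound on the output embeddings established next.

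Second, I bound the model output and its Lipschitz dependence on $\Theta$, following the two-stage update of \Cref{def:updates}.

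\textbf{Boundedness.} Softmax outputs are column-stochastic, so each attention output column is a convex combination of columns of $\mV\mZ'$. Its norm is therefore at most $\norm{\mV}\max_i\norm{\mZ'_{:,i}}$. With bounded $\Theta$ this gives norm bounds of order $\sqrt{d_E}$ on $\bar{\mX}^{(1)}$ and $\bar{\mC}^{(1)}$. Here I use that $\bar{\mX}^{(0)}$ has columns of squared norm at most $d\le d_E$, taking $d_E\ge d$.

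\textbf{Softmax perturbation.} I need a lemma for $\attned{\gamma}(\mZ,\mZ';(\mQ,\mK,\mV))$ with respect to both the parameters and the input tokens. The logits are $-\frac{\gamma}{\sqrt{\demb}}\|\mQ\vz-\mK\vz'\|^2$. Their derivative in $(\mQ,\mK)$ is bounded by $\frac{\gamma}{\sqrt{\demb}}\cdot O(\|\vz\|+\|\vz'\|)^2$, which is $O(\gamma\, d_E/\sqrt{\demb})$. The softmax Jacobian has norm at most $1$ in the appropriate norm. However, comparing two softmaxes with different logits via ratios of exponentials brings in a factor of up to $\exp(\gamma\cdot\text{logit range})$, with logit range of order $d_E/\sqrt{\demb}$. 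This is where the $\exp(\gamma d_E/\sqrt{\demb})$ factor enters. Combined with the $\mV$ perturbation, each attention block is Lipschitz in $(\Theta,\text{inputs})$ with constant $O(\max\{1,\gamma\}\max\{1,d_E/\sqrt{\demb}\}\sqrt{d_E}\,e^{\gamma d_E/\sqrt{\demb}})$.

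\textbf{Composition.} The $\bar{\mC}$ update depends on $\Theta$ both directly and through $\bar{\mX}^{(1)}$ in the cross-attention keys. Composing the two stages multiplies these constants, which produces the squared factors $\gamma^2$ and $(d_E/\sqrt{\demb})^2$. Plugging the result into the first step yields the claimed $\alpha$.

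The main obstacle is the softmax perturbation bound: obtaining the clean $\exp(\gamma d_E/\sqrt{\demb})$ factor, rather than a worse one, requires a careful argument for the change in softmax under simultaneous changes of query/key projections and of the key tokens. I would do this via the mean value theorem along the segment $\Theta_s = (1-s)\Theta + s\Theta'$. Along that segment the token norms stay uniformly bounded, so the logits lie in a fixed interval of width $O(\gamma d_E/\sqrt{\demb})$, and the exponential factor arises from bounding the weights on a bounded logit interval.
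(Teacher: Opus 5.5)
Your plan is the paper's proof almost step for step. Your first step is \Cref{prop:smooth-obj-lip} feeding into \Cref{lem:loss-to-output}, and your boundedness paragraph is \Cref{lem:out-bound}. Your attention perturbation lemma is \Cref{lem:attn-op-vproj}, \Cref{lem:attn-op-qproj}, \Cref{lem:attn-op-kproj} and \Cref{lem:attn-op-keys}. Your composition is \Cref{thm:out-stab-params}, where the point-side parameters reach the output only through the keys $\bar{\mX}^{(1)}$ of the center cross-attention. The differences are minor. You move all of $\Theta$ along a segment, whereas the paper perturbs one matrix at a time and then takes a maximum; that step really needs a telescoping sum over the twelve matrices, which costs only a constant, so your version is slightly cleaner. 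You also use per-column norms and pay the factor $k$ in the loss step, while the paper uses $\norm{\cdot}_{2,1}$ and picks up $k$ from summing over center columns in $\phi$.

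The step that does not go through as written is the last one, ``plugging the result into the first step yields the claimed $\alpha$.'' Your own first step gives $kn\,\omega_\tau/\tau$ times a polynomial in $R$. Concretely, the $\vlambda$-change term is $(\omega_\tau/\tau)\cdot r^\star\cdot 2R$ per unit of center movement, with $r^\star\sim d_E$ and $R\sim\sqrt{d_E}$, which is an extra $d_E^{3/2}$; this is the paper's $\kappa_1\sim n\omega_\tau d_E^{3/2}/\tau$. The output-to-parameter constant carries its own $\sqrt{d_E}$, already from perturbing a value projection ($\kappa_2\propto\rho'$). So the product is the displayed $\alpha$ times roughly $d_E^{3/2}$. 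Taken literally, your single per-block constant $O(\cdots\sqrt{d_E}\cdots)$ adds yet another $\sqrt{d_E}$ on composition. To avoid that, keep the key-token sensitivity separate, like the paper's $\kappa_5$, which has no bare $\sqrt{d_E}$. The surplus cannot be absorbed into $\exp(\gamma d_E/\sqrt{\demb})$ uniformly in $\gamma$; that works only when $\gamma d_E/\sqrt{\demb}\gtrsim\log d_E$. Writing the loss constant as a multiple of $\omega_\tau/\tau$ also drops the $\omega_\tau$-independent contribution of your $\vlambda$-fixed term. The paper makes the same slip: \Cref{thm:loss-lip-full} multiplies $\phi$ by $n\omega_\tau/\tau$ instead of by $\kappa_1$. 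So what both arguments actually establish has $(1+r^\star\omega_\tau/\tau)\sqrt{d_E}$ in place of $\omega_\tau/\tau$, i.e.\ roughly an extra $d_E^{3/2}$, and you should state it that way.

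Finally, the exponential is not the obstacle you describe. As you note, the softmax Jacobian is bounded: it maps $\ell_\infty$ to $\ell_1$ with norm at most $2$. So with per-column norms your mean-value argument gives Lipschitz bounds with no exponential at all. In the paper, $\exp(\delta)$ comes from \Cref{lem:smax-in-stab}, $\norm{\softmax(\vv)-\softmax(\vv')}_1\le\frac{2e^{\delta}}{m'}\norm{\vv-\vv'}_1$. Its $1/m'$ matters only because the change of $\bar{\mX}^{(1)}$ is measured in $\norm{\cdot}_{2,1}$ over $n$ tokens: the $k/n$ in $\kappa_5$ cancels the factor $n$ in $\kappa_2,\kappa_3,\kappa_4$ evaluated at $m=n$. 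With that bookkeeping you need the normalization, otherwise an extra $n$ appears; with max-column norms you do not. Either way, an extra exponential in an upper bound is harmless.
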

This result highlights how the Lipschitz constant $\alpha$ of the smoothed loss $\tilde{\Ls}_\Omega^\tau$ depends on (i)~the regularized $\min$ penalty $\tau$ as $1/\tau$,
(ii)~the regularizer $\Omega$ through $\omega_\tau$, (iii)~the attention inverse temperature $\gamma$, and (iv)~the token embedding scheme through the $d_E$ and $\demb$ terms, where
$d_E = d + 1, \demb = d + k$ with \eqref{eq:tk-emb-v1}, and $d_E = \demb = d + \ceil{\log_2 k}$ with \eqref{eq:tk-emb-v2}.
Thus, the \eqref{eq:tk-emb-v1} scheme of \citet{clarkson2026transformer} has a better Lipschitz constant than the proposed \eqref{eq:tk-emb-v2}, thereby presenting a tradeoff between fewer parameters and improved convergence. However, for large $d$ (relative to $k$), this difference is limited as $\demb$ and $d_E$ scale as $d$ for both token embeddings.
This result in conjunction with those in \cref{sec:expressivity} (\cref{fig:enc-profile} and \cref{tab:enc-profile}) highlight a tradeoff between the architecture size and the actual training times --- {\em the faster forward/backward passes of the leaner architecture with \eqref{eq:tk-emb-v2} token embeddings do not necessarily translate to faster model training to convergence}.

\begin{figure}[!!b]
\centering
\begin{subfigure}{0.49\textwidth}
\centering
\includegraphics[width=\textwidth]{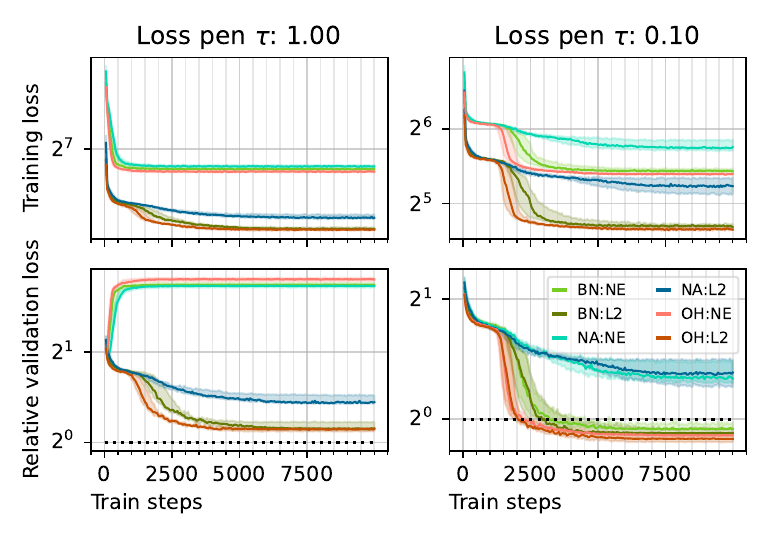}
\caption{Varying $\tau \in \{1, 0.1\}$ with $d = 4$ dimensions.}
\label{fig:tau-var}
\end{subfigure}
~
\begin{subfigure}{0.49\textwidth}
\centering
\includegraphics[width=\textwidth]{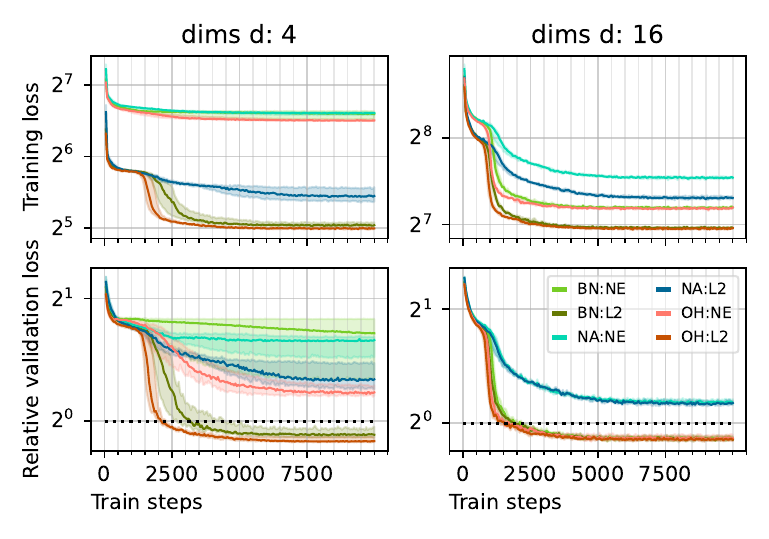}
\caption{Varying $d \in \{4,16\}$ with $\tau = 0.25$.}
\label{fig:dim-var}
\end{subfigure}
\caption{
{\bf Effect of data dimensionality $d$, $\min$-regularizer $\Omega$ with penalty $\tau$, token embedding $(E_C, E_X)$ on training convergence and generalization.}
We show the convergence of (i)~the smoothed training loss $\tilde{\Ls}_\Omega^\tau$ (top row in each figure) and (ii)~the {\em relative} non-differentiable clustering loss $\Ls$ on held-out validation clustering tasks (bottom row) for multiple configurations; the dotted black line at $y = 1$ (bottom row) corresponds to the performance of a single Lloyd's iteration on the validation tasks. See \Cref{asec:ablations} for details and ablations. {\em In all cases, lower values on the vertical axes are better.}
Here, we use clustering tasks with $n = 512$ and $k = 6$. The initial learning rate for Adam is 0.01 (ablation in \cref{fig:ablations:lrate}) with no gradient clipping. In each plot, all 6 curves correspond to the same set of clustering tasks used for training, and we study the behaviour of the NE-regularized $\min$ (softmax) vs the L2-regularized $\min$ (sparsemax), jointly with the effect of $(E_C, E_X)$ --- \eqref{eq:tk-emb-v1} vs \eqref{eq:tk-emb-v2} vs no token embedding (\na).
In \cref{fig:tau-var}, we fix the data dimensionality $d = 4$ and vary the $\min$-regularization penalty $\tau$ (ablation in \cref{fig:ablations:loss-itemp}), and in \cref{fig:dim-var}, we fix $\tau = 0.25$ and vary $d$ (ablation in \cref{fig:ablations:ndims}).
}
\label{fig:tau-omega-demb-dim-analysis}
\end{figure}

To understand how this theoretical characterization aligns with empirical behaviour, we train this model with various configurations. The training clustering tasks contains points sampled from a mixture of isotropic normal distributions (data generation detailed in \Cref{asec:ctasks}).
All experiments are repeated with 10 different random seeds (which affect model initialization, and the tasks sampled), and we aggregate the results over these 10 trials, and present the median (the curve) and the inter-quartile ranges (the translucent ribbon around the curve) of the reported metrics. We use the Adam optimizer with a learning rate of $\eta = 0.01$ for $M = 10000$ steps with a task batch size $B = 32$. Note that we {\em do not utilize the standard practice of gradient clipping} in our experiments to highlight the ease of training the transformer for clustering. Extended ablation of experimental configurations are presented in \Cref{asec:ablations}.

\begin{figure}[t]
\centering
\begin{subfigure}{0.235\textwidth}
\centering
\includegraphics[width=\textwidth]{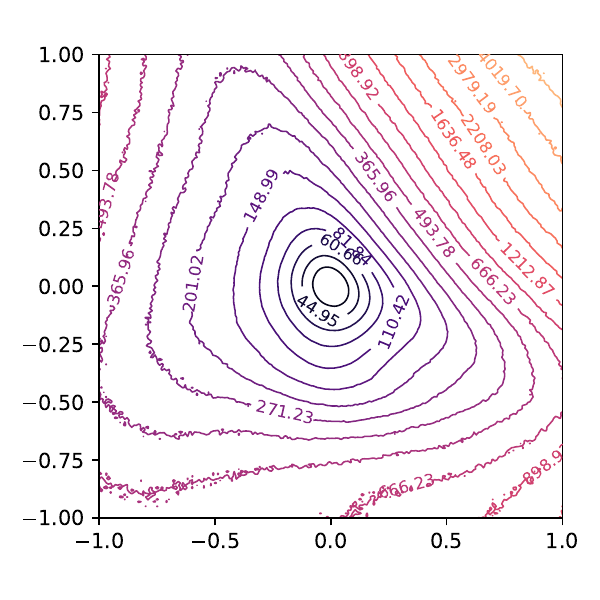}
\caption{$d=4$, \oh}
\label{fig:lsurfs:normal:k6L2d4OH}
\end{subfigure}
~
\begin{subfigure}{0.235\textwidth}
\centering
\includegraphics[width=\textwidth]{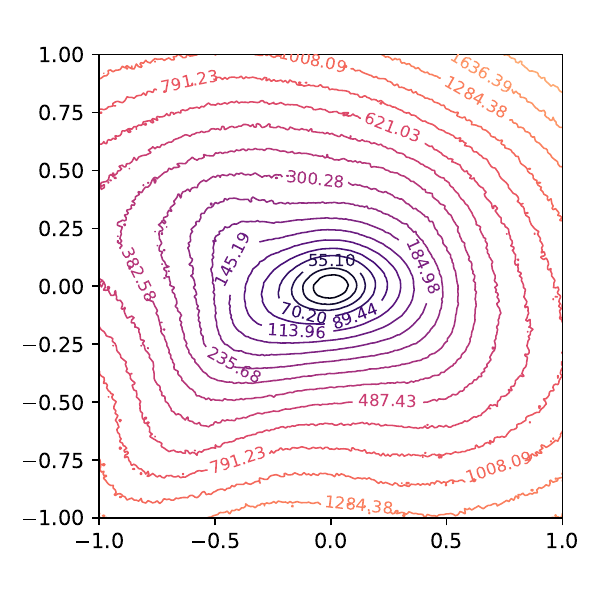}
\caption{$d=4$, \bn}
\label{fig:lsurfs:normal:k6L2d4BN}
\end{subfigure}
~
\begin{subfigure}{0.235\textwidth}
\centering
\includegraphics[width=\textwidth]{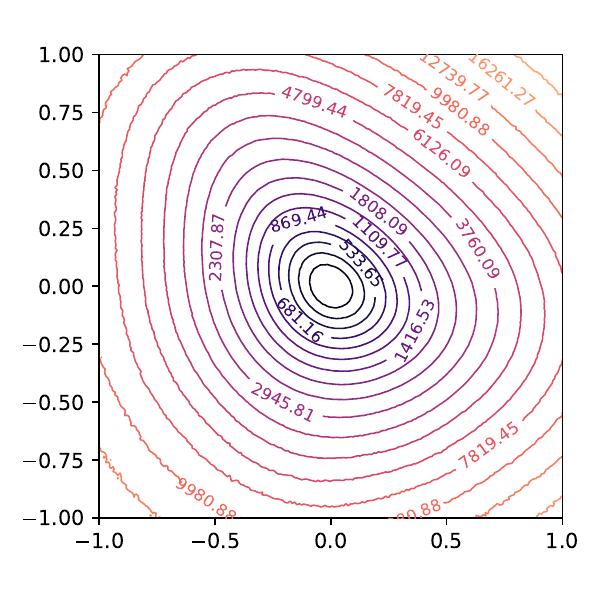}
\caption{$d=32$, \oh}
\label{fig:lsurfs:normal:k6L2d32OH}
\end{subfigure}
~
\begin{subfigure}{0.235\textwidth}
\centering
\includegraphics[width=\textwidth]{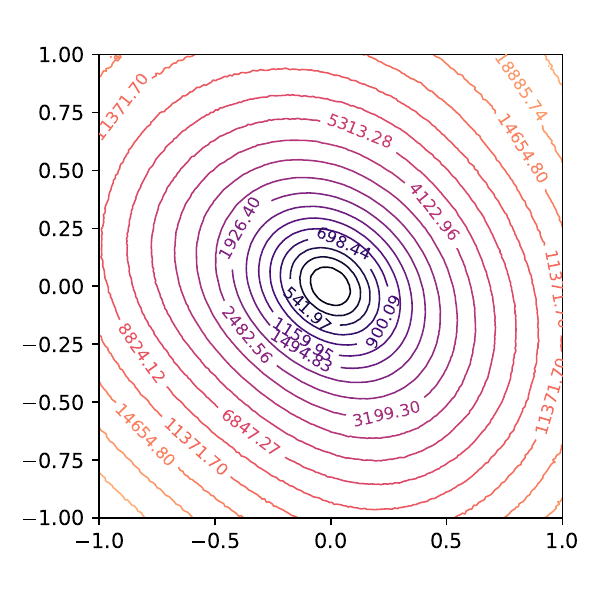}
\caption{$d=32$, \bn}
\label{fig:lsurfs:normal:k6L2d32BN}
\end{subfigure}
\caption{
{\bf Loss surfaces for different dimensionalities ($d=4$ vs $d=32$), token embeddings \eqref{eq:tk-emb-v1} vs \eqref{eq:tk-emb-v2} with the L2-regularized $\min$.}
We fix $k=6$ and vary $d$ to visualize its effect.
We use the procedure in \citet{li2018visualizing} with filterwise normalization and 2 random vectors in the parameter space.
}
\label{fig:lsurfs:normal:k6L2}
\end{figure}

\begin{figure}[!!b]
\centering
\begin{subfigure}{0.49\textwidth}
\centering
\includegraphics[width=\textwidth]{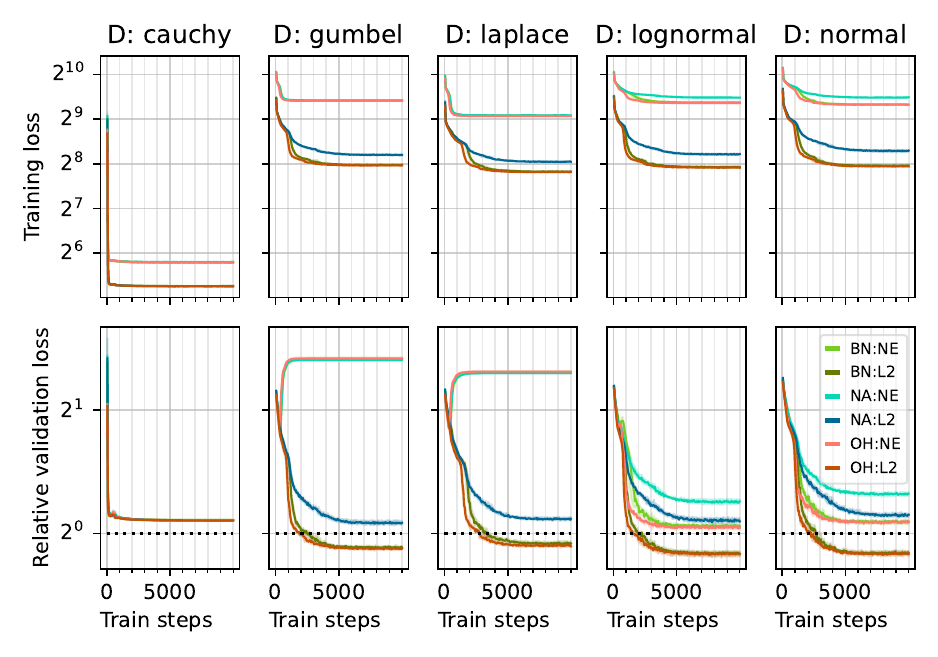}
\caption{$\tau = 1$.}
\label{fig:dist-k10-itemp1}
\end{subfigure}
~
\begin{subfigure}{0.49\textwidth}
\centering
\includegraphics[width=\textwidth]{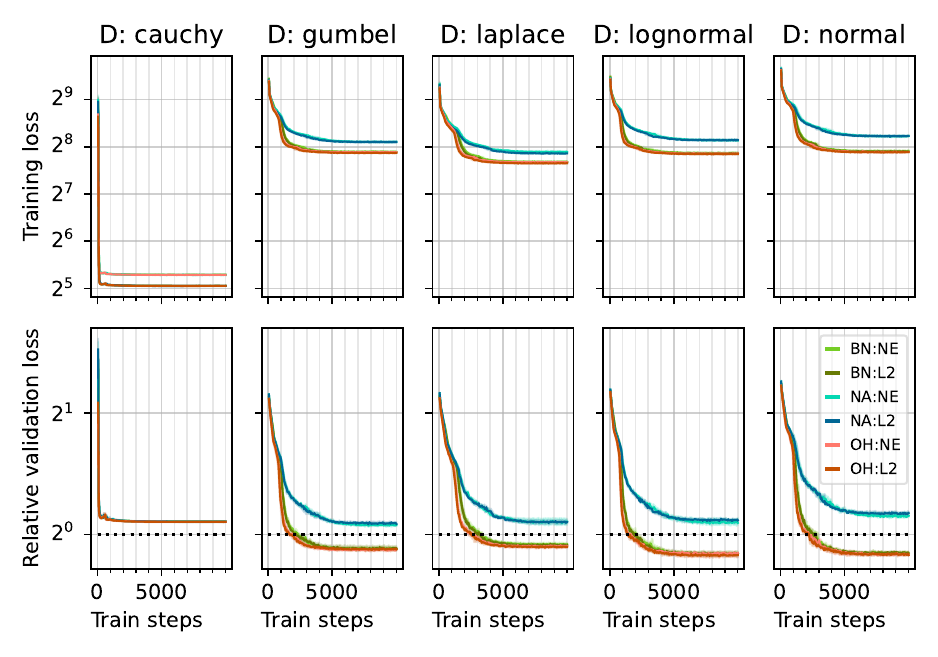}
\caption{$\tau = 0.1$.}
\label{fig:dist-k10-itemp10}
\end{subfigure}
\caption{
{\bf Effect of data distribution.}
The above figures are as described in \cref{fig:tau-omega-demb-dim-analysis}.
We train on tasks with $n = 512, d=32, k=10$ generated from the noted distributions. The validation loss is computed on clustering tasks generated from the training distribution (see ablation in \Cref{fig:ablations:dist}).
}
\label{fig:dists-var}
\end{figure}

\Cref{fig:tau-omega-demb-dim-analysis} shows the training convergence and validation performance.
In almost all cases, the training loss converges quite monotonically, and the validation performance usually tracks the training.
In \cref{fig:tau-var}, we vary the temperature $\tau$ of the smoothed loss $\tilde{\Ls}_\Omega^\tau$, and observe that for any value of $\tau$, the {\em L2-regularized $\min$ always produces lower training losses, and thus tighter upper-bounds (and lower values of $C_\Omega^\tau$)}. The worst upper-bounds with the NE-regularizer are reflected in the validation performance at $\tau = 1$, where {\em the learned model does not perform well on the unseen validation clustering tasks for any of the token embedding schemes} (the validation loss increases during training). In contrast, the L2-regularization
is able to reduce the clustering loss on unseen tasks as training progresses, even though it is unable to match Lloyd's at $\tau = 1$. This highlights the effect of the tightness of the smoothed upper-bound discussed in \cref{eq:kmeans-obj-gen-ub}.
At $\tau = 0.1$, both forms of regularized $\min$ (NE and L2) produce tighter bounds, and {\em the learned model is able to significantly outperform Lloyd's for single-step clustering on the unseen validation tasks}, except with the \na embedding scheme (which does show reduction in validation loss though is unable to match Lloyd's). However, training converges faster for higher values of $\tau$ as implied by \cref{eq:loss-lip-short} though it appears to converge to a worse solution. Thus {\em lowering $\tau$ to tighten the upper-bound can slow down training convergence}.
Finally, for all values of $\tau$, {\em the training converges faster with \eqref{eq:tk-emb-v1} compared to \eqref{eq:tk-emb-v2}}, though {\em both finally converge to a similar loss and are able to outperform single-step Lloyd's}. This effect is visible for both L2-regularized and NE-regularized $\min$ for $\tau = 0.1$. This aligns with \cref{eq:loss-lip-short}, where \eqref{eq:tk-emb-v1} has a smaller Lipschitz constant than \eqref{eq:tk-emb-v2}.
In \cref{fig:dim-var}, we vary the data dimensionality $d$, and see that the results continue to highlight the expressivity of \eqref{eq:tk-emb-v1} and \eqref{eq:tk-emb-v2} (and the lack thereof with \na), and the tighter upper-bounds of the L2-regularized $\min$ as discussed for \cref{fig:tau-var}. We also see that, for a fixed $\tau = 0.25$ (and controlling for other factors), {\em increasing $d$ removes the difference between the convergence of \eqref{eq:tk-emb-v1} vs \eqref{eq:tk-emb-v2}}. We compare the convergence in terms of actual training times in \cref{fig:train-time} in \cref{asec:ablations} to understand the interplay of the faster forward/backward pass and slower convergence of the proposed \eqref{eq:tk-emb-v2}.

\begin{wrapfigure}[13]{r}{0.42\textwidth}
\vskip -0.1in
\centering
\begin{subfigure}{0.2\textwidth}
\centering
\includegraphics[width=\textwidth]{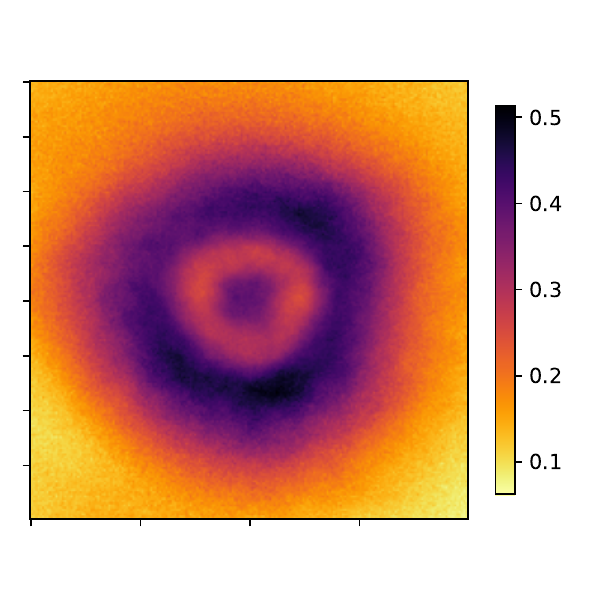}
\vskip -0.15in
\caption{\texttt{normal}}
\label{fig:hm:ubdiff:normal}
\end{subfigure}
~
\begin{subfigure}{0.2\textwidth}
\centering
\includegraphics[width=\textwidth]{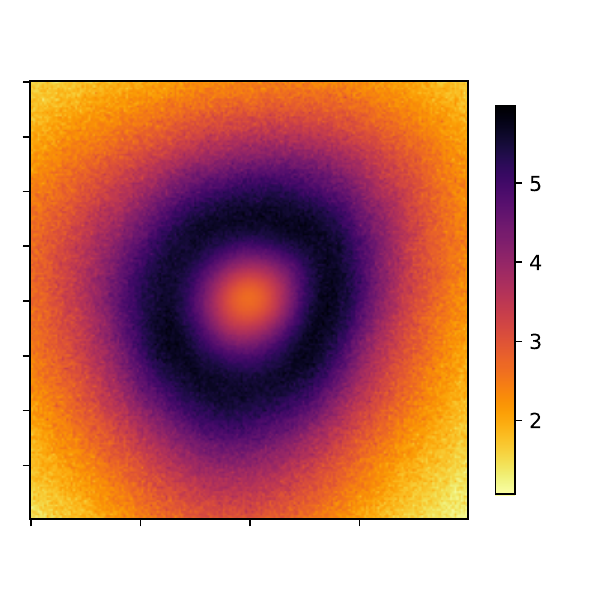}
\vskip -0.15in
\caption{\texttt{cauchy}}
\label{fig:hm:ubdiff:cauchy}
\end{subfigure}
\vskip -0.1in
\caption{
{\bf Training loss gap surfaces.}
We visualize the landscape of the loss gap  $(\tilde{\Ls}_\Omega^\tau - \Ls)$ between the smoothed $\tilde{\Ls}_\Omega^\tau$ and the $k$-means loss $\Ls$ at $\tau = 0.1$.
}
\label{fig:ubdiff}
\end{wrapfigure}

We visualize the loss surfaces~\citep{li2018visualizing} for the L2-regularized smoothed loss in \cref{fig:lsurfs:normal:k6L2}, and see that in all cases, {\em the loss surfaces appear fairly unimodal without any complicated landscapes}, explaining why these models train so easily with apparent monotonic convergence. For the smaller $d=4$, {\em the loss surfaces of \eqref{eq:tk-emb-v1} and \eqref{eq:tk-emb-v2} are markedly different} aligning with the significant difference in their $d_E, \demb$ terms in \Cref{thm:loss-lip-short}. For the larger $d=32$, {\em the loss surfaces of \oh and \bn are quite similar}, as \Cref{thm:loss-lip-short} suggests given the smaller relative differences between their $d_E, \demb$ terms.

In \cref{fig:dists-var}, we use clustering tasks with the points $\mX$ generated from a mixture of isotropic distributions, and consider 5 different distribution families: \texttt{cauchy}, \texttt{gumbel}, \texttt{laplace}, \texttt{lognormal}, \texttt{normal}. We evaluate two values of $\tau$. In all cases, the models are trained and evaluated on the same family. First, we see that our proposed training scheme is unable to generalize when trained with data from the \texttt{cauchy} distribution --- the training seems to converge very quickly, but the performance on unseen clustering tasks are significantly worse than that of the single Lloyd's iteration. For the remaining families with the larger $\tau = 1$, we see that the L2-regularized $\min$ with the \eqref{eq:tk-emb-v1} and \eqref{eq:tk-emb-v2} token embeddings generalize well, and significantly outperform a single Lloyd's iteration; the NE-regularization is unable to match a single Lloyd's iteration, highlighting the benefit of the tighter loss upper-bound. When $\tau = 0.1$, both NE-regularized and L2-regularized $\min$ generalize equally well. Overall, we see that we learn generalizing transformers for all considered distributions except with \texttt{cauchy}; this failure remains an open question. We hypothesize that it is related to the larger gap between the smoothed upper-bound $\tilde{\Ls}$ and the discrete $k$-means objective $\Ls$ over the parameter space, which we visualize for \texttt{cauchy} and \texttt{normal} in \cref{fig:ubdiff} --- the gap is at most 0.5 for \texttt{normal}, but has an order-of-magnitude larger range with \texttt{cauchy}, with the largest gap being greater than 5.
\section{Probing Algorithmic Generalization Beyond Training Distributions}
\label{sec:emp}
\begin{figure}[tb]
\centering
\includegraphics[width=0.75\textwidth]{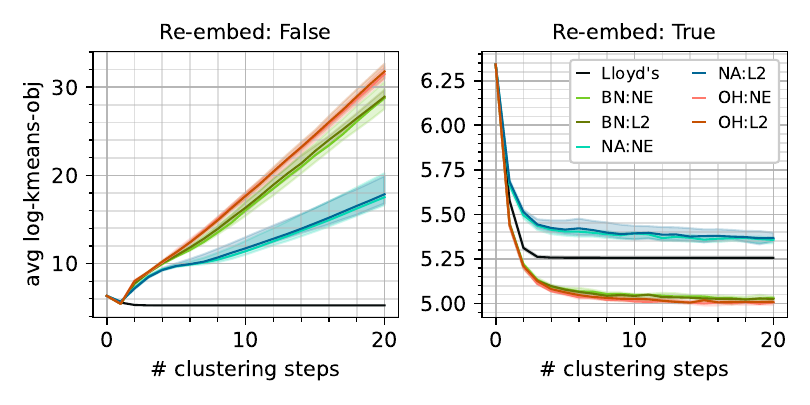}
\caption{{\bf Multi-step clustering performance.} Models trained with $n=512, d = 32, k=10, \tau = 0.1$ and used for $T=20$ step clustering. We consider all configurations (NE/L2 regularizers and \oh/\bn/\na embeddings).
{\bf [Left]}~Multi-step clustering by applying \cref{eq:updates} iteratively for $T$ steps.
{\bf [Right]}~Multi-step clustering where the model input is re-embedded at each of the $T$ steps, treating $T$-steps as $T$ single-steps of clustering. See \cref{asec:multi-step} for further details.}
\label{fig:multi-step}
\end{figure}

In this section, we evaluate the trained transformers (which have implicitly learned a single-step clustering algorithm) on varied distributions of clustering tasks. First, in \cref{fig:multi-step}, we present the performance of the models (trained for single-step clustering) for multiple clustering steps.
We consider two versions of multi-step clustering, and compare them to Lloyd's iterations with the same initial centers. See \Cref{asec:multi-step} for details.
The results clearly show that naively applying the attention updates in \cref{eq:updates} iteratively does not work beyond a single-step, as should be expected since the models are trained for single-step clustering.
After the first-step, the $k$-means objective of the centers generated by the models diverge for all forms of token embeddings (\oh/\bn/\na) and $\min$-regularizer (NE/L2).
However, treating multi-step clustering as multiple single-steps of clustering (thus, re-embedding the inputs at each step) does show improved clustering performance with multiple steps, and models with \oh/\bn token embeddings significantly outperform Lloyd's iterations. Models with no token embeddings also show improvements with multiple steps but underperform Lloyd's. This highlights that {\em even though we train these models for single-step clustering, allowing for efficient training, they are viable for multi-step clustering as the performance improves when used iteratively in a careful manner}.

\begin{figure}[!!b]
\centering
\includegraphics[width=0.95\textwidth]{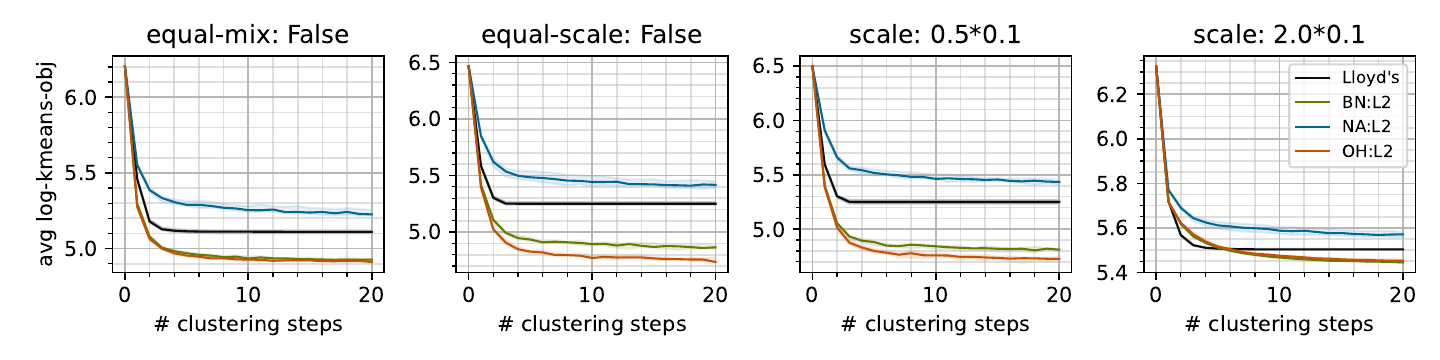}
\caption{
{\bf Generalization to varying distributional properties.} Models are trained with $n=512, d = 32, k=10, \tau = 0.1$, L2-regularized $\min$ with \oh/\bn/\na token embeddings, and evaluated on $T=20$ step clustering with tasks from mixtures of \texttt{normal} distributions. Training tasks are from equally weighted mixtures of isotropic \texttt{normal} distributions with a per-dimensional scale of 0.1. We evaluate generalization in 4 scenarios: {\bf [Left]}~{\em unequally weighted} mixture of isotropic distributions, {\bf [Left center]}~equally weighted mixture of {\em anisotropic} distributions, {\bf [Right center]}~equally weighted mixture of isotropic distributions with a {\em per-dimension scale smaller (half) than that during training}, and {\bf [Right]}~same as {\bf Right center} but {\em with a scale larger (double) than that during training}.
}
\label{fig:vardist-gen}
\end{figure}

Next, we evaluate the ability of the model (trained on clustering tasks from one distribution) to generalize to tasks from different distributions. In \cref{fig:vardist-gen}, we consider models trained on tasks with data sampled from an equally-weighted mixtures of isotropic \texttt{normal} distributions,
and study their generalization to various changes to the mixture distributions.
The learned models can perform multi-step clustering quite well, with \oh and \bn token embedding based models outperforming Lloyd's iterations significantly in all but one case --- for data sampled from a mixture of \texttt{normal} distributions with a larger scale (than the one used during training), the models slightly under-perform Lloyd's initially, but eventually outperform Lloyd's with additional clustering steps.

\begin{wrapfigure}[24]{r}{0.45\textwidth}
\vskip -0.1in
\centering
\includegraphics[width=0.45\textwidth]{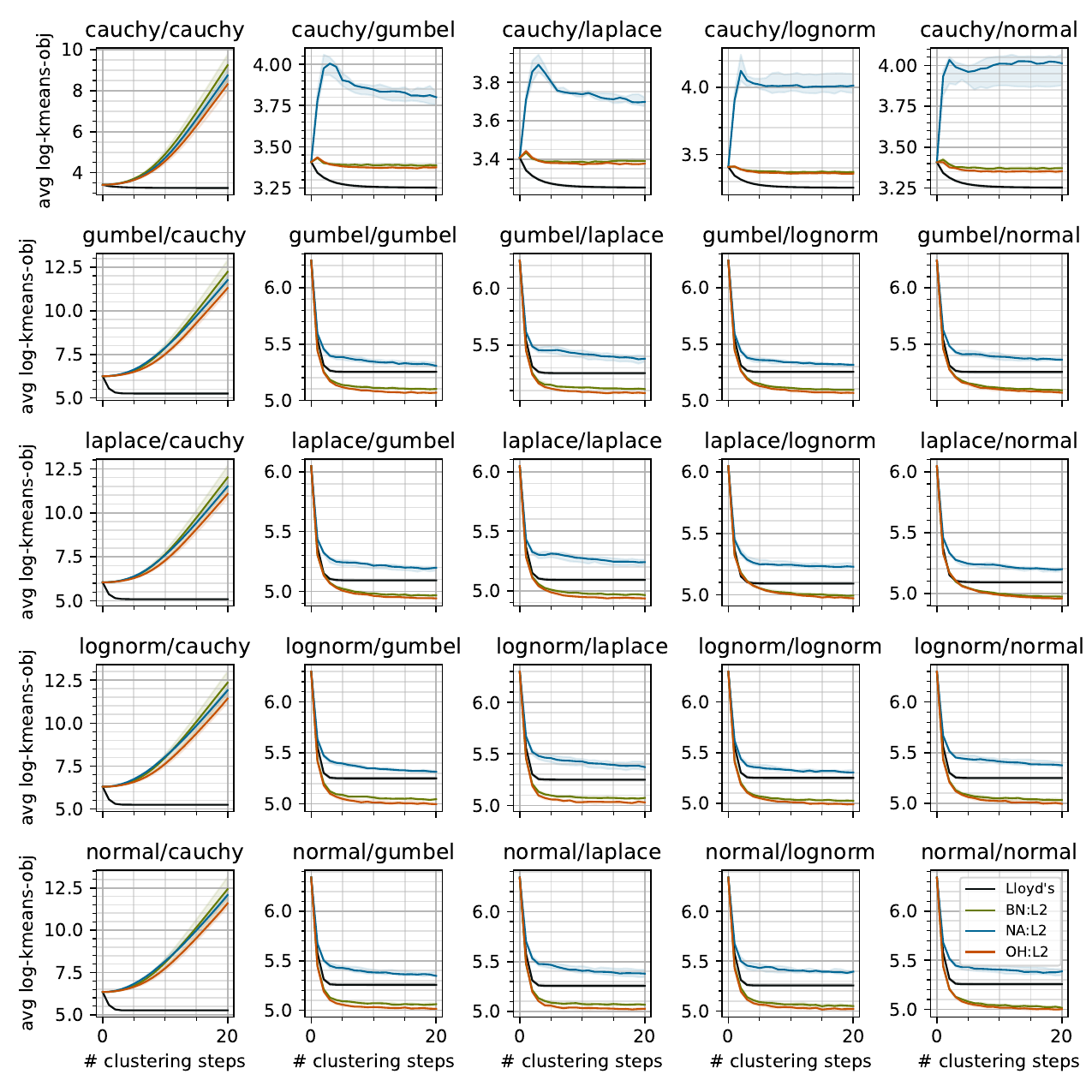}
\caption{{\bf Generalization to different distribution families.}
The models are trained on one distribution family (each column maps to a training family), and tested for multi-step ($T=20$) clustering from a different distribution family (each row maps to a test family).
}
\label{fig:varfam-gen}
\end{wrapfigure}

As a further evaluation of algorithmic generalization, we study the effect of changing the {\em distribution family} in \cref{fig:varfam-gen} (larger version in \cref{fig:varfam-gen-large}). We see that models trained on clustering tasks with data from the \texttt{cauchy} distributions (first column in \cref{fig:varfam-gen}) always diverge for all distributions. This aligns with our previous results in \cref{fig:dists-var}, where models trained on \texttt{cauchy} tasks are even unable to generalize well to single-step \texttt{cauchy} clustering problems. Models trained on other distributions are also unable to perform well with \texttt{cauchy} tasks (first row in \cref{fig:varfam-gen}); the models with \oh and \bn embeddings do not diverge, but are not competitive to Lloyd's here. The remaining pairwise evaluations (the lower-right $4 \times 4$ in \cref{fig:varfam-gen}) all show that {\em the learned models can generalize well to evaluation tasks from other distributions, and continue to significantly outperform Lloyd's iterations} (for \oh/\bn). While generalization to \texttt{cauchy} is an open question, and we leave it for future work, we provide a preliminary probe in \cref{fig:task-sim} (more tasks per family in \cref{fig:task-sim-larger}) where we show the pairwise distributional dissimilarities between the per-task initial-center to point squared Euclidean distances. The \texttt{cauchy} tasks are clearly only similar to other \texttt{cauchy} tasks, and are quite dissimilar to the rest; all other task families are quite similar to each other. These task (dis)similarities align quite well with the generalization (or lack thereof) we observe in \cref{fig:varfam-gen}.

\begin{wrapfigure}[23]{R}{0.4\textwidth}
\vskip -0.1in
\centering
\includegraphics[clip, trim=4in 0.5in 4in 0.32in, width=0.4\textwidth]{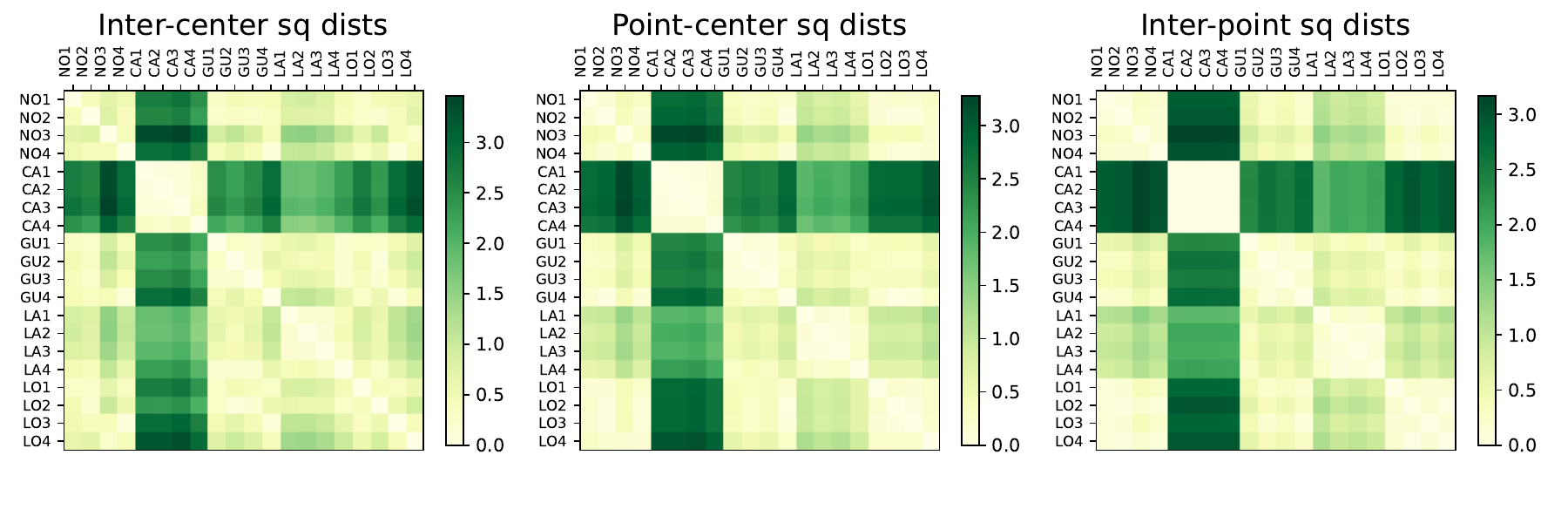}
\caption{
{\bf Pairwise task dissimilarities.}
We use 4 tasks from each of the families and compute the pairwise Earth-mover distance (EMD) between their respective distribution of the distances between the points $\mX$ and initial centers $\mC$, giving us a $20 \times 20$ matrix. Each row and column corresponds to a single task, and the tasks are arranged into family-based blocks in the following order: \texttt{normal}, \texttt{cauchy}, \texttt{gumbel}, \texttt{laplace}, \texttt{lognormal}.
{\em Light shades show low EMD.}
}
\label{fig:task-sim}
\end{wrapfigure}

\section{Discussion} \label{sec:related}
Building upon the initial findings of \citet{garg2022can} and \citet{akyurek2023what}, the area of in-context learning with transformers is very active, usually focusing on supervised learning tasks where some labeled examples $\{ (\vx_i, y_i ) \}_{i \in \iset{n}}$ are provided in the context alongside a single unlabeled query $\vx_q$, and the predicted output $\hat{y}_q$ for the query is generated in-context during the forward pass through the transformer.
\citet{von2023transformers} demonstrate that transformers with {\em linear attention} can execute gradient descent for a linear model in-context, while subsequent works study learnability with simplified linear attention~\citep{zhang2023trained} and the standard softmax attention~\citep{huang2023context}; note that \citet{huang2023context} study a very stylized transformer with structured weight matrices. These results have been extended to demonstrate preconditioned gradient descent for a linear model~\citep{ahn2023transformers} and functional gradient descent~\citep{cheng2024transformers}. Usually, a differentiable supervised learning objective (such as mean squared error for regression) is considered.
Our focus is on discrete clustering problems, where the learning objective is not natively differentiable, and we carefully study the effect of the necessary smoothing of the clustering objective on learnability.

While most existing literature focuses on linear attention transformers with (causal) self-attention, we build upon \citet{clarkson2026transformer} and study softmax attention with an encoder-decoder model that uses both self-attention and cross-attention, though the algorithmic expressivity results rely on a {\em limiting softmax attention} where the attention softmax temperature goes to zero. However, this form of softmax attention is practically infeasible, and we use standard softmax attention for our learnability analysis and experiments.

Another line of research connecting transformers and clustering is the study of the token dynamics of self-attention where the tokens are iteratively updated via self-attention, and the goal is to understand the conditions under which these dynamics stabilize and converge~\citep{geshkovski2023emergence, geshkovski2023mathematical}. The results show that, under certain conditions, the token representations tend to self-cluster, first into a small number of clusters, and then eventually, to a single cluster.
In contrast, we focus on learning clustering algorithms that are competitive to existing hand-crafted schemes, and train a single transformer layer instead of the problem setup of the aforementioned papers with
multiple (possibly infinite) layers. We learn the transformer parameters from data (in this case, a collection of clustering tasks) and study their training convergence and algorithmic generalization.

\paragraph{Conclusion.}
We study the ability of transformers to express and learn clustering algorithms. First, we establish expressivity of Lloyd's algorithm for $k$-means clustering with a leaner transformer (\Cref{thm:lloyds-exp}) than existing constructions~\citep{clarkson2026transformer}. Then, we study the factors affecting the ability of transformers to learn a clustering algorithm (\Cref{thm:loss-lip-short}), and empirically validate their importance (\Cref{fig:tau-omega-demb-dim-analysis}, \ref{fig:lsurfs:normal:k6L2}, \ref{fig:dists-var}, \ref{fig:ubdiff}). We show that these learned algorithms -- in the form of transformers -- can progressively improve clustering with iterative application (\Cref{fig:multi-step}), though naive iteration fails, and can successfully execute clustering for tasks that are different than those used for training (\Cref{fig:vardist-gen}, \ref{fig:varfam-gen}), though exceptions exist.
This work comes with various limitations, with the main ones being (i)~the restriction that transformers learn an algorithm for a specific data dimensionality $d$ and number of clusters $k$, while hand-crafted algorithms are more general, and (ii)~the lack of understanding of the failure to train and generalize with \texttt{cauchy} tasks. We discuss these and various others in \Cref{asec:lims}.

\clearpage
\bibliography{refs}
\bibliographystyle{unsrtnat}

%%%%%%%%%%%%%%%%%%%%%%%%%%%%%%%%%%%%%%%%%%%%%%%%%%%%%%%%%%%%

\clearpage
\appendix
\crefalias{section}{appendix}
\allowdisplaybreaks

\newpage
\addcontentsline{toc}{section}{Appendix}
\part{Appendix}
\parttoc
\clearpage

\section{Limitations} \label{asec:lims}

As discussed in our concluding remarks, one major limitation of our work is that we currently lack a clear understanding as to why the learning and generalization fail in certain situations. For example, it is not clear why learning and generalization is hard with \texttt{cauchy} distributions. We hypothesize that this is related to the inability of the smoothed clustering objective (via a regularized $\min$) to provide a tight upper-bound for the actual $k$-means clustering loss (as we show in \Cref{fig:ubdiff}). However, it is not completely clear why, but we speculate that the following is a potential path to understanding:
The \texttt{cauchy} distribution is fat-tailed, and hence, many points in $\mX$ can be quite far from the true cluster, and act like outliers in a clustering problem.
The presence of outlier values -- the large pairwise squared Euclidean distances to the outliers compared to the relatively smaller pairwise squared distances to the inliers -- in the regularized $\min$ can severely degrade the upper-bounds obtained via the smoothed objective.
This still does not give us a clear way to mitigate the issues while training with \texttt{cauchy} tasks seen in \Cref{fig:dists-var}. Similarly, the presence of outliers in $\mX$ also results in the significant pairwise distributional distances between the tasks shown in \Cref{fig:task-sim} and \Cref{fig:task-sim-larger}, leading to the failures of the trained transformers on \texttt{cauchy} validation tasks.

Beyond that, there are architectural limitations we inherit from \citet{clarkson2026transformer}, where the transformer learns to cluster for a specific choice of data dimensionality $d$ and number of clusters $k$, though it is able to perform clustering for varying number of points $n$ in $d$ dimensions into $k$ clusters (as shown in \Cref{fig:lengen-nsamples}). It is important to extend this architecture to be able to generalize to different values of $k$ or $d$, and be able to train on clustering tasks with a certain $k$ value and generalize to larger $k$ values. One possible way to generalize in terms of $d$ is to train the transformer with a large $d'$. Then any clustering task with $d < d'$ can be input to the transformer by zero-padding the necessary extra $(d'-d)$ dimensions to the original data in $d$ dimensions. As Euclidean distances are unaffected by zero-padding, this idea should conceptually work. However, we pay the computational cost of training a model for a large $d'$ as the model embedding dimensionality $\demb$ would scale linearly in $d'$, while the number of parameters would scale quadratically in $d'$. Whether such a setup would show strong generalization performance across various values of $d$ is unclear, and would depend on the training strategies. A similar scheme could also be considered to generalize to different values of $k$, where we would train a model on a large value of $k'$.

Additionally, there are various open questions we do not address in this paper, such as
\begin{itemize}
\item It is not clear if the additional dimensions -- $k$ dimensions with \eqref{eq:tk-emb-v1} and $\ceil{\log_2 k}$ with \eqref{eq:tk-emb-v2} -- are necessary to learn a clustering algorithm that is competitive to Lloyd's algorithm. Empirically, we see that these additional dimensions do allow the models to surpass Lloyd's algorithm, and the models without these additional dimensions (the \na models) are empirically unable to match Lloyd's. However, we do not have theory codifying this need for the additional dimensions.
\item Our proposed token embedding in \eqref{eq:tk-emb-v2} can express Lloyd's algorithm for $k$-means clustering with the limiting softmax attention (with $\gamma \to \infty$), but can introduce an arbitrary topological order among the cluster centers with a finite $\gamma$. We already see that models with the \bn token embedding have slower training convergence compared to the models with the \oh token embeddings (especially for small $d$), but it is possible that this arbitrary order can introduce other unexpected behaviours.
\item The standard transformer has an attention block and a feedforward block (which we do not consider in this work). It is not clear how the inclusion of the feedforward block would affect learning and generalization. The theoretical results in \citet{clarkson2026transformer} utilize the feedforward block to perform Euclidean clustering with dot-product attention (instead of the negative squared Euclidean distance based attention in \Cref{def:attn-op}), but it is not clear what role the feedforward block will play in the practical learning setup.
\item While we are able to train the model efficiently for single-step clustering tasks, and utilize it for multi-step clustering by treating multi-step clustering as multiple single-step clustering tasks, it is not clear if we can learn a better model by training directly for multi-step clustering with appropriately designed objectives, and do so efficiently.
\item Our discussion in \Cref{sec:learning} around learning convergence and generalization consider training losses that are $\beta$-smooth. We see that the L2-regularized $\min$ (using sparsemax~\citep{martins2016softmax}) provides a tighter upper-bound and thus generalizes better in many cases over the NE-regularized $\min$ (using softmax). However, the sparsemax operation is only differentiable almost everywhere, and is not generally $\beta$-smooth, thus leading to a mismatch between our theoretical discussion and empirical results. One possible explanation for the strong empirical performance of the L2-regularized $\min$ is that the loss surface is actually more favourable than the usual nonconvex loss landscapes of neural networks, as can be seen in \Cref{fig:lsurfs:normal:k6L2}, where the loss landscapes appear convex for the larger value of $d$.
\item We treat the softmax attention inverse temperature $\gamma$ and the regularized $\min$ penalty $\tau$ as fixed hyperparameters in our analysis in \Cref{thm:loss-lip-short} and in our experiments. But in practice, these parameters can be adaptive or annealed through the training and inference, and this is not something we consider here.
\item Finally, while we see that the clustering algorithm learned by the transformer is able to outperform Lloyd's algorithm for $k$-means clustering, it is not clear what the precise clustering algorithm is that the learned model is executing. While we can consider the tools used in \citet{akyurek2023what} such as the notion of ``algorithmic distance'', we leave this for future work.
\end{itemize}

\section{Leaner Transformers Can Emulate Lloyd's Iteration} \label{asec:expressivity}

\begin{algorithm}[tb]
\SetAlgoCaptionSeparator{}% no separator, default colon
\caption{Lloyd's algorithm to cluster $n$ points into $k$ clusters}
\label{alg:lloyds}
\DontPrintSemicolon
Input: Instances $\{\vx_i \in \R^d, i \in \iset{ n } \}$\;
Input: Number of iterations $T$\;
Input: Initial centers $\{\vc_j \in \R^d, j \in \iset{k} \}$\;
$ \forall j \in \iset{ k }, \vc_j^{(0)} \gets \vc_j$ \;
\For{$t = 1, \ldots, T$} {
  $\forall i \in \iset{ n }, \ a_{i}^{(t)} \gets \argmin_{j \in \iset{ k }} \eudi{\vx_i}{\vc_j^{(t-1)}}$ \;
  $\forall j \in \iset{ k }, \ \vc_j^{(t)} \gets \argmin_{\vc \in \R^d} \sum_{ i \in \iset{ n }: a_i^{(t)} = j}  \eudi{ \vx_i }{ \vc }$
}
\Return{$\{ \vc_j^{(T)}, j \in [k] \}$}
\end{algorithm}

We use $\mI_d$ to denote the $(d \times d)$ identity matrix. Further, we specify our matrices for attention in a compact form, using the following notation. For integers $1 \le i \le j\le d$, let $\mI_d^{i:j}$ be the matrix such that for any $\vx\in\R^{d}$ such that
\begin{equation}
\vv = \mI_d^{i:j} \vx,
\quad \text{ where }
v_m = \begin{cases}
  x_m & i \leq m \leq j \\
  0 & o.w.
\end{cases}.
\end{equation}
The $\vv = \mI_d^{i:j} \vx$ zeroes out the $m$-th entries of $\vx$ with indices $m \in [1,i) \cup (j, d]$, leaving the rest unchanged. When $j=d$, we use $\mI_d^{i:}$, and when $i=1$, we use $\mI_d^{:j}$.

We prove \cref{thm:lloyds-exp} based on the construction in \citet[Theorem 2]{clarkson2026transformer} with minor modification. We use the selective property of attention as described in \citet[Appendix D]{clarkson2026transformer} in the form of {\em averaging hard attention} or AHAT as the inverse temperature of the softmax goes to infinity; this behaves as a {\em hardmax} when there is a unique maximum. We are restating \cref{thm:lloyds-exp} as follows:
\begin{theorem}\label{athm:lloyds-exp}
Given as input a set of points $\mX \in \R^{d \times n}$ and initial centers $\mC \in \R^{d\times k}$, assuming that no iterations of Lloyd's algorithm lead to empty clusters, there exists parameters $\{( \mQ_\mu, \mK_\mu, \mV_\mu), \mu = 1, \ldots, 4\}$ for the transformer in \Cref{def:updates}  with $\gamma = \infty$ and initial token embeddings $( \bar{\mX}^{(0)}, \bar{\mC}^{(0)} )$ as in \Cref{def:tk-emb-v2} then, for any $t \in \mathbb{N}$, the output of the $t$-th transformer layer exactly matches the output of Lloyd's algorithm (\Cref{alg:lloyds}) with the same input after $t$ iterations.
\end{theorem}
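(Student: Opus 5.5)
We prove the statement by induction on $t$, maintaining the invariant
\[
\bar{\mX}^{(t)} = \qa{\mX}{\mY^{(t)}}, \qquad \bar{\mC}^{(t)} = \qa{\mC^{(t)}}{\mB_k},
\]
where $\mC^{(t)}$ holds the Lloyd centers after $t$ iterations and, for $t\ge 1$, the column $\mY^{(t)}_{:,i}=\vb_{a_i^{(t)}}$ is the binary code of the cluster assigned to $\vx_i$. Write $p=\ceil{\log_2 k}$, so $\demb=d+p$. The construction follows \citet[Theorem 2]{clarkson2026transformer}. Each of the four attention blocks uses projections of the form $\mI_{\demb}^{i:j}$ to select the coordinates that enter the squared distance, and to select which coordinates are written back. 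With $\gamma=\infty$ the columnwise softmax becomes averaging hard attention (uniform weights on the set of minimizers of the distance). The factor $1/\sqrt{\demb}$ then plays no role. Two facts are used throughout:
\begin{itemize}
\item the columns $\vb_1,\dots,\vb_k$ of $\mB_k$ are distinct $0/1$ vectors, so $\eudi{\vb_j}{\vb_{j'}}\ge 1$ whenever $j\ne j'$;
\item any value matrix supported on one block of rows leaves the other block of the embedding untouched.
\end{itemize}

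\emph{Point update.} For the cross-attention take $\mQ_1=\mK_1=\mI_{\demb}^{:d}$. The distance between point $i$ and center $j$ is then exactly $\eudi{\vx_i}{\vc^{(t)}_j}$, and the code rows $\mY^{(t)}$ and $\mB_k$ are ignored. Hard attention therefore selects the nearest center, and $\mV_1=\mI_{\demb}^{d+1:}$ returns $\qa{\mathbf 0}{\vb_{a_i^{(t+1)}}}$.

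To remove the stale code $\mY^{(t)}$, use the self-attention with $\mQ_2=\mK_2=\mI_{\demb}$ and $\mV_2=-\mI_{\demb}^{d+1:}$. Every token is at distance $0$ from itself. Any other token at distance $0$ is an identical copy, so averaging over the minimizers still returns $\qa{\mathbf 0}{-\mY^{(t)}_{:,i}}$. Adding the residual gives $\mY^{(t+1)}_{:,i}=\vb_{a_i^{(t+1)}}$ with the $\mX$ rows unchanged. For $t=0$ this is consistent with $\mY^{(0)}=\mathbf 0$.

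\emph{Center update.} For the cross-attention from centers to the updated points take $\mQ_3=\mK_3=\mI_{\demb}^{d+1:}$. The distance between center $j$ and point $i$ is $\eudi{\vb_j}{\mY^{(t+1)}_{:,i}}$. It equals $0$ exactly when $a_i^{(t+1)}=j$ and is at least $1$ otherwise. By the no-empty-cluster assumption the minimizer set is nonempty and is precisely cluster $j$. Averaging hard attention with $\mV_3=\mI_{\demb}^{:d}$ therefore outputs the cluster mean, which is the Lloyd update $\vc^{(t+1)}_j$.

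Finally, the self-attention on centers with $\mQ_4=\mK_4=\mI_{\demb}^{d+1:}$ and $\mV_4=-\mI_{\demb}^{:d}$ has a unique self-match, since the codes are distinct. It thus subtracts $\vc^{(t)}_j$, cancelling the residual, so the $\mC$ block becomes $\mC^{(t+1)}$ while $\mB_k$ is preserved. This closes the induction. Since the same $\Theta$ is used in every layer, the result holds for all $t$.

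\emph{Main obstacle.} The delicate step is the hard assignment in the point update. If a point is equidistant from two centers, averaging hard attention returns the mean of two binary codes. That vector is not a valid code, and the distance test in the center update would then break. The \oh embedding has the same issue, where it produces a fractional assignment instead. I would handle this by assuming, as is implicit in Lloyd's $\argmin$, that nearest centers are unique (or a fixed tie-breaking that holds generically). With that assumption, every remaining step reduces to the exact-zero-versus-at-least-one separation of distinct binary codes, which is what lets $p$ dimensions replace the $k$ one-hot dimensions.
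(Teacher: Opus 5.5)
Your proposal is correct and follows essentially the same construction as the paper: the same inductive invariant, the same coordinate-selecting projections for the point cross-attention and both center-update blocks, and the same key observation that averaging hard attention over exact-zero binary-code distances returns the cluster means. The only deviation is your choice $\mQ_2=\mK_2=\mI_{\demb}$ instead of the paper's $\mI_{\demb}^{:d}$, which removes the need for the paper's extra assumption (made in its proof, not in the theorem statement) that the points are pairwise distinct, since identical tokens carry identical codes; like the paper, you still need unique nearest centers, which you rightly flag.
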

\begin{proof}
With $\demb = d + \ceil{\log_2 k}$, let $\mK_1 = \mQ_1 = \mK_2 = \mQ_2 = \mI_{\demb}^{:d}$, and $\mV_1 = -\mV_2=\mI_{\demb}^{d+1:}$. Note that $\mI_{\demb}^{:d}$ and $\mI_{\demb}^{d+1:}$ are $(d+ \ceil{\log_2 k}) \times (d+ \ceil{\log_2 k})$ matrices.

At any step $t \geq 0$, let us assume that $\bar{\mX}^{(t)} = \qa{\mX}{\mY^{(t)}}$, and $\bar{\mC}^{(t)} = \qa{\mC^{(t)}}{\mB_k}$. This is true at $t = 0$ by the definition of the token embeddings in \cref{def:tk-emb-v2}. We will show that, at any $t \geq 0$ the update to $(\bar{\mX}^{(t+1)}, \bar{\mC}^{(t+1)})$ maintains this form where $\bar{\mX}^{(t+1)} = \qa{\mX}{\mY^{(t+1)}}$, and $\bar{\mC}^{(t+1)} = \qa{\mC^{(t+1)}}{\mB_k}$, where
\begin{itemize}
\item $\mY^{(t+1)}$ is the matrix with the $i$-th column corresponding to the binary representation of $(j^\star(i) - 1)$ in $\ceil{\log_2 k}$ bits, with $j^\star(i) \in \iset{k}$ denoting the index of the cluster to which the $i$-th point is assigned to in the $(t+1)$-th Lloyd's iteration in line 6 of \Cref{alg:lloyds}.
\item $\mC^{(t+1)}$ corresponds to the updated cluster centers as in line 7 of \Cref{alg:lloyds}.
\end{itemize}
To update cluster assignments for Lloyd's algorithm, that is, to compute $\bar{\mX}^{(t+1)}$ from $\bar{\mX}^{(t)}$ and $\bar{\mC}^{(t)}$, we compute
\begin{align}\label{eq:l2-clus-assign-eu}
&
\attned\gamma (\bar{\mX}^{(t)}, \bar{\mC}^{(t)}; ( \mI_{\demb}^{:d}, \mI_{\demb}^{:d}, \mI_{\demb}^{d+1:} ))
+ \attned\gamma(\bar{\mX}^{(t)}, \bar{\mX}^{(t)}; ( \mI_{\demb}^{:d}, \mI_{\demb}^{:d}, -\mI_{\demb}^{d+1:} ))
\nonumber
\\ &
\qquad
= \mI_{\demb}^{d+1:} \bar{\mC}^{(t)} \softmax(
 - (\nicefrac{\gamma}{\sqrt{\demb}}) \sqdist( \mI_{\demb}^{:d} \bar{\mX}^{(t)}, \mI_{\demb}^{:d} \bar{\mC}^{(t)})
)
\nonumber
\\ & \qquad \qquad
- \mI_{\demb}^{d+1:}\bar{\mX}^{(t)} \softmax(
 - (\nicefrac{\gamma}{\sqrt{\demb}}) \sqdist(\mI_{\demb}^{:d} \bar{\mX}^{(t)}, \mI_{\demb}^{:d} \bar{\mX}^{(t)})
)
\\ &
\qquad
= \qa{\mathbf{0}_{d\times k}}{\mB_k} \softmax \left(
 - (\nicefrac{\gamma}{\sqrt{\demb}})  \sqdist\left(
   \qa{\mX}{0_{\ceil{\log_2 k} \times n}}, \qa{\mC^{(t)}}{0_{\ceil{\log_2 k} \times k}}
\right)\right)
\nonumber
\\ & \qquad \qquad
- \qa{\mathbf{0}_{d\times n}}{\mY^{(t)}} \softmax\left(
 - (\nicefrac{\gamma}{\sqrt{\demb}})  \sqdist\left(
   \qa{\mX}{\mathbf{0}_{\ceil{\log_2 k} \times n}}, \qa{\mX}{\mathbf{0}_{\ceil{\log_2 k}\times n}}
\right) \right)
\\ &
\qquad
= \qa{\mathbf{0}_{d\times k}}{\mB_k} \hardmax \left(- \sqdist\left(\mX, \mC^{(t)} \right)\right)
- \qa{\mathbf{0}_{d\times n}}{\mY^{(t)}} \hardmax \left( - \sqdist\left(\mX, \mX \right) \right),
\end{align}
as $\softmax$ with inverse temperature $\gamma = \infty$ gives us $\hardmax$, where for any matrix $\mA \in \R^{a \times b}$, $\hardmax(\mA) \in \{0, 1\}^{a \times b}$ is binary matrix of the same size as $\mA$ with each column having exactly a single nonzero value corresponding to the unique maximum of the column, assuming each column has an unique maximum.

First, assuming $\vx_i \not= \vx_{i'}$ for any $i \not= i'$, $\hardmax(-\sqdist(\mX, \mX)) = \mI_n$. Next $\hardmax \left(- \sqdist\left(\mX, \mC^{(t)} \right)\right)$ applies a columnwise argmin to the $(k \times n)$ matrix $\sqdist(\mX, \mC^{(t)})$, thereby, each column having only the $j^\star(i)$-th entry as 1 where $j^\star(i) = \argmin_{j \in \iset{k}} \eudi{\mX_{:,i}}{\mC^{(t)}_{:,j}}$. Thus $\mY^{(t+1)} = \mB_k \hardmax \left(- \sqdist\left(\mX, \mC^{(t)} \right)\right)$ is a $(\ceil{\log_2 k} \times n)$ binary matrix, where the $i$-th column $\mY^{(t)}_{:,i}$ is the $j^\star(i)$-th column of $\mB_k$, thereby denoting the updated cluster assignments as per Lloyd's algorithm (\Cref{alg:lloyds}, line 5). All together, we have
\begin{equation}\label{eq:cross-self-xupdate}\begin{split}
& \attned\gamma (\bar{\mX}^{(t)}, \bar{\mC}^{(t)};  ( \mI_{\demb}^{:d}, \mI_{\demb}^{:d}, \mI_{\demb}^{d+1:} ))
+ \attned\gamma(\bar{\mX}^{(t)}, \bar{\mX}^{(t)}; ( \mI_{\demb}^{:d}, \mI_{\demb}^{:d}, -\mI_{\demb}^{d+1:} ))
\\
& \quad
= \qa{\mathbf{0}_{d \times n}}{\mY^{(t+1)} -\mY^{(t)}}.
\end{split}\end{equation}
Adding this to $\bar{\mX}^{(t)} = \qa{\mX}{\mY^{(t)}}$ via the residual connection in \cref{eq:updates} gives us the updated $\bar{\mX}^{(t+1)} = \qa{\mX}{\mY^{(t+1)}}$ with the updated cluster assignments $\mY^{(t+1)}$ matching Lloyd's algorithm (\Cref{alg:lloyds}, line 5).

Let $\mQ_3 = \mK_3 = \mQ_4 = \mK_4 = \mI_{\demb}^{d+1:}$, and $ \mV_3= - \mV_4 = \mI_{\demb}^{:d}$. To update cluster centers as per Lloyd's algorithm (\Cref{alg:lloyds}, line 6), we compute
\begin{align}\label{eq:l2-clus-center-eu}
&  \attned\gamma(\bar{\mC}^{(t)}, \bar{\mX}^{(t+1)}; ( \mI_{\demb}^{d+1:}, \mI_{\demb}^{d+1:}, \mI_{\demb}^{:d} ) )
  + \attned\gamma(\bar{\mC}^{(t)}, \bar{\mC}^{(t)}; ( \mI_{\demb}^{d+1:}, \mI_{\demb}^{d+1:}, -\mI_{\demb}^{:d} ) )
\nonumber
\\ &
\qquad
=  \mI_{\demb}^{:d} \bar{\mX}^{(t+1)} \softmax\left(
 -(\nicefrac{\gamma}{\sqrt{\demb}}) \sqdist\left(
  \mI_{\demb}^{d+1:} \bar{\mC}^{(t)}, \mI_{\demb}^{d+1:} \bar{\mX}^{(t+1)}
\right)\right)
\nonumber
\\ & \qquad \qquad
- \mI_{\demb}^{:d} \bar{\mC}^{(t)} \softmax \left(
 -(\nicefrac{\gamma}{\sqrt{\demb}}) \sqdist\left(
  \mI_{\demb}^{d+1:} \bar{\mC}^{(t)}, \mI_{\demb}^{d+1:} \bar{\mC}^{(t)}
\right) \right)
\\ &
\qquad
=  \qa{\mX}{\mathbf{0}_{\ceil{\log_2 k}\times n}} \softmax\left(
 -(\nicefrac{\gamma}{\sqrt{\demb}}) \sqdist\left(
    \qa{\mathbf{0}_{d\times k}}{\mB_k}, \qa{\mathbf{0}_{d\times n}}{\mY^{(t+1)}}
\right) \right)
\nonumber
\\ & \qquad \qquad
- \qa{\mC^{(t)}}{\mathbf{0}_{\ceil{\log_2 k} \times k}} \softmax\left(
 -(\nicefrac{\gamma}{\sqrt{\demb}}) \sqdist\left(
   \qa{\mathbf{0}_{d\times k}}{\mB_k}, \qa{\mathbf{0}_{d\times k}}{\mB_k}
\right) \right)
\\ &
\qquad
=  \qa{\mX}{\mathbf{0}_{\ceil{\log_2 k}\times n}}
\ahmax\left(-\sqdist\left(\mB_k, \mY^{(t+1)}\right) \right)
\nonumber
\\ & \qquad \qquad
- \qa{\mC^{(t)}}{\mathbf{0}_{\ceil{\log_2 k} \times k}}
\hardmax\left(- \sqdist\left(\mB_k, \mB_k \right) \right)
\end{align}
where $\gamma = \infty$ gives us the columnwise $\hardmax$ and the columnwise averaging hardmax operation $\ahmax$. The reason for the averaging hardmax is because each column in $\sqdist\left(\mB_k, \mY^{(t+1)}\right)$ can have a non-singleton set of minima (or multiple maxima). As before, $\hardmax\left(- \sqdist\left(\mB_k, \mB_k \right) \right) = \mI_k$.

Next, $\ahmax\left(-\sqdist\left(\mB_k, \mY^{(t+1)}\right) \right)$ will be a $(n \times k)$ matrix, where in the $j$-th column, the $i$-th entry will be zero if $\mX_{:,i}$ is not assigned to the $j$-th cluster, that is the $i$-th column of $\mY^{(t+1)}$ does not match the $j$-th column of $\mB_k$; otherwise, the averaging hardmax sets it to $1 / m_j$ where $m_j$ is the number of points assigned to the $j$-th cluster, and thus, the squared Euclidean distance between the $i$-th column of $\mY^{(t+1)}$ and the $j$-th column of $\mB_k$ is zero. Note that $m_j > 0 \forall j \in \iset{k}$ as we assume that there are no empty clusters during Lloyd's iterations. Thus,  $\mX \ahmax\left(-\sqdist\left(\mB_k, \mY^{(t+1)}\right) \right) $ will be a $(d \times k)$ matrix corresponding to the centers $\mC^{(t+1)}$ computed with Lloyd's algorithm (\Cref{alg:lloyds}, line 6). Thus, we have
\begin{equation} \label{eq:cross-self-cupdate} \begin{split}
&
\attned\gamma(\bar{\mC}^{(t)}, \bar{\mX}^{(t+1)}; ( \mI_{\demb}^{d+1:}, \mI_{\demb}^{d+1:}, \mI_{\demb}^{:d} ) )
  + \attned\gamma(\bar{\mC}^{(t)}, \bar{\mC}^{(t)}; ( \mI_{\demb}^{d+1:}, \mI_{\demb}^{d+1:}, -\mI_{\demb}^{:d} ) )
\\
& \quad
= \qa{\mC^{(t+1)} - \mC^{(t)}}{\mathbf{0}_{\ceil{\log_2 k} \times k}},
\end{split}\end{equation}
and adding this to $\bar{\mC}^{(t)} = \qa{\mC^{(t)}}{\mB_k}$ gives us $\bar{\mC}^{(t+1)} = \qa{\mC^{(t+1)}}{\mB_k}$, thus tracking the same cluster centers as in Lloyd's algorithm.

Thus we have established that both the cluster assignments and the cluster center updates of Lloyd's algorithm can be maintained in $(\bar{\mX}^{(t)}, \bar{\mC}^{(t)})$ for any $t$ with the leaner token embedding with $\demb = d + \ceil{\log_2 k}$.
\end{proof}

Note that this construction for the $\bar{\mX}^{(t)} \to \bar{\mX}^{(t+1)}$ is the same as in the proof of \citet[Theorem 2]{clarkson2026transformer}. The construction for the $\bar{\mC}^{(t)} \to \bar{\mC}^{(t+1)}$ update is different from the one in \citet{clarkson2026transformer} as we make use of negative squared Euclidean distance based attention with $\gamma = \infty$ instead of a linear attention as in \citet{clarkson2026transformer}.

\section{Single-Step and Multi-Step Clustering} \label{asec:multi-step}

We set up the learning problem in \cref{eq:learn-obj} as a single-step clustering task where the transformer model generates cluster prototypes in one-shot (with a single forward pass through the model) such that the $k$-means objective (actually a surrogate of it) is reduced. However, almost all clustering algorithms are iterative in nature, where the cluster prototypes are iteratively refined in a way that usually progressively improves the clustering objective; Lloyd's algorithm for $k$-means clustering~\citep{lloyd1982least} and  partitioning around medoids or PAM~\citep{kaufman1990partitioning} are examples of such algorithms. Thus, it is natural to explore whether we can utilize this trained transformer for multi-step clustering, with the cluster prototypes being iteratively refined. Given the theoretical results in \Cref{thm:lloyds-exp} and \citet[Theorem 1]{clarkson2026transformer}, it would seem natural to perform $T$-step clustering refinement by applying the attention-based updates in \cref{def:updates} iteratively for $T$ steps with the learned model parameters, starting from $(\bar{\mX}^{(0)}, \bar{\mC}^{(0)}) \gets ( E_X(\mX), E_C(\mC) )$, and use the first $d$ rows $\bar{\mC}_{:d,:}^{(T)}$ of the $T$-step output $(\bar{\mX}^{(T)}, \bar{\mC}^{(T)})$ as the final set of prototypes. Note that the token embedding scheme can be either the $(\eohx, \eohc)$ scheme in \eqref{eq:tk-emb-v1} or the proposed $(\ebnx, \ebnc)$ scheme in \eqref{eq:tk-emb-v2} in \Cref{def:tk-emb-v2}.

However, it is important to note that this is valid only if the transformer model parameters are as in the constructive proofs of these theoretical results, and if we are using the softmax attention with the inverse temperature $\gamma \to \infty$. We have trained the transformer model with finite $\gamma$.

Thus, in the following, we precisely explain what we mean by {\em single-step clustering} (used during training) vs {\em multi-step clustering} (used for evaluating the model as a clustering algorithm). First, we will demonstrate how standard algorithms such as Lloyd's perform multi-step clustering by applying the same operation iteratively on evolving inputs (data and current centers). Then we will use the same intuition to motivate how we can similarly use the model iteratively on evolving inputs.

For a clustering task with data $\mX$ and initial centers $\mC^{(0)}$, $T$ iterations of Lloyd's algorithm (\cref{alg:lloyds}) does the following:
\begin{itemize}
\item $[\text{Step 1}] \mX, \mC^{(0)} \to \text{assign/center-update} \to \mC^{(1)}$,
\item $[\text{Step 2}] \mX, \mC^{(1)} \to \text{assign/center-update} \to \mC^{(2)}$,
\item $\cdots$
\item $[\text{Step $T$}] \mX, \mC^{(T-1)} \to \text{assign/center-update} \to \mC^{(T)}$.
\end{itemize}

With a trained model $\cmodel(\cdot, \cdot; \Theta)$ which uses token embeddings (\oh/\bn) and attention based updates in \eqref{eq:updates}, we can perform $T$ iterations of multi-step clustering in the same spirit as the above iterative Lloyd's algorithm by applying the same model $\cmodel$ on evolving inputs (data and current centers):
\begin{itemize}
\item $[\text{Step 1}] \mX, \mC^{(0)} \to \cmodel(\mX, \mC^{(0)}; \Theta) \to \mC^{(1)}$,
\item $[\text{Step 2}] \mX, \mC^{(1)} \to \cmodel(\mX, \mC^{(1)}; \Theta) \to \mC^{(2)}$,
\item $\cdots$
\item $[\text{Step $T$}] \mX, \mC^{(T-1)} \to  \cmodel(\mX, \mC^{(T-1)}; \Theta) \to \mC^{(T)}$.
\end{itemize}

This follows the iterative nature of Lloyd's algorithm, and this is what is evaluated in \cref{fig:multi-step} [right], \cref{fig:vardist-gen}, \cref{fig:varfam-gen}, and \cref{fig:lengen-nsamples}. With single-step clustering, we refer to the fact that, during training, this model is executed on each training clustering task for only 1 step, and the training loss is computed with $\mC^{(1)}$, the output of the [Step 1] update. This makes the training extremely efficient as we are training a single transformer layer without any recursion. However, we see that this model learns some clustering mechanisms as we can reapply it for $T$ and progressively improve clustering quality, much like applying the "assign/center-update" step in Lloyd's algorithm iteratively.

\Cref{fig:multi-step} [left] is ablating the role of the token embedding in $\cmodel$ by evaluating an alternate form of multi-step clustering that only applies the attention updates in \eqref{eq:updates} iteratively as follows:
\begin{itemize}
\item $[\text{Step 0}] \mX, \mC^{(0)} \to \text{\eqref{eq:tk-emb-v1}/\eqref{eq:tk-emb-v2}} \to \bar{\mX}^{(0)}, \bar{\mC}^{(0)}$,
\item $[\text{Step 1}] \bar{\mX}^{(0)}, \bar{\mC}^{(0)} \to \text{eq. \eqref{eq:updates}} \to  \bar{\mX}^{(1)}, \bar{\mC}^{(1)}$,
\item $[\text{Step 2}] \bar{\mX}^{(1)}, \bar{\mC}^{(1)} \to \text{eq. \eqref{eq:updates}} \to  \bar{\mX}^{(2)}, \bar{\mC}^{(2)}$,
\item $\cdots$,
\item $[\text{Step T}] \bar{\mX}^{(T-1)}, \bar{\mC}^{(T-1)} \to  \text{eq. \eqref{eq:updates}} \to  \bar{\mX}^{(T)}, \bar{\mathbf C}^{(T)}$,
\item $[\text{final step}] \bar{\mathbf C}^{(T)} \to \mathbf C^{(T)}$.
\end{itemize}

The results in \cref{fig:multi-step} [left] shows that just applying the attention updates recursively does not improve the clustering quality, and both the token embeddings \eqref{eq:tk-emb-v1}/\eqref{eq:tk-emb-v2} and the attention updates \eqref{eq:updates} in the trained model $\cmodel(\cdot, \cdot; \Theta)$ are essential for the trained transformer to successfully cluster iteratively.

\section{Clustering Task Generation} \label{asec:ctasks}

To train the transformer, we generate clustering tasks $(\mX, \mC) \sim \gD$ from a distribution $\gD$ where $\mX$ denotes the points to be clustered, and $\mC$ denotes the initial cluster centers. We consider mixture distributions (such as mixture of \texttt{normal} distributions or mixture of \texttt{cauchy} distributions, etc) to generate $\mX \in [0,1]^{d \times n}$ in the following manner:
\begin{itemize}
\item Randomly sample $k$ centers $\{\vl_j \sim \gU(0, 1)^d, j \in \iset{k} \}$ in $d$ dimensions uniformly from the unit cube $[0,1]^d$
\item Randomly sample $k$ $d$-dimensional scales $\{\vs_j \sim \gU(0,1)^d, j \in \iset{k} \}$ if using anisotropic distributions else $\vs_j = \mathbf{1}_d \forall j \in \iset{k}$.
\item Randomly sample $k$ weights $\{w_j, j \in \iset{k}\}$ from the $k$-dimensional simplex $\Delta^k$ if using unequally weighted mixtures else $w_j = 1/k \forall i \in \iset{k}$
\item Create a mixture distribution $\gT \triangleq \{(\vl_j, \vs_j, w_j)_{j \in \iset{k}} \}$ and sample $n$ samples $\vx_i \sim \gT$ in $d$ dimensions to get $\hat{\mX} \in \R^{d \times n}$.
\item $\mX \gets \texttt{MinMaxScaler}(\hat{\mX})$
\end{itemize}

Note that, as the data dimensionality $d$ increases, the diameter of the set of samples increases. Given the set of samples, we randomly sample $k$ points from $\mX$ to define the set of initial centers $\mC \in [0,1]^{d \times k}$.

The motivation behind generating clustering tasks where $\mX \in [0,1]^{d \times n}, \mC \in [0,1]^{d \times k}$ is that this aligns with our theoretical analyses, and any general $k$-means clustering problem with $\hat\mX \in \R^{d \times n}, \hat\mC \in \R^{d \times k}$ can be mapped accordingly given the translation and scaling invariance of the $k$-means clustering problem. Let $\vv_{\textsf{lb}} \in \R^d$ be the vector of the per-dimension minimum values for all the points in $\hat\mX$ and $\hat\mC$ combined. Then setting $\bar\mX \gets \hat\mX - \vv_{\textsf{lb}} \mathbf{1}_n^\top, \bar{\mC} \gets \hat\mC - \vv_{\textsf{lb}} \mathbf{1}_k^\top$ moves all the points and centers to the nonnegative quadrant $\R_{\geq 0}^d$. Let $v_{\textsf{ub}} \in \R_+$ be the single largest scalar value in $\bar\mX, \bar\mC$. We assume that this $v_{\textsf{ub}} > 0$, otherwise the problem is trivial with $\hat\mX = \mathbf{0}_{d \times n} , \mC = \mathbf{0}_{d \times k}$. Then setting $\mX \gets \bar\mX / v_{\textsf{ub}}, \mC \gets \bar\mC / v_{\textsf{ub}}$ ensures that $\mX \in [0,1]^{d \times n}, \mC \in [0,1]^{d \times k}$ while the $k$-means problem remains the same due to the translation and scaling invariance.

\section{Computational Resources and Implementation} \label{asec:compute}

All experiments were executed on a Intel i7 Core CPU (16 threads, 64GB memory), and a NVIDIA V100 GPU (32GB memory). Each experiment was executed with 10 random seeds and all results are aggregated across these 10 trials. The implementation is in Pytorch and available at this \href{https://github.com/rithram/kmeans-trf}{GitHub repository} (\url{https://github.com/rithram/kmeans-trf}). We train models with 300 different configurations --- 50 for each of the 6 different combinations of regularized $\min$, softmax vs sparsemax, and initial token embedding, one-hot (\oh) vs binary (\bn) vs none (\na). With 10 repetitions for each, we train a total of 3000 models, each taking around 15-20 minutes to train, for a total training compute time of 1000 hours. The loss surface generation for \cref{fig:lsurfs:normal:k6L2} took about 15 minutes per model for a total of 1 hour of compute. Each of the inference evaluations in \cref{sec:emp} took around 5 minutes each for total of $(25+2+4)\times 5 \approx 150$ minutes. The timed runs in \cref{fig:enc-profile} and \cref{tab:enc-profile} in \cref{sec:expressivity} and \cref{fig:train-time} in \cref{asec:ablations} are performed on a NVIDIA RTX PRO 3000 Blackwell Generation Laptop GPU.

\section{Ablative Experiments} \label{asec:ablations}

In this section, we present ablations across various axes, presenting the training loss trajectory as the training progresses, and the relative validation loss. Given a transformer parameterized with $\Theta$, for an input $(\mX, \mC) \sim \gD$ sampled from the distribution $\gD$ of clustering tasks, we will denote the updated centers after a single transformer layer as $\bar{\mC}(\Theta)$. Then, the training loss $\E_{(\mX, \mC) \sim \gD} \tilde{\Ls}_\Omega^\tau(\mX, \bar{\mC}(\Theta) )$ (defined in \cref{eq:learn-obj}) is the smoothed upper-bound of the desired $k$-means loss $\E_{(\mX, \mC) \sim \gD} \Ls(\mX, \bar{\mC}(\Theta) )$ (defined in \cref{eq:kmeans}) with $\Ls(\mX, \bar{\mC}(\Theta) ) \leq \tilde{\Ls}_\Omega^\tau (\mX, \bar{\mC}(\Theta) )$, and the inequality approaching equality as the temperature $\tau \to 0$. We define the {\em relative validation loss} as the ratio of the $k$-means objective with the centers $\bar{\mC}(\Theta)$ obtained with a single transformer layer to the $k$-means objective with the centers $\mC^\star$ generated from a single iteration of Lloyd's algorithm (\Cref{alg:lloyds}) with the same set of points $\mX$ and initial centers $\mC$. Thus the relative validation error is given by
\begin{equation} \label{eq:rel-val-loss}
\E_{(\mX, \mC) \sim \gD} \frac{\Ls( \mX, \bar{\mC}(\Theta) ) }{ \Ls (\mX, \mC^\star ) }.
\end{equation}
Note that, for estimating the training loss and the relative validation loss, we use different sets of clustering tasks, and the clustering tasks in the validation set are not used for training. A relative validation error of less than 1 denotes that the single transformer layer outperforms Lloyd's algorithm (on average) when considered for a single iteration.

Each figure in these ablations has 6 curves corresponding to (i)~the $\min$-regularizer $\Omega$ --- the negative Shannon entropy or NE-regularizer (denoted by {\bf NE}) vs the L2-regularizer (denoted by {\bf L2}), and (ii)~the use of the one-hot initial token embeddings $(\eohx, \eohc)$ defined in \eqref{eq:tk-emb-v1} with $\demb = d + k$ and denoted by {\bf OH} vs the binary initial token embeddings $(\ebnx, \ebnc)$ defined in \eqref{eq:tk-emb-v2} with $\demb = d + \ceil{\log_2 k }$ and denoted by {\bf BN} vs no initial token embedding with $\demb = d$ denoted by {\bf NA}. Thus we have 6 lines corresponding to labels \textcolor{xkcdAppleGreen}{BN:NE}, \textcolor{xkcdOliveGreen}{BN:L2}, \textcolor{xkcdAquaMarine}{NA:NE}, \textcolor{xkcdPeacockBlue}{NA:L2}, \textcolor{xkcdSalmon}{OH:NE}, \textcolor{xkcdDarkOrange}{OH:L2}. Each experiment is executed for 10 trials with different random seeds, and each curve reports the median performance metrics, with a ribbon around it denoting the inter-quartile range of the performance metrics.

\begin{figure}[htb]
\centering
\begin{subfigure}{0.49\textwidth}
\centering
\includegraphics[width=\textwidth]{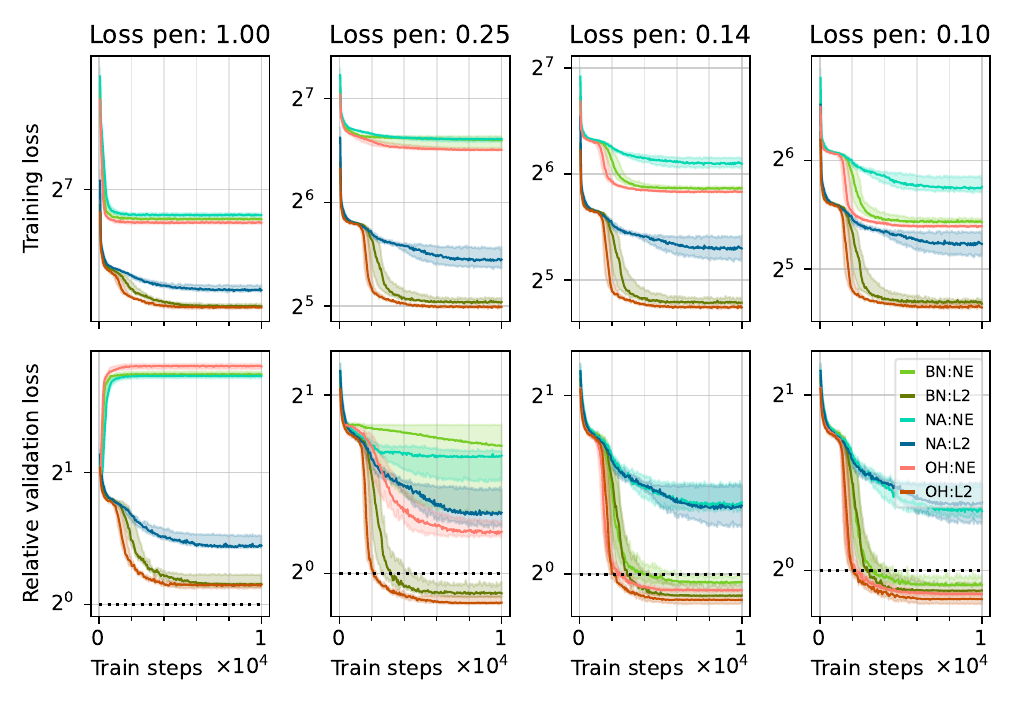}
\caption{Clustering tasks in $d = 4$ dimensions.}
\label{fig:ablations:loss-itemp-d4}
\end{subfigure}
~
\begin{subfigure}{0.49\textwidth}
\centering
\includegraphics[width=\textwidth]{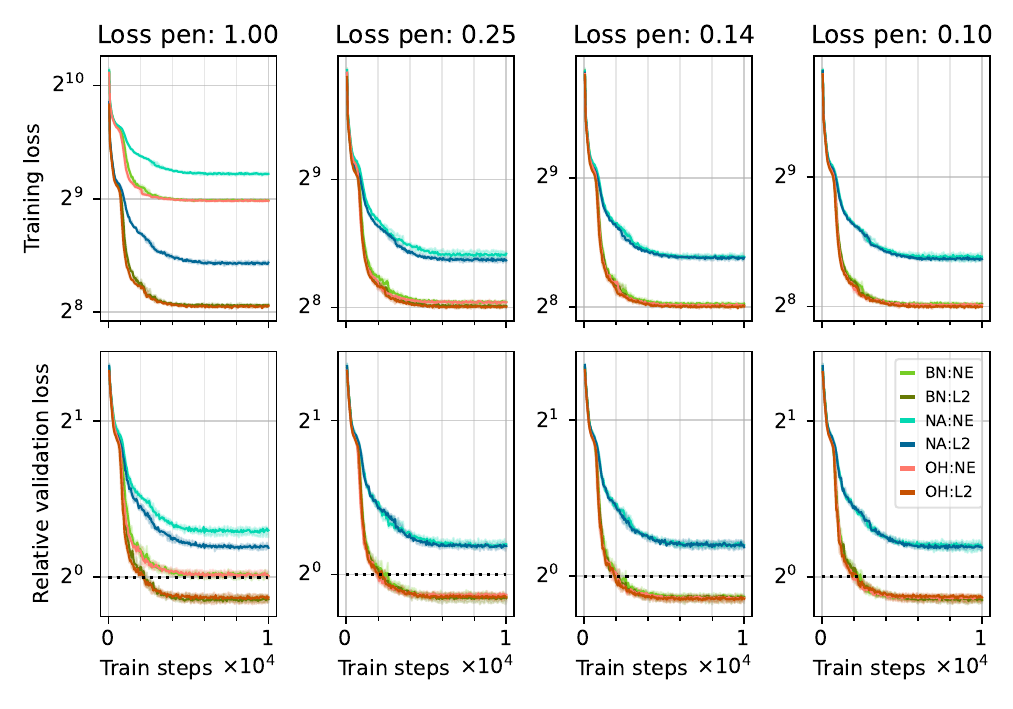}
\caption{Clustering tasks in $d = 32$ dimensions.}
\label{fig:ablations:loss-itemp-d32}
\end{subfigure}
\caption{Training loss convergence (top row) and relative validation loss from \cref{eq:rel-val-loss} (bottom row) for varying values of the $\min$-regularization penalty $\tau$ in the training loss $\tilde{\Ls}_\Omega^\tau$ (the smoothed upper-bound of the discrete non-differentiable $k$-means objective $\Ls$). This is an extended version of \Cref{fig:tau-var}. We consider $\tau \in \{ 1, 0.25, 0.14, 0.1 \}$. These set of results correspond to a training with clustering tasks involving $n = 512$ points and $k = 6$ clusters with $d = 4$ (\cref{fig:ablations:loss-itemp-d4}) and $d = 32$ (\cref{fig:ablations:loss-itemp-d32}) dimensions. Here the initial learning rate $\eta=0.01$ for the Adam optimizer with $M = 10000$ training steps with a task batch-size $B = 32$.}
\label{fig:ablations:loss-itemp}
\end{figure}

\paragraph{Varying the $\min$-regularization penalty $\tau$ in the smoothed loss $\tilde{\Ls}_\Omega^\tau$.}
In \Cref{fig:ablations:loss-itemp}, we present the effect of varying the $\min$-regularization penalty $\tau$ in the smoothed upper-bound $\tilde{\Ls}_\Omega^\tau$, highlighting that, as the penalty $\tau$ decreases, the training loss (the upper-bound) becomes tighter (and thus has lower values), and the performance on the validation tasks actually starts matching or even outperforming 1-step Lloyd's algorithm. Note that, for the larger value of $d$, the number of dimensions, the $\min$-regularization penalty $\tau$ does not need to be too low for strong performance. This result also highlights that the L2-regularized $\min$ provides tighter upper-bounds, which translate to better validation performance, with the difference most apparent at large values of $\tau$ and the smaller data dimensionality of $d=4$, when the penalization is stronger in general, and the upper-bounds are looser but smoother. The training loss converges faster at the higher values of $\tau$, though the validation performance highlights that the faster training loss convergence does not necessary guarantee strong validation performance because of the gap between the smoothed upper-bound and the actual discrete clustering objective we care about.

\begin{figure}[htb]
\centering
\begin{subfigure}{0.49\textwidth}
\centering
\includegraphics[width=\textwidth]{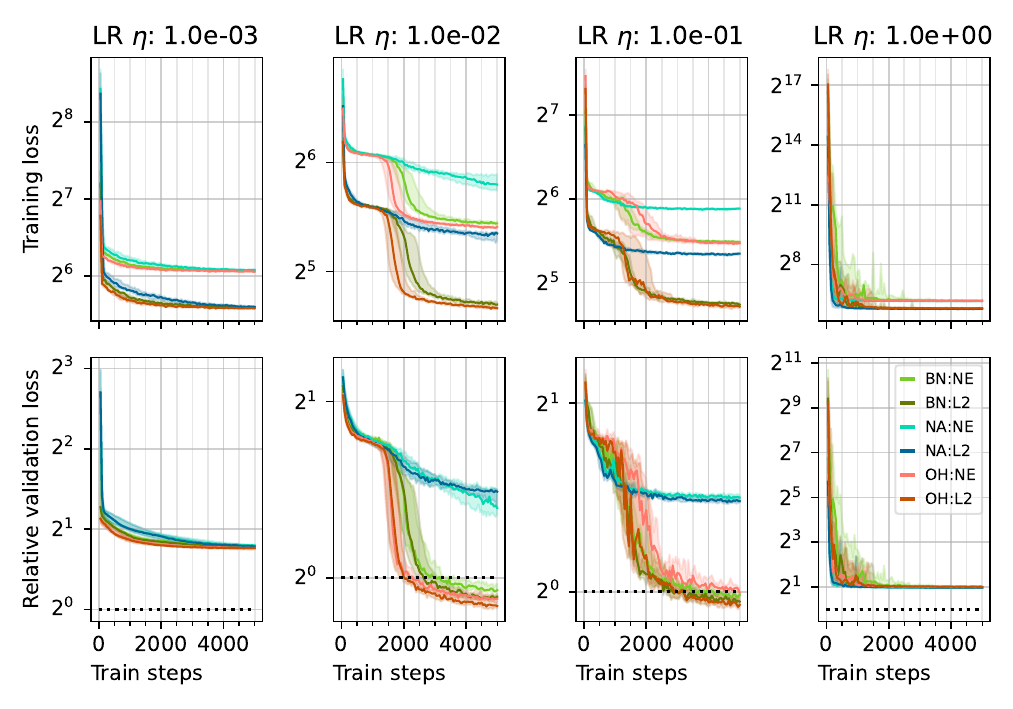}
\caption{Clustering tasks in $d = 4$ dimensions.}
\label{fig:ablations:lrate-d4}
\end{subfigure}
~
\begin{subfigure}{0.49\textwidth}
\centering
\includegraphics[width=\textwidth]{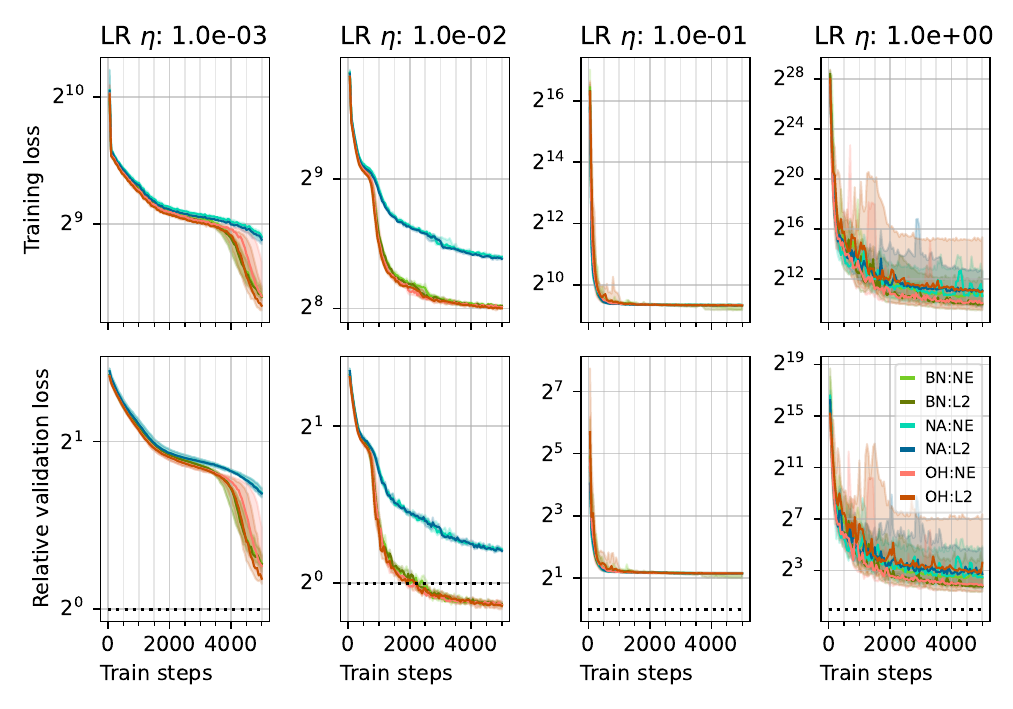}
\caption{Clustering tasks in $d = 32$ dimensions.}
\label{fig:ablations:lrate-d32}
\end{subfigure}
\caption{Training loss convergence (top row) and relative validation loss from \cref{eq:rel-val-loss} (bottom row) for varying values of the initial learning rate $\eta$ for the Adam optimizer selected from $\{0.001, 0.01, 0.1, 1.0\}$ with $M=5000$ training steps with a task batch-size $B=32$. These set of results correspond to a training with clustering tasks involving $n = 512$ points and $k = 6$ clusters with $d = 4$ (\cref{fig:ablations:lrate-d4}) and $d = 32$ (\cref{fig:ablations:lrate-d32}) dimensions with a $\min$-regularization penalty $\tau = 0.1$.}
\label{fig:ablations:lrate}
\end{figure}

\paragraph{Varying the learning rate $\eta$ in \Cref{alg:training}.}
In \Cref{fig:ablations:lrate}, we present the effect of varying the initial learning rate in the Adam optimizer. The results indicate that, for low and moderately high dimensions $d$, a low learning rate of 0.001 shows monotonic convergence but is slower than a learning rate of 0.01. On the other hand, larger values of learning rates such as 0.1 and 1.0 show that both the training and the validation losses do not decrease monotonically, especially at the higher $d=32$ dimensions. The larger learning rates ($\eta = 0.1, 1.0$ for $d=32$ and $\eta=1.0$ for $d=4$) converge to a worse local minima, and thus do not generalize well. In all cases where the training converges favourably, the L2-regularized $\min$ continues to converge to a better minima (both in terms of training loss and validation) than the NE-regularized $\min$ (controlling for all other factors).

\begin{figure}[htb]
\centering
\includegraphics[width=\textwidth]{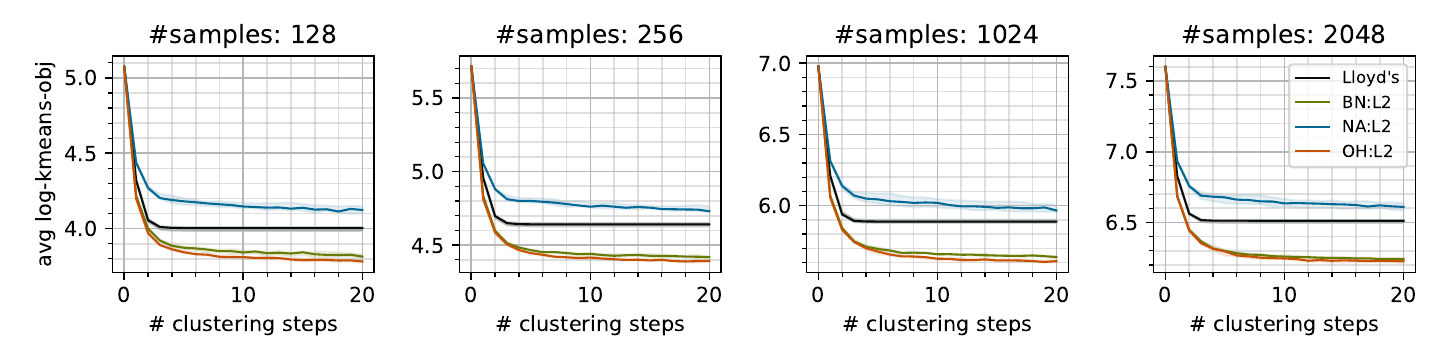}
\caption{{\bf ``Length-generalization'' of learned models.} We consider models trained with $n=512, d = 32, k=10, \tau = 0.1$ utilized for $T=20$ step clustering on tasks with $\{128, 256, 1024, 2048\}$ points. We only consider L2-regularized $\min$ and the three forms of token embeddings, \oh, \bn and \na as described in \Cref{fig:tau-omega-demb-dim-analysis}.
The performance of Lloyd's algorithm is plotted in black.}
\label{fig:lengen-nsamples}
\end{figure}

\paragraph{Evaluating the ``length generalization'' of models, learning on one $n$, and generalizing to other values of $n$}
Here we evaluate the ability of these trained models to length generalize --- the ability of models to be trained on problems of a certain size, but generalize to problems of different (often larger) sizes. In the context of clustering, we train the models on (single-step) clustering tasks with $n=512$ samples (for a mixture of \texttt{normal} distributions), and then evaluate their (multi-step) clustering performance on problem with varying number of samples (again from the same distribution as it was trained on) in \Cref{fig:lengen-nsamples}. The results show that {\em the learned model can generalize to larger clustering problems}. The models using the \eqref{eq:tk-emb-v1} and \eqref{eq:tk-emb-v2} token embeddings outperform Lloyd's iterations, both for smaller problems (with 128 and 256 samples) and larger problems (with 1024 and 2048 samples).

\begin{figure}[htb]
\centering
\begin{subfigure}{0.49\textwidth}
\centering
\includegraphics[width=\textwidth]{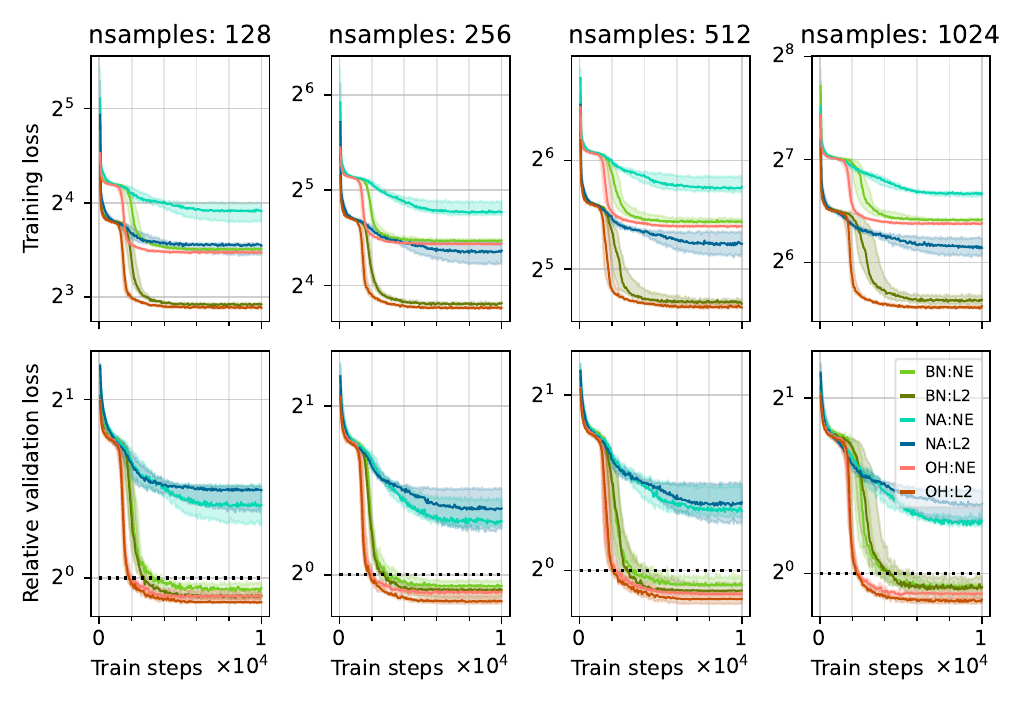}
\caption{Clustering tasks in $d = 4$ dimensions.}
\label{fig:ablations:nsamples-d4}
\end{subfigure}
~
\begin{subfigure}{0.49\textwidth}
\centering
\includegraphics[width=\textwidth]{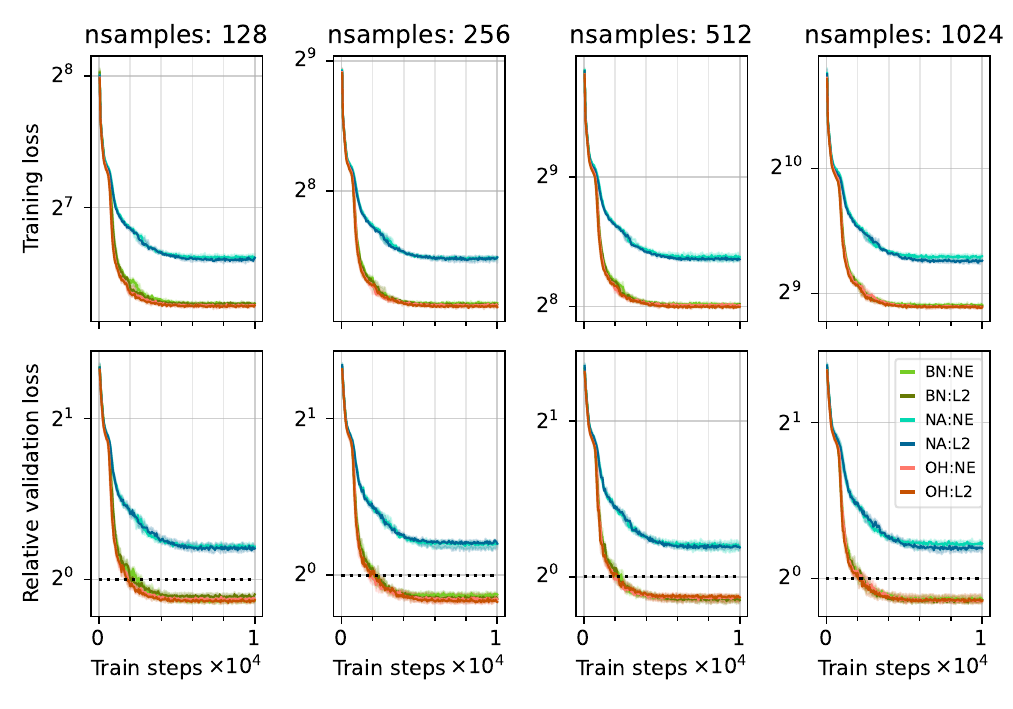}
\caption{Clustering tasks in $d = 32$ dimensions.}
\label{fig:ablations:nsamples-d32}
\end{subfigure}
\caption{Training loss convergence (top row) and relative validation loss from \cref{eq:rel-val-loss} (bottom row) for varying values of the number of points $n$ in the clustering tasks used for training the transformers, with $n \in \{ 128, 256, 512, 1024 \}$. These set of results correspond to a training with clustering tasks involving $k = 6$ clusters with $d = 4$ (\cref{fig:ablations:nsamples-d4}) and $d = 32$ (\cref{fig:ablations:nsamples-d32}) dimensions. Here we use the initial learning rate $\eta=0.01$ for the Adam optimizer with $M = 10000$ training steps, a task batch-size $B = 32$, and the $\min$-regularization penalty $\tau=0.1$.}
\label{fig:ablations:nsamples}
\end{figure}

\paragraph{Varying the number of points $n$ to be clustered in the training tasks.}
In \Cref{fig:ablations:nsamples}, we study effect of varying the number of points $n$ in the clustering tasks $(\mX, \mC)$ with $\mX \in \R^{d \times n}$ used during training, selecting $n \in \{ 128, 256, 512, 1024 \}$ for clustering tasks in $d \in \{4, 32\}$ dimensional Euclidean space. The results indicate that the number of points in the clustering tasks do not seem to have a significant effect on the training convergence or in-distribution generalization for any data dimensionality $d$, $\min$-regularizer $\Omega$ or initial token embedding scheme (\oh vs \bn vs \na). This combined with our length-generalization results in \Cref{fig:lengen-nsamples} indicate that a good strategy would be to train the transformer with clustering tasks with a small $n$ as that would be computationally efficient, and still be able to use this transformer for larger clustering tasks with larger number of points. Note that, for a fixed $d$ and $k$ and initial token embedding choice (\oh vs \bn vs \na), the number of learnable parameters remains the same as we vary $n$ in the training clustering tasks.

\begin{figure}[htb]
\centering
\begin{subfigure}{0.49\textwidth}
\centering
\includegraphics[width=\textwidth]{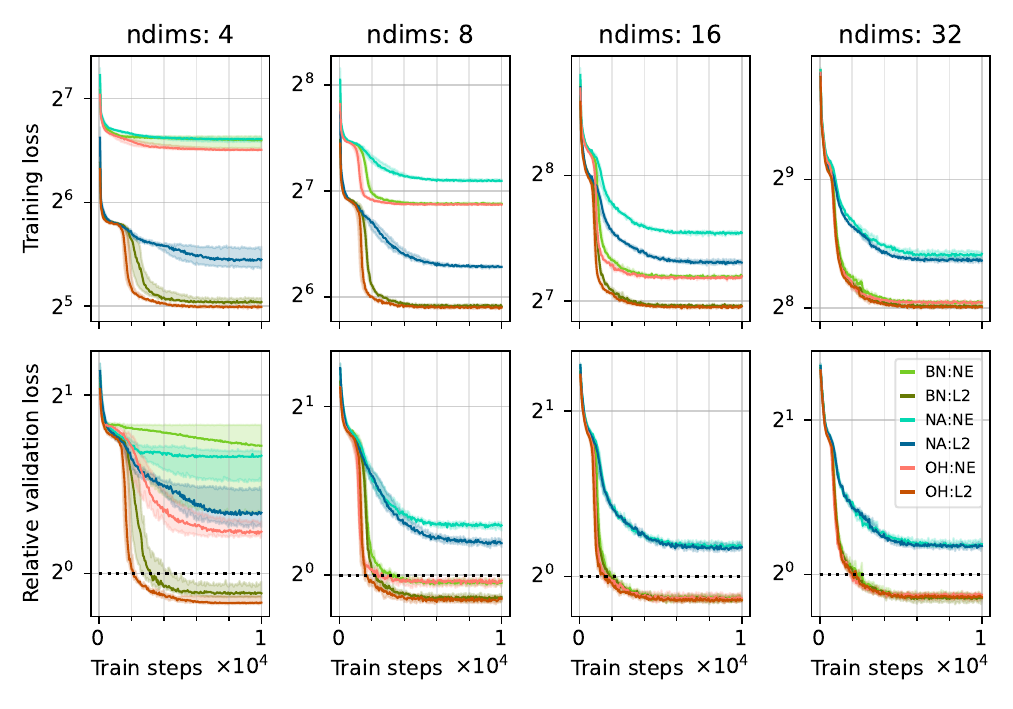}
\caption{$\min$-regularization penalty $\tau = 0.25$.}
\label{fig:ablations:ndims-litemp4}
\end{subfigure}
~
\begin{subfigure}{0.49\textwidth}
\centering
\includegraphics[width=\textwidth]{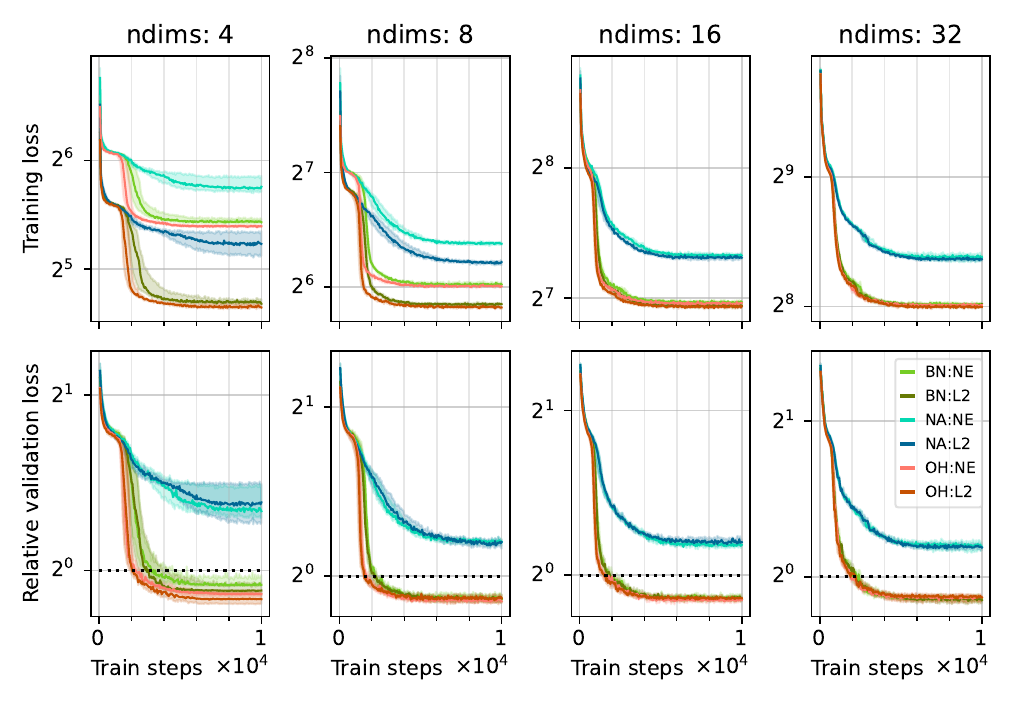}
\caption{$\min$-regularization penalty $\tau = 0.1$.}
\label{fig:ablations:ndims-litemp10}
\end{subfigure}
\caption{Training loss convergence (top row) and relative validation loss from \cref{eq:rel-val-loss} (bottom row) for varying values of the dimensionality $d$ of the clustering tasks used for training the transformers, with $d \in \{ 4, 8, 16, 32 \}$. This is an extended version of \Cref{fig:dim-var}. These set of results correspond to a training with clustering tasks involving $n=512$ points, $k = 6$ clusters, trained with the smoothed loss with a loss $\min$-regularizer penalty $\tau = 0.25$ (\cref{fig:ablations:ndims-litemp4}) and $\tau=0.1$ (\cref{fig:ablations:ndims-litemp10}). Here the initial learning rate $\eta = 0.01$ for the Adam optimizer with $M=10000$ training steps, each with a task batch-size $B=32$.}
\label{fig:ablations:ndims}
\end{figure}

\paragraph{Varying the data dimensionality $d$ of the clustering tasks.}
In \Cref{fig:ablations:ndims}, we study the effect of the data dimensionality $d$ of the clustering tasks used to train the transformer. We consider two different values for the $\min$-regularization penalty $\tau$, with smaller value of $\tau$ implying tighter upper-bounds to the discrete clustering loss. For the larger penalty value of $\tau = 0.25$ in \Cref{fig:ablations:ndims-litemp4}, we can see that the L2-regularized $\min$ based smoothed loss converges and generalizes better than the NE-regularized $\min$ especially at a low $d  = 4$; also note that the \oh token embedding converges faster than the \bn token embeddings as previously discussed (though both generalize comparably). However, as the data dimensionality $d$ increases, this difference between L2-regularized and NE-regularized $\min$ vanishes. This is because, as we consider all $\mX \in [0,1]^{d \times n}$ and $\mC \in [0,1]^{d \times k}$, the pairwise squared Euclidean distances generally increases with $d$, and thus the regularized $\min$ produces tighter upper-bounds with increasing $d$ for a fixed $\min$-regularization penalty $\tau$. We will discuss this further in \Cref{asec:reg-argmax}. Note that the number of parameters in the transformer model grows quadratically with $d$, but we do not see any clear adverse effect of this increase in the number of learnable parameters.

\begin{figure}[!!htb]
\centering
\begin{subfigure}{0.49\textwidth}
\centering
\includegraphics[width=\textwidth]{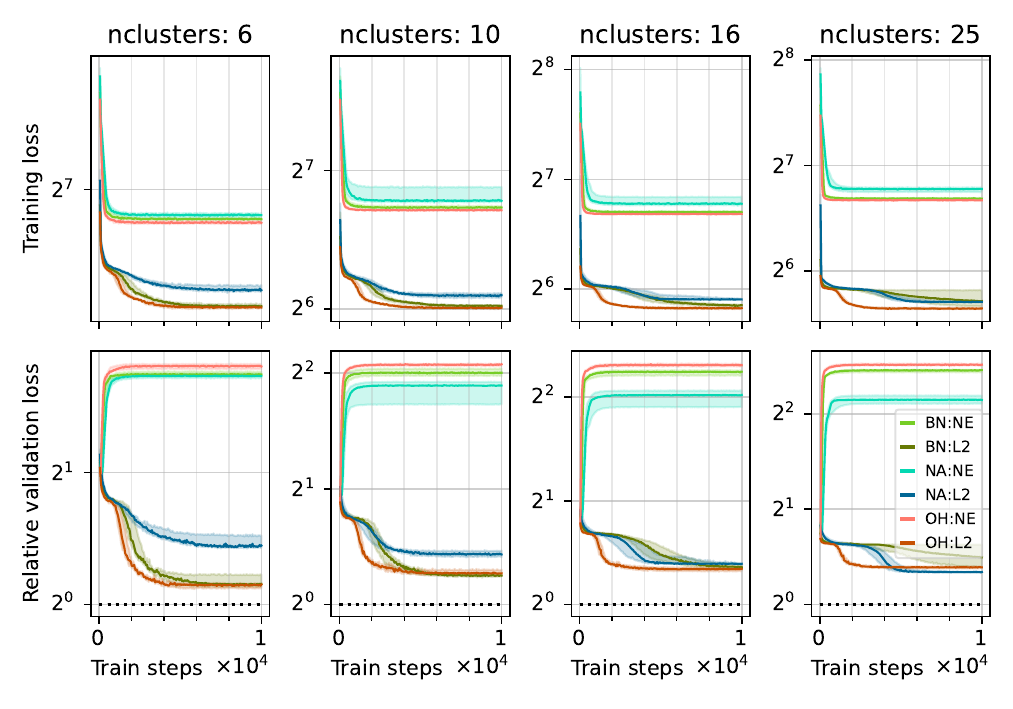}
\caption{$d = 4, \tau = 1$.}
\label{fig:ablations:nclusters-d4litemp1}
\end{subfigure}
~
\begin{subfigure}{0.49\textwidth}
\centering
\includegraphics[width=\textwidth]{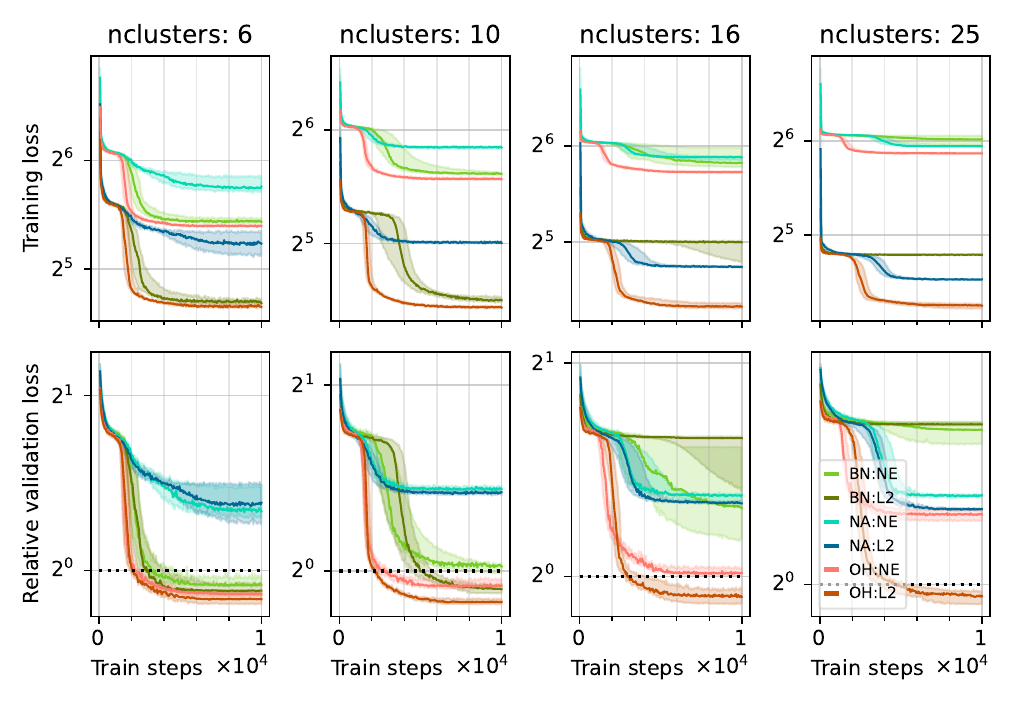}
\caption{$d = 4, \tau = 0.1$.}
\label{fig:ablations:nclusters-d4litemp10}
\end{subfigure}
~
\begin{subfigure}{0.49\textwidth}
\centering
\includegraphics[width=\textwidth]{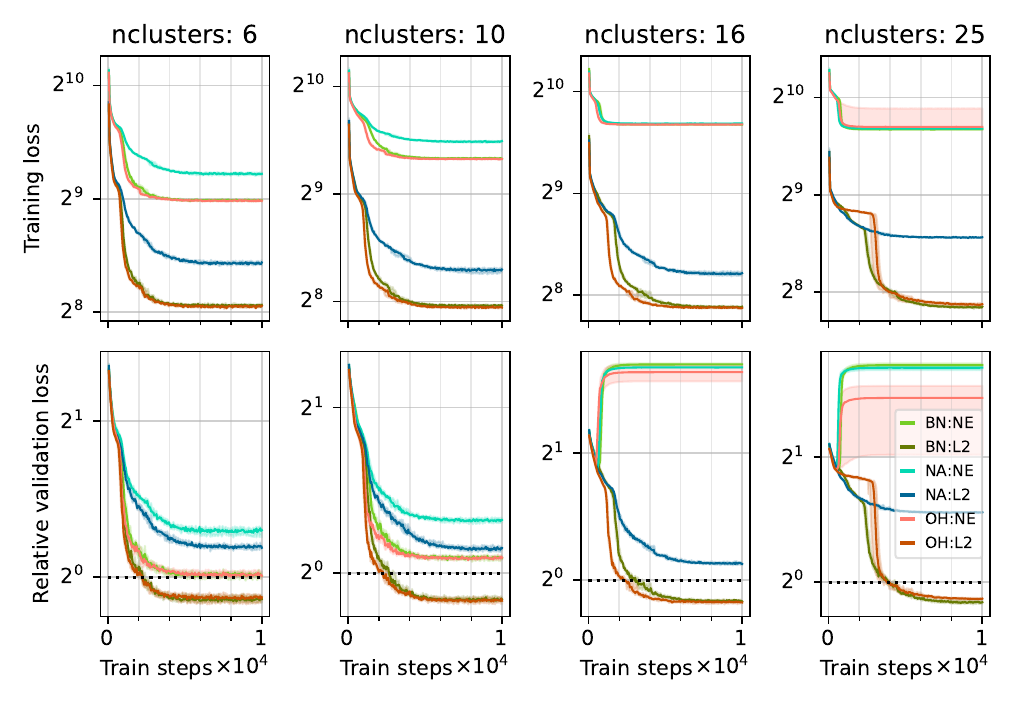}
\caption{$d = 32, \tau = 1$.}
\label{fig:ablations:nclusters-d32litemp1}
\end{subfigure}
~
\begin{subfigure}{0.49\textwidth}
\centering
\includegraphics[width=\textwidth]{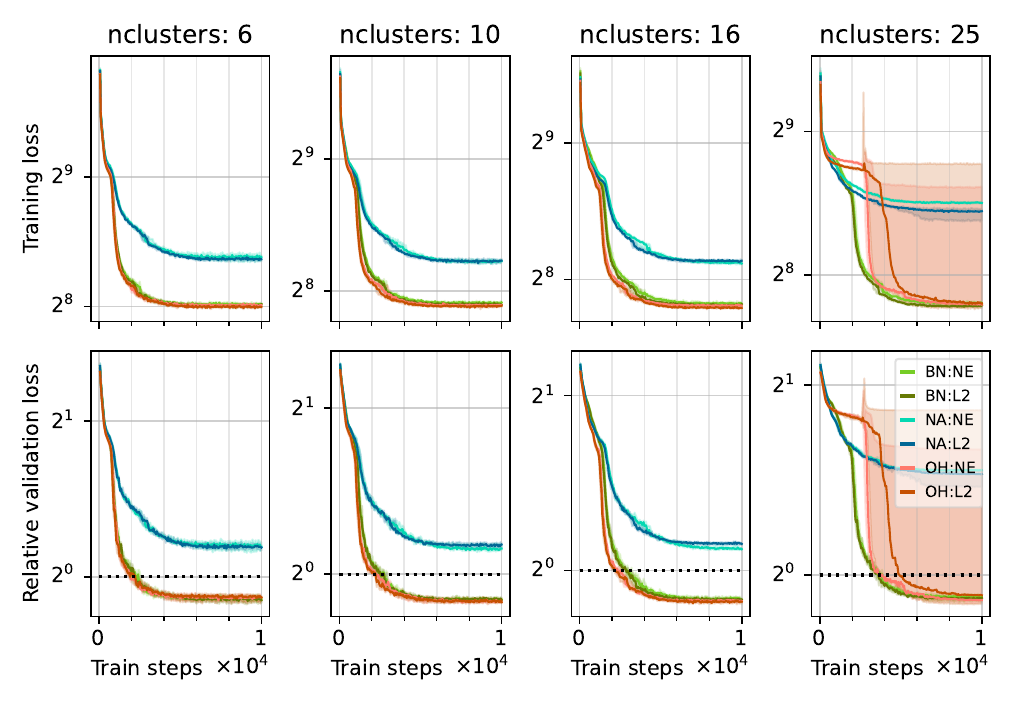}
\caption{$d = 32, \tau = 0.1$.}
\label{fig:ablations:nclusters-d32litemp10}
\end{subfigure}
\caption{Training loss convergence (top row) and relative validation loss from \cref{eq:rel-val-loss} (bottom row) for varying values of the number of clusters $k$ in the clustering tasks used for training the transformers, with $k \in \{ 6, 10, 16, 25 \}$. These set of results correspond to a training with clustering tasks with $n = 512$ points. To highlight the interplay of the dimensionality $d$ and the $\min$-regularization penalty $\tau$, we consider four cases: $d = 4, \tau = 1$ (\cref{fig:ablations:nclusters-d4litemp1}),  $d = 4, \tau = 0.1$ (\cref{fig:ablations:nclusters-d4litemp10}),  $d = 32, \tau = 1$ (\cref{fig:ablations:nclusters-d32litemp1}), and  $d = 32, \tau = 0.1$ (\cref{fig:ablations:nclusters-d32litemp10}). In all cases, the initial learning rate $\eta = 0.01$ for the Adam optimizer with $M=10000$ training steps each with a task batch-size $B = 32$.}
\label{fig:ablations:nclusters}
\end{figure}

\paragraph{Varying the number of clusters $k$ in the training tasks.}
In \Cref{fig:ablations:nclusters}, we carefully study the effect of varying the number of clusters $k$ in the training tasks to learning convergence and generalization. To understand the interplay of the number of clusters $k$, data dimensionality $d$ and the $\min$-regularization penalty $\tau$, we consider a pair of values of $d$ and for $\tau$, and for each of the 4 combinations, we vary $k \in \{6, 10, 16, 25 \}$.
First, in \Cref{fig:ablations:nclusters-d4litemp1}, we consider a low $d = 4$ and high penalty $\tau=1$, implying a looser smoother upper-bound. In this case, the L2-regularized $\min$ is able to learn and somewhat generalize, but is unable to match the results of a single Lloyd's iteration (the black dotted line) for any value of $k$; the NE-regularized $\min$ is unable to learn at all as is evidenced by the high relative loss on the single-step validation clustering tasks. One observation is that the convergence (and generalization) of \bn token embedding relatively deteriorates with increasing $k$ as one would expect from our results in \Cref{thm:loss-lip-short}, where the Lipschitz constant depends on $\sqrt{d+\ceil{\log_2 k}}$, and thus increases with $k$ (thereby hurting convergence). In contrast, the \oh token embedding does not seem to be affected by the increase in $k$, which corroborates our theoretical results in \Cref{thm:loss-lip-short} where the Lipschitz constant would depend on $(d+1)/\sqrt{d+k}$, and thus is not adversely affected by increasing $k$.
Fixing the dimension $d = 4$, and lowering the $\min$-regularization $\tau = 0.1$ in \Cref{fig:ablations:nclusters-d4litemp10}, thereby using a tighter (less smooth) upper-bound for training, we see that the NE-regularized $\min$ is able to somewhat learn and generalize for smaller values of $k$, with its performance deteriorating with increasing $k$; L2-regularized $\min$ with \oh token embedding does significantly better and outperforms Lloyd's for all values of $k$ considered. For smaller values of $k$, the \bn token embedding is able to outperform Lloyd's but converges and generalizes slower than \oh as our theory suggests; for larger values of $k$, the convergence and generalization of \bn token embedding slows down considerably, and even underperforms the model with no token embedding (\na).
As we increase the data dimensionality to $d = 32$ and utilize a larger $\min$-regularization penalty $\tau = 1$ in \Cref{fig:ablations:nclusters-d32litemp1}, we see that L2-regularized $\min$ with \oh and \bn token embeddings outperforms Lloyd's for all values of $k$. However, with NE-regularized $\min$, the smoothed upper-bound $\tilde{\Ls}_\Omega^\tau$ for a fixed $\tau$ gets progressively looser with larger values of $k$, and this is evident in the worsening validation losses.
Finally, fixing the data dimensionality to $d=32$ and lowering the $\min$-regularization penalty to $\tau = 0.1$ for a tighter clustering loss upper-bound in \Cref{fig:ablations:nclusters-d32litemp10}, we see that both L2-regularized and NE-regularized $\min$ for both \oh and \bn token embeddings outperform Lloyd's on the validation tasks. The only thing of note here is that, at the largest value of $k=25$, the model with the \oh token embeddings experiences intermediate numerical issues for 3/10 random trials in \cref{fig:ablations:nclusters-d32litemp10} which throws the model off, and the training does not recover in the allotted $10000$ steps, thereby broadening the inter-quartile range unfavourably; note that the median performance continues to outperform Lloyd's. We would like to note a couple of points regarding this:
\begin{itemize}
\item First, we did not use the standard practice of gradient clipping in any of our results, and our results indicate that the learning problem is well-structured enough for the training to easily converge in almost all cases (also highlighted by the loss landscapes in \cref{fig:lsurfs:normal:k6L2}).
\item Second, for the 3/10 random trials where the training gets stuck, adding gradient clipping mitigates the issue of the numerical instability, and these previously failing training runs converge as expected consistent with the other random trials. We presented the results in \cref{fig:ablations:nclusters-d32litemp10} without gradient clipping to be consistent across all evaluations.
\item Finally, this only occurs with the \oh token embedding for both NE (softmax) and L2 (sparsemax) regularization, where $d_\text{emb} = d+k = 57$ with $d=32, k=25$, and the training is not able to find the right basin of attraction in this relatively high $O(d_\text{emb}^2) \sim 3600$ dimensional loss landscape once the training is destabilized. In contrast, for the same setup (same random seed and clustering tasks for training), the training converges fine with the \bn token embedding, where $d_{\text{emb}} = d + \lceil \log_2 k \rceil =  37$ with $d=32,  k = 25, \lceil \log_2 k \rceil = 5$, implying that the loss landscapes are easier to navigate in a relatively lower dimensionality $O(d_\text{emb}^2) \sim 1600$.
\end{itemize}

\begin{figure}[!!!htb]
\centering
\begin{subfigure}{0.49\textwidth}
\centering
\includegraphics[width=\textwidth]{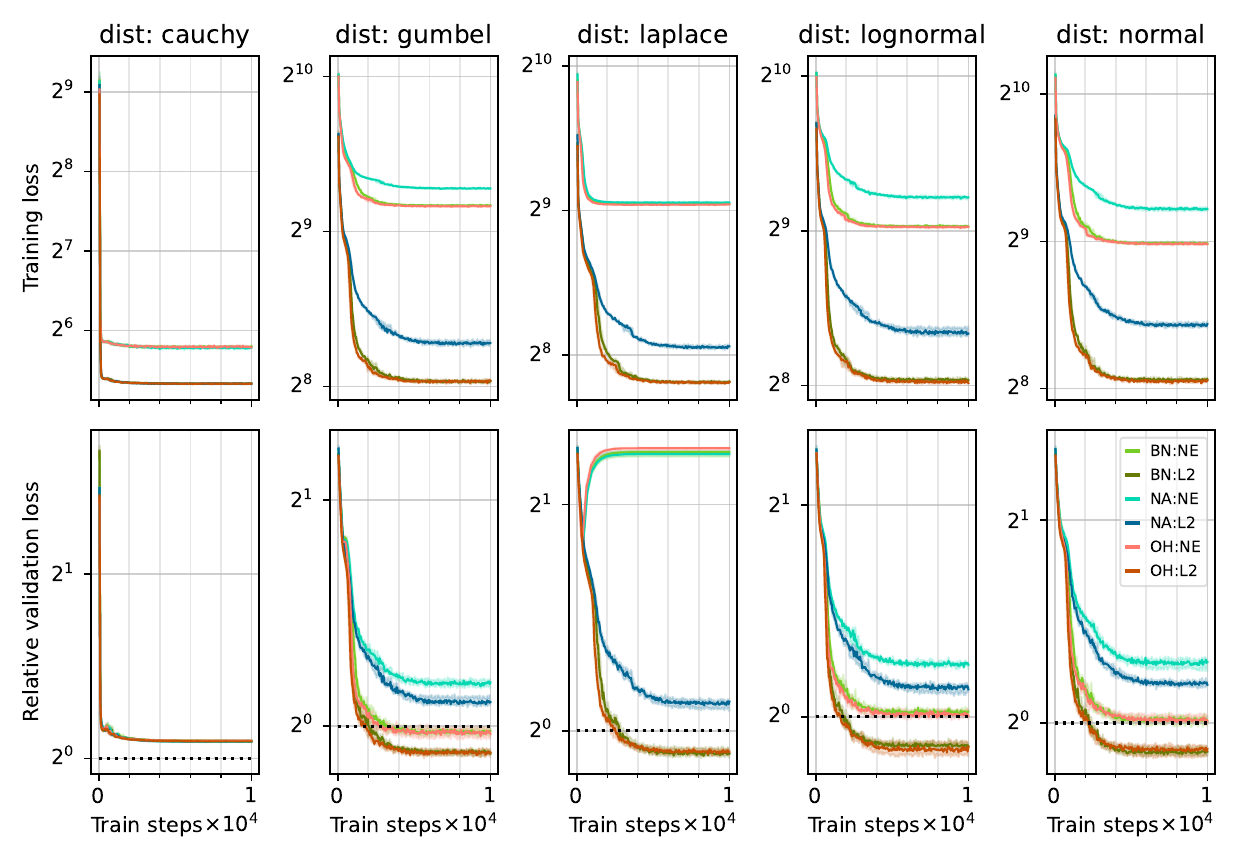}
\caption{$k = 6, \tau = 1$.}
\label{fig:ablations:dist-k6-itemp1}
\end{subfigure}
~
\begin{subfigure}{0.49\textwidth}
\centering
\includegraphics[width=\textwidth]{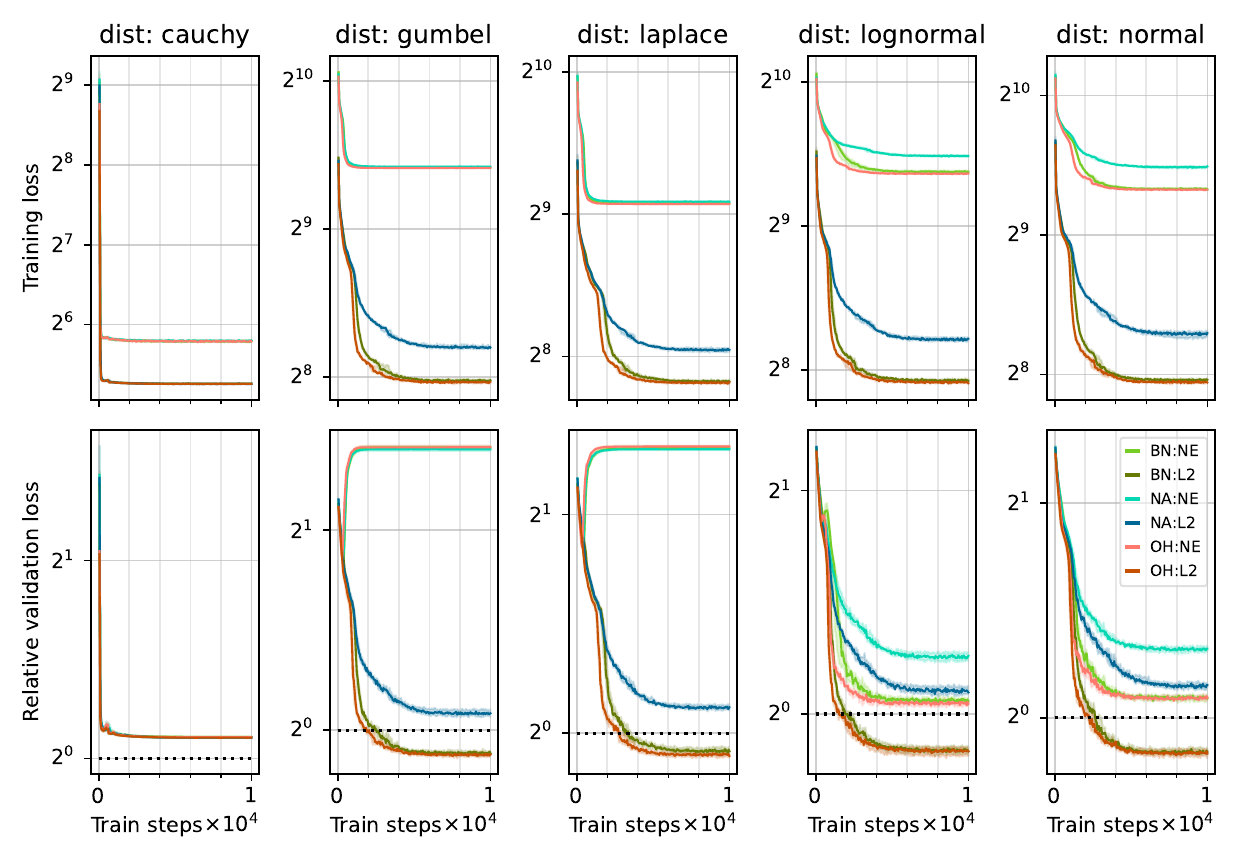}
\caption{$k = 10, \tau = 1$.}
\label{fig:ablations:dist-k10-itemp1}
\end{subfigure}
~
\begin{subfigure}{0.49\textwidth}
\centering
\includegraphics[width=\textwidth]{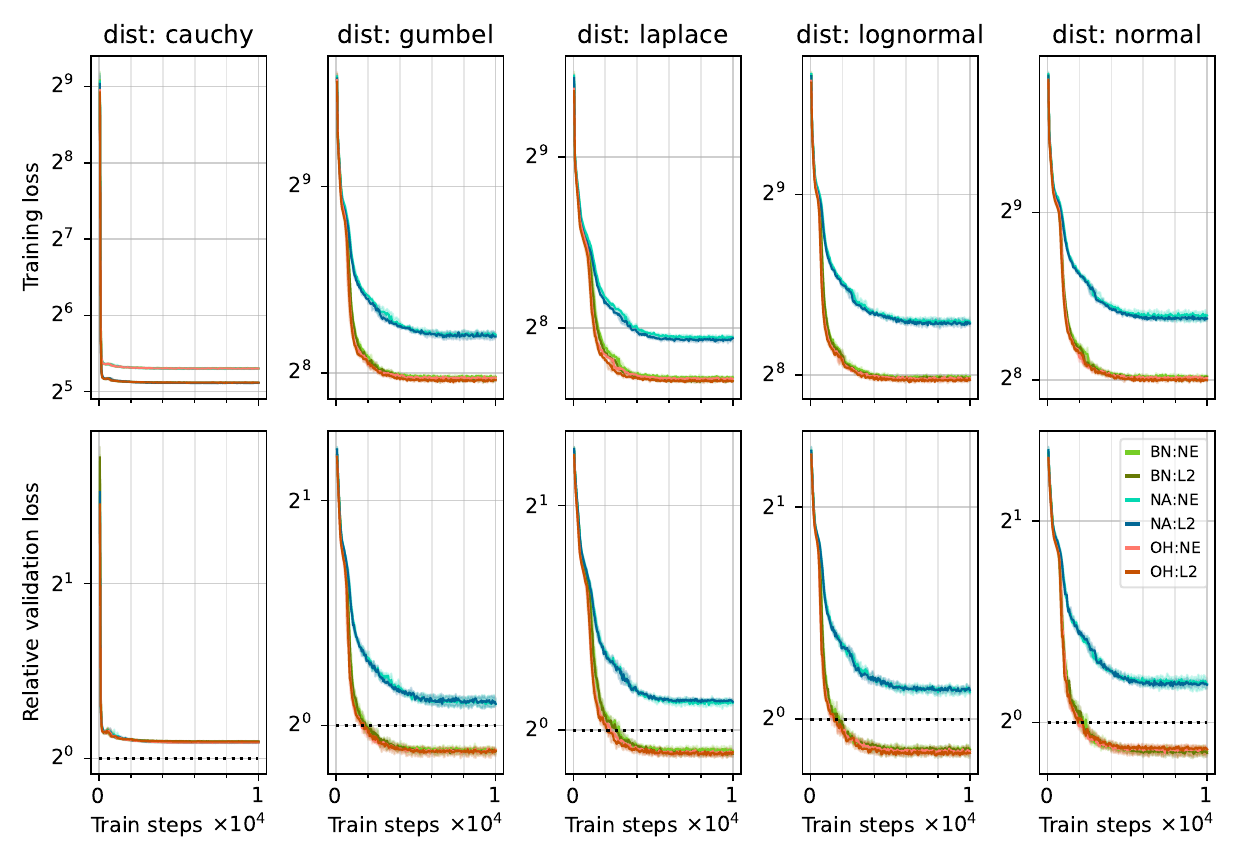}
\caption{$k = 6, \tau = 0.1$.}
\label{fig:ablations:dist-k6-itemp10}
\end{subfigure}
~
\begin{subfigure}{0.49\textwidth}
\centering
\includegraphics[width=\textwidth]{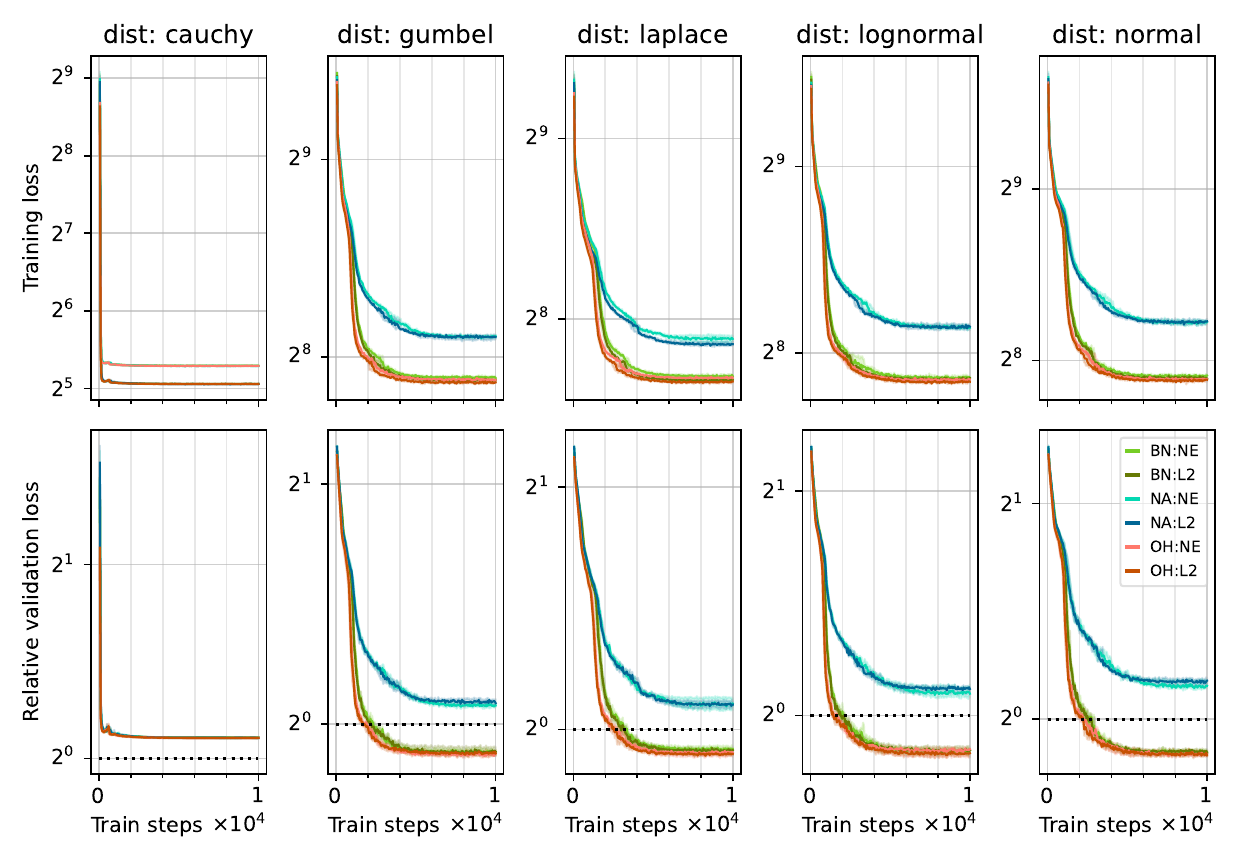}
\caption{$k = 10, \tau = 0.1$.}
\label{fig:ablations:dist-k10-itemp10}
\end{subfigure}
\caption{Training loss convergence (top row) and relative validation loss from \cref{eq:rel-val-loss} (bottom row) for varying the distribution of the points to the clustered. These set of results correspond to a training with clustering tasks with $n = 512$ points in $d=32$ dimensions, and is an extended version of \Cref{fig:dists-var}. To highlight the interplay of the number of clusters $k$ and the loss temperature $\tau$, we consider four cases: $k = 6, \tau = 1$ (\cref{fig:ablations:dist-k6-itemp1}),  $k = 10, \tau = 1$ (\cref{fig:ablations:dist-k10-itemp1}),  $k = 6, \tau = 0.1$ (\cref{fig:ablations:dist-k6-itemp10}), and  $k = 10, \tau = 0.1$ (\cref{fig:ablations:dist-k10-itemp10}). In all cases, the initial learning rate $\eta=0.01$ for the Adam optimizer with $M=10000$ training steps and a per-step task batch-size $B=32$.}
\label{fig:ablations:dist}
\end{figure}

\paragraph{Varying the distribution of the data from which the points are generated for the training tasks.}
In \Cref{fig:ablations:dist}, we build upon \Cref{fig:dists-var}, and provide an extended evaluation of the effect of the data distribution from which the set of points $\mX \in [0,1]^{d \times n}$ are generated for any clustering tasks $(\mX, \mC)$. As with \Cref{fig:dists-var}, we see that our proposed learning scheme is unable to learn from and generalize to clustering tasks generated from a mixture of \texttt{cauchy} distributions under any of the configuarations considered. Focusing on the remaining distributions, namely \texttt{gumbel}, \texttt{laplace}, \texttt{lognormal}, and \texttt{normal}, we see that, at a larger $\min$-regularization of $\tau = 1$ in \Cref{fig:ablations:dist-k6-itemp1} and \Cref{fig:ablations:dist-k10-itemp1}, the L2-regularized $\min$ provides better generalization than the NE-regularized $\min$, with NE-regularized $\min$ failing to generalize at all with the \texttt{gumbel} and \texttt{laplace} distributions; this behaviour is more evident with the larger $k=10$ value in \Cref{fig:ablations:dist-k10-itemp1}. With a smaller $\min$-regularization penalty $\tau = 0.1$ providing tighter clustering loss upper-bounds for training in \Cref{fig:ablations:dist-k6-itemp10} and \Cref{fig:ablations:dist-k10-itemp10}, we see that both NE-regularized and L2-regularized $\min$ with \oh and \bn token embeddings can outperform Lloyd's across all data distributions.

\begin{figure}[htb]
\centering
\begin{subfigure}{0.49\textwidth}
\centering
\includegraphics[width=\textwidth]{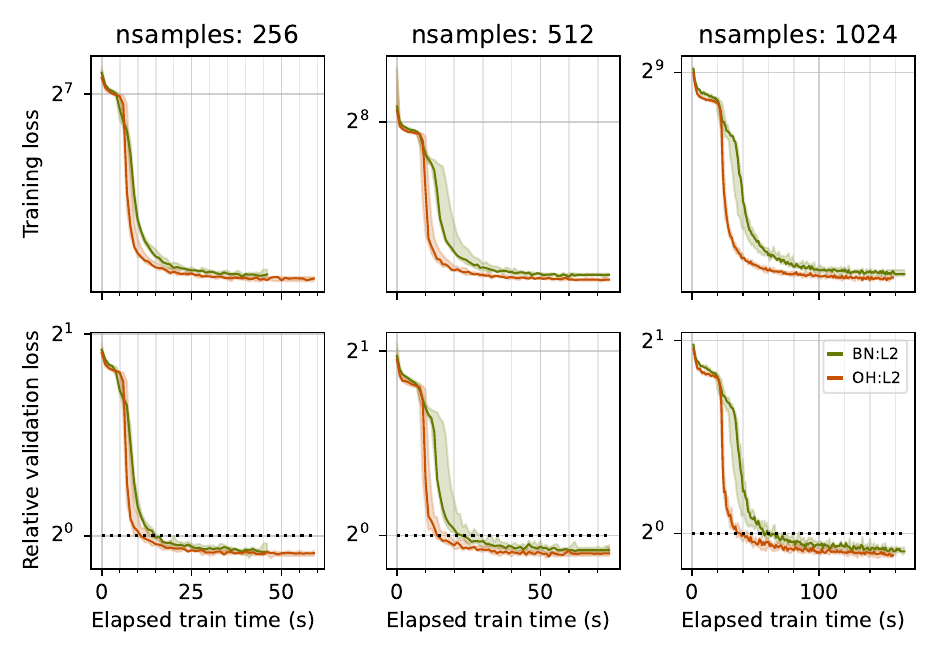}
\caption{$k = 16, d = 16, \tau = 1$.}
\label{fig:train-time:k16d16}
\end{subfigure}
~
\begin{subfigure}{0.49\textwidth}
\centering
\includegraphics[width=\textwidth]{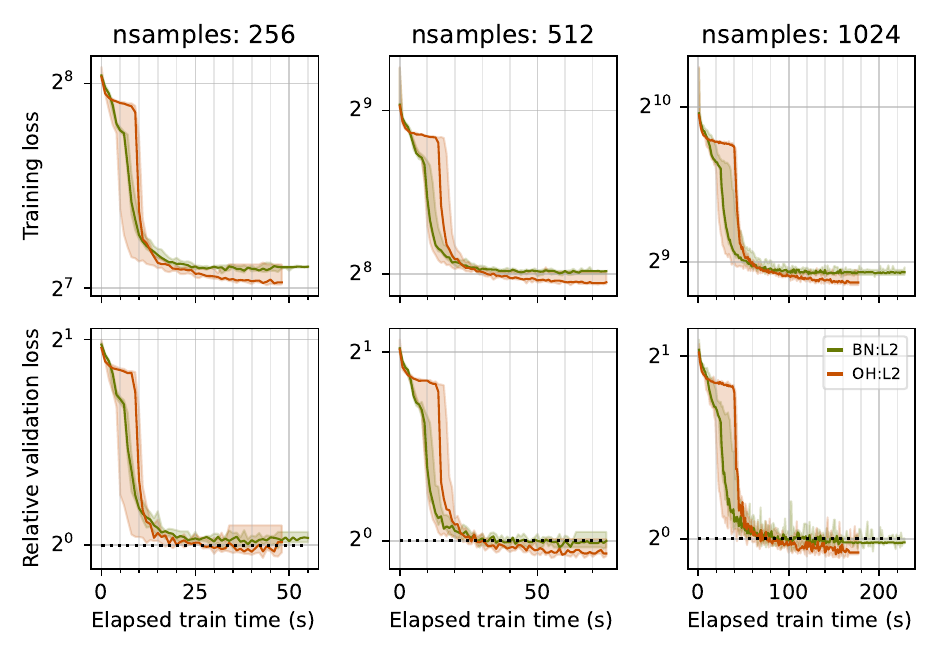}
\caption{$k = 25, d = 32, \tau = 1$.}
\label{fig:train-time:k25d32}
\end{subfigure}
\caption{Training loss convergence (top row) and relative validation loss from \cref{eq:rel-val-loss} (bottom row) in terms of actual training time for varying values of $n \in \{ 256, 512, 1024 \}$. In one case, we use $k = 16, d = 16$ with $\tau = 1$ (\cref{fig:train-time:k16d16}), while in another, we use $k = 25, d = 32$, again with $\tau = 1$ (\cref{fig:train-time:k25d32}). Here we limit ourselves to the L2-regularized $\min$, and compare the \oh and \bn token embeddings, giving us only the \textcolor{xkcdOliveGreen}{BN:L2} and the \textcolor{xkcdDarkOrange}{OH:L2} curves.}
\label{fig:train-time}
\end{figure}

\paragraph{Interplay of the lower computational overhead and slower convergence of the leaner \eqref{eq:tk-emb-v2}.}
The results in \cref{fig:enc-profile} and \cref{tab:enc-profile} indicate that the proposed \bn token embeddings can provide significant computational gains over the existing \oh token embeddings primarily on account of having smaller embeddings sizes $\demb$ (\oh with $(d+k)$ vs \bn with $(d+\ceil{\log_2 k})$). However, the results in \cref{thm:loss-lip-short} indicate that the model with the \bn token embeddigns will have slower convergence guarantees (in terms of the number of optimization steps). To understand their interplay, we consider two settings of $(d,k)$, and compare the convergence (of training and validation loss) in terms of actual elapsed time in \cref{fig:train-time}.
These training runs use gradient clipping to ensure that all the model trainings avoided any gradient explosions and numerical issues. With moderately small $d = k = 16$ in \cref{fig:train-time:k16d16}, we see that the model with the \bn token embeddings converges signficantly slower than the model with the \oh token embedding, though both appear to eventually converge close to each other, and better than single step Lloyd's update for all values of $n$. However, as the data dimensionality $d$ and the number of clusters $k$ grows to $d = 32, k = 25$ in \cref{fig:train-time:k25d32}, we see that, initially, the model with \bn token embeddings converges faster in terms of the runtime, with both methods eventually matching or outperforming single-step Lloyd's updates by the end of their 10000 training steps (which take different amounts of time for both).

\clearpage
\section{Additional Figures} \label{asec:add-figs}

\begin{figure}[htb]
\vskip -0.2in
\centering
\includegraphics[width=0.8\textwidth]{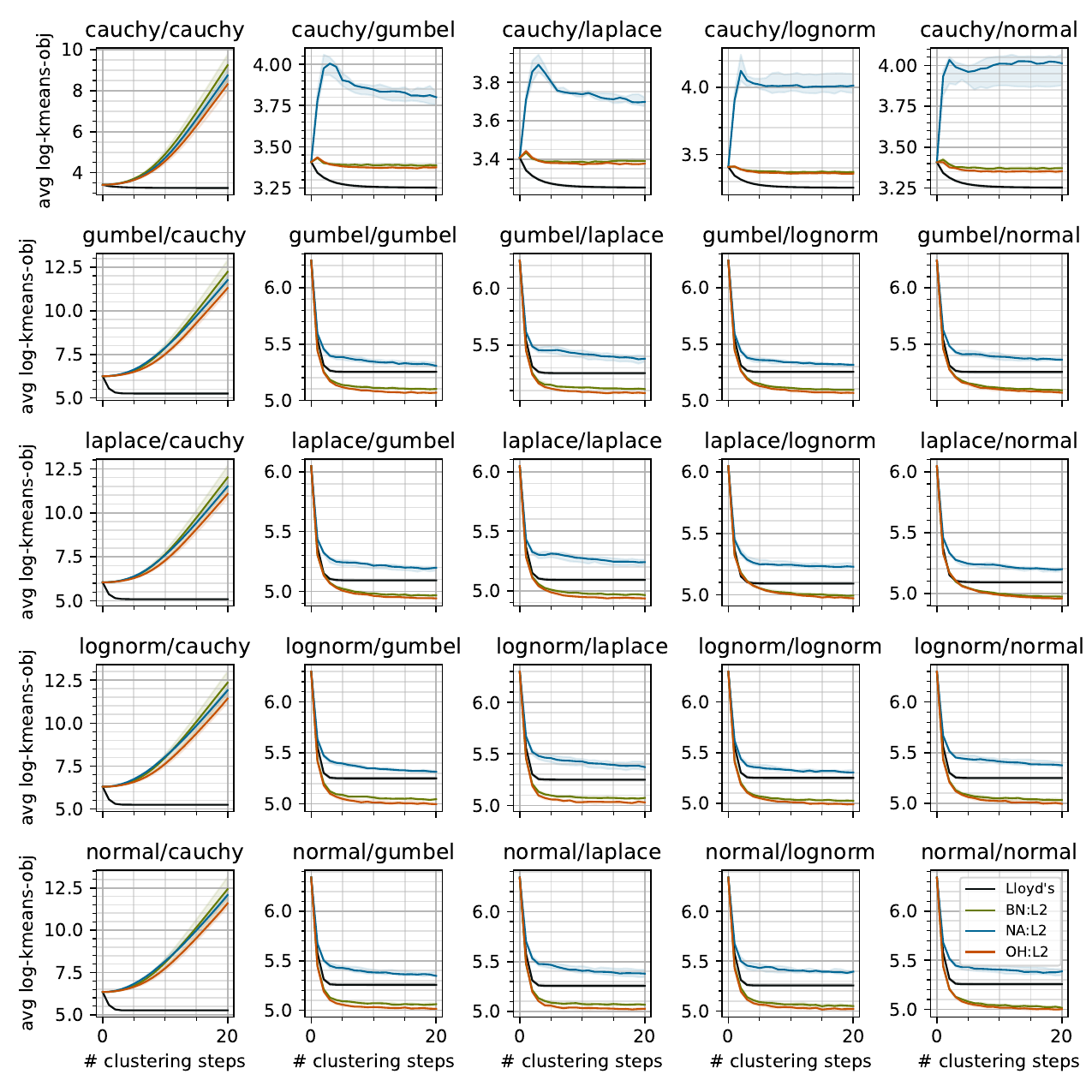}
\caption{{\bf Generalization to different distribution families.}
Larger version of \Cref{fig:varfam-gen}.
The models are trained on one distribution family (each column maps to a training family), and tested for multi-step ($T=20$) clustering from a different distribution family (each row maps to a test family).
We consider $5$ distributions --- \texttt{cauchy}, \texttt{gumbel}, \texttt{laplace}, \texttt{lognormal} and \texttt{normal} --- giving us a $5\times 5$ pairwise evaluations. Note that the pairwise evaluations on the ``diagonal'' correspond to training and evaluations tasks being sampled from the same distribution (though the tasks themselves are different, and the training only considers single-step clustering while we are evaluating multi-step clustering).
We only consider models trained with the L2-regularized $\min$ and the three forms of token embeddings, \oh, \bn and \na as described in \Cref{fig:tau-omega-demb-dim-analysis}.
Each curve corresponds to the inter-quartile range aggregated over 10 trials.
The performance of Lloyd's is in black.
}
\label{fig:varfam-gen-large}
\end{figure}

\clearpage

\begin{figure}[htb]
\centering
\includegraphics[clip, trim=15.5in 1.5in 17in 1.2in, width=\textwidth]{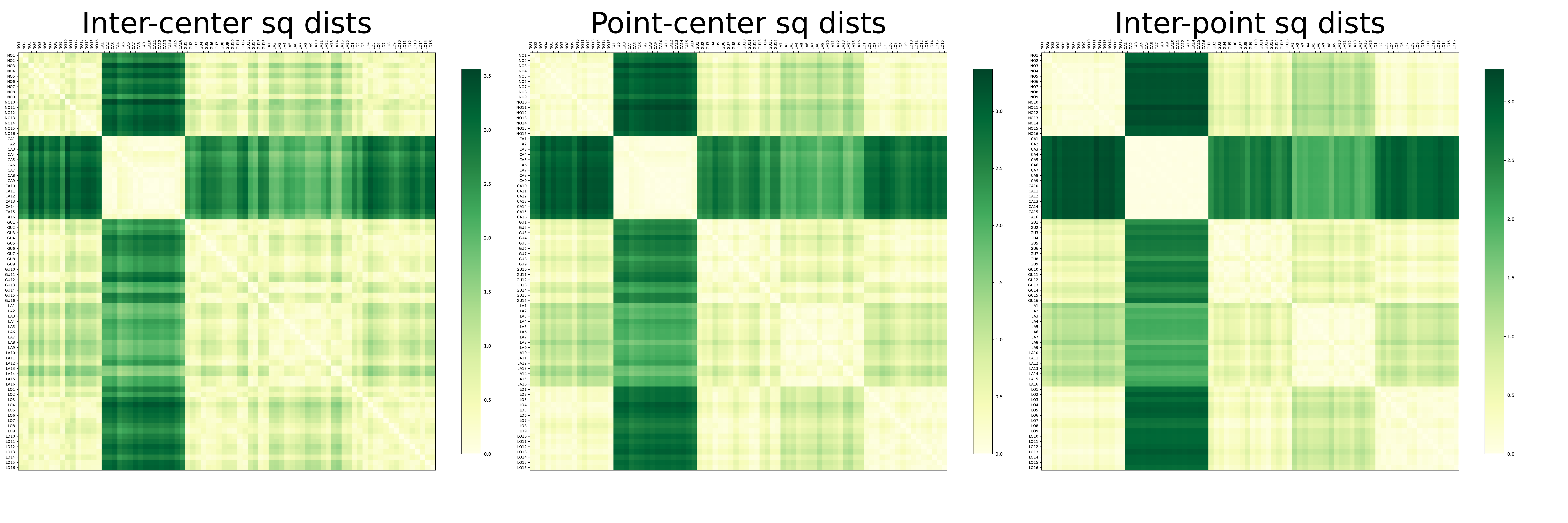}
\caption{
{\bf Pairwise task dissimilarities.}
A version of \Cref{fig:task-sim} with 16 tasks per distribution family. We use the 16 tasks from each of the families and compute the pairwise Earth-mover distance between their respective distribution of the squared Euclidean distances $\{ \eudi{\mX_{:,i}}{\mC_{:,j}}, i \in \iset{n}, j \in \iset{k} \}$ between the points $\mX$ and the initial centers $\mC$, giving us a $80 \times 80$ matrix. Each row and column corresponds to a single task, and the tasks are arranged into family-based blocks in the following order: \texttt{normal}, \texttt{cauchy}, \texttt{gumbel}, \texttt{laplace}, \texttt{lognormal}. {\em Light shades show low distances.} The ``diagonal'' corresponds to tasks distances to themselves, and thus are zero (the lightest shade). \texttt{cauchy} tasks are only similar to other \texttt{cauchy} tasks, and significantly dissimilar to tasks from other distribution families, with the \texttt{laplace} distribution appearing the least dissimilar (but still quite dissimilar).
}
\label{fig:task-sim-larger}
\end{figure}

\clearpage
\section{Smoothed Clustering Objective via Regularized $\min$} \label{asec:reg-argmax}

For the purposes of clustering, our main target is the non-differentiable per-point $k$-means loss $L: \R^d \times \R^{d \times k} \to \R$, given a sample $\vx \in \R^d$ and cluster centers $\mC = [\vc_1, \ldots, \vc_k ] \in \R^{d \times k}$:
\begin{equation} \label{eq:exact-kmeans-obj}
L(\vx, \mC) = \min_{j \in \iset{k}} \eudi{ \vx }{ \vc_j },
\end{equation}
with the complete $k$-means objective in \cref{eq:kmeans} given by $\sum_{i \in \iset{n}} L(\vx_i, \mC)$. We can obtain a differentiable, trainable surrogate for $L(\vx, \mC)$ in a few steps.
First, by defining function $R: \R^d \times \R^{d \times k} \to \R^k$  where $\vr = R(\vx, \mC)$ has $r_j \triangleq \eudi{ \vx }{ \mC_{:,j} }$ for each $j \in \iset{k}$, we have equivalently
\begin{equation} \label{eq:exact-kmeans-obj-r}
L(\vx, \mC) = \min_{j \in \iset{k}} r_j
\quad \text{where} \quad
\vr = R(\vx,\mC).
\end{equation}
Let $\Delta^k$ be the $(k-1)$-dimensional probability simplex with $\Delta^k = \{ \vp \in \R^k_{\geq 0}: \norm{\vp}_1 = 1 \}$.
Since $\min_{\vlambda \in \Delta^k} \inpr{\vlambda}{\vr} = \min_{j\in\iset{k}} r_j$, we could \emph{also} equivalently say, again with $\vr = R(\vx,\mC)$,
\begin{equation} \label{eq:exact-kmeans-obj-ip}
L(\vx, \mC) \triangleq \texttt{L}(\vr)
= \inpr{P(\vr)}{\vr},
\quad \text{where }
P(\vr) \triangleq \argmin_{\vlambda \in \Delta^k} \inpr{\vlambda}{\vr}
\end{equation}
We haven't changed $L(\vx, \mC)$, which remains non-differentiable.
However, we can regularize the minimization of $\vlambda$, obtaining a surrogate $\tilde{L}^\tau_\Omega(\vx, \mC)$, defined as
\begin{equation} \label{eq:reg-argmin-kmeans-obj}
\tilde{L}^\tau_\Omega(\vx, \mC) \triangleq \tilde{\texttt{L}}^\tau_\Omega (\vr)
= \inpr{P^\tau_\Omega(\vr)}{\vr}, \quad \text{where }
P^\tau_\Omega(\vr) \triangleq \argmin_{\vlambda \in \Delta^k} \inpr{\vlambda}{\vr} + \tau\Omega(\vlambda),
\end{equation}
where $\tau > 0$ is the penalty on the regularization $\Omega: \Delta^k \to \R$, with $\tau=0$ yields the unregularized $\argmin$.
If $\Omega$ is strongly convex, $\tilde{\texttt{L}}^\tau_\Omega(\vr)$ is differentiable with respect to $\vr$. Note that, for any $\tau > 0$, $P^\tau_\Omega(\vr) = P^1_\Omega(\vr/\tau)$, and thus we can just operate with $P^1_\Omega(\vr)$.
We will write $P^1_\Omega$ as $P_\Omega$ for the ease of exposition with:
\begin{equation} \label{eq:reg-argmin-generic}
P_\Omega(\vr) \triangleq \argmin_{\vlambda \in \Delta^k} \inpr{\vlambda}{\vr} + \Omega(\vlambda)
\end{equation}
For $\vlambda \in \Delta^k$, if $\Omega(\vlambda) = \sum_{j \in \iset{k}} \lambda_j \log \lambda_j$, the negative Shannon entropy, then the corresponding $P_{\text{\tiny NE}} (\vr / \tau)$ are the softmax weights (actually ``softmin'' weights) with the temperature $\tau$, giving us the following ``soft-min'' $k$-means objective $\tilde{L}_{\text{\tiny NE}}^\tau$:
\begin{equation} \label{eq:soft-min-kmeans-obj} \begin{split}
&
P_{\text{\tiny NE}} (\vr/\tau)_j
= \frac{\exp(- r_j /\tau)}{\sum_{j' \in \iset{k}} \exp(-r_{j'} / \tau)},
\\
&
\quad
\tilde{L}_{\text{\tiny NE}}^\tau(\vx, \mC)
\triangleq \tilde{\texttt{L}}_{\text{\tiny NE}}^\tau ( \vr )
= \inpr{P_{\text{\tiny NE}}(\vr/\tau)}{\vr}
= \sum_{j \in \iset{k}}
\frac{\exp(- r_j /\tau) r_j }{\sum_{{j'} \in \iset{k}} \exp(- r_{j'} /\tau)}.
\end{split}\end{equation}
Note that $L(\vx, \mC) \leq \tilde{L}_{\text{\tiny NE}}^\tau(\vx, \mC)$ with the equality achieved when $\tau \to 0$. In the learning procedure, we use \cref{eq:soft-min-kmeans-obj} instead of \cref{eq:exact-kmeans-obj} (or \cref{eq:exact-kmeans-obj-ip}) to have a differentiable objective.

An alternative choice for the regularization for $\vlambda \in \Delta^k$ is $\Omega(\vlambda) = \nicefrac{1}{2} \norm{ \vlambda }_2^2$, yielding
\[
P_{\text{\tiny L2}}(\vr) = \argmin_{\vlambda \in \Delta^k} \inpr{\vlambda}{\vr} + \nicefrac{1}{2} \norm{ \vlambda }_2^2
    = \argmin_{\vlambda \in \Delta^k} \nicefrac{1}{2}\norm{\vr}^2 + \inpr{\vlambda}{\vr} + \nicefrac{1}{2} \norm{ \vlambda }_2^2
    = \argmin_{\vlambda \in \Delta^k} \nicefrac{1}{2}\norm{\vr + \vlambda}^2,
\]
the closest point to $(-\vr)$ in $\Delta^k$.
For $\tau > 0$ this yields the sparsemax weights $P_{\text{\tiny L2}}(\vr / \tau)$, which can be computed as:
\begin{equation}
P_{\text{\tiny L2}}(\vr/\tau)_j = \left[ - (r_j / \tau) - \kappa(-\vr/\tau) \right]_+,
\quad \text{ with }
[a]_+ = \max\{ a, 0 \}, a \in \R,
\end{equation}
where $\kappa: \R^k \to \R$ is the appropriately defined threshold function such that, for any $\vz \in \R^k$, $\sum_{ j \in \iset{ k } } [z_j - \kappa(\vz) ]_+ = 1$; see \citet{wang2013projection} and \citet[Proposition 1]{martins2016softmax}.
For any $\vz \in \R^k$, let its entries be ordered as $z_{(1)} \geq z_{(2)} \geq \cdots \geq z_{(k)}$.
Then $\kappa(\vz)$ is computed as follows:
\begin{equation}
N(\vz) = \max\left\{ m \in \iset{ k }: 1 + m z_{(m)} > \sum_{j \leq m} z_{(j)} \right\},
\quad
\kappa(\vz) = \frac{ \left(\sum_{j \leq N(\vz)} z_{(j)}  \right) - 1 }{ N(\vz) }.
\end{equation}
This results in sparser weights, where some of the $P_{\text{\tiny L2}}(\vr/\tau)_j$ values can be exactly zero. This gives us the following differentiable upper-bound of the $k$-means objective:
\begin{equation} \label{eq:sparse-min-kmeans-obj}
\tilde{L}_{\text{\tiny L2}}^\tau(\vx, \mC)
\triangleq \tilde{\texttt{L}}_{\text{\tiny L2}}^\tau ( \vr )
= \inpr{P_{\text{\tiny L2}}(\vr/\tau)}{\vr}
= \sum_{j \in \iset{k}} \left[ - (r_j /\tau) - \kappa(-\vr/\tau) \right]_+ r_j.
\end{equation}
Again $L(\vx, \mC) \leq \tilde{L}_{\text{\tiny L2}}^\tau(\vx, \mC)$ with equality now achieveable at $\tau > 0$ (as opposed to at $\tau \to 0$ for NE-regularization) and depends on this notion of ``margin''~\citep{blondel2020learning}. We will also consider this as our learning loss. Using \citet[Proposition 1, Approximation Error]{blondel2020learning}, we can show that, for the $k$-means objective, for the same regularization penalty $\tau > 0$, the sparsemax-based upper-bound in \cref{eq:sparse-min-kmeans-obj} provides a tighter upper than the softmax-based upper-bound in \cref{eq:soft-min-kmeans-obj}.
\begin{proposition}[\citet{blondel2020learning}] \label{prop:reg-arg-max-app-err}
Suppose $\textsf{domain}(\Omega)$ contains all vectors in $\gY = \{ \ve_1, \ldots, \ve_k \}$, where $\ve_j$ is the $j^\text{th}$ characteristic vector.
If such an $\Omega$ is $\gamma$-strongly convex and bounded, with $L \leq \Omega(\vlambda) \leq U$ for all $\vlambda \in \textsf{domain}(\Omega)$, then $\nicefrac{1}{2} \eudi{P(\vz)}{P_\Omega(\vz)} \leq \frac{U - L }{\gamma}$ for any $\vz \in \R^k$.
\end{proposition}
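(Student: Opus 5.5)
The plan is to compare the two minimizers through the strong convexity of the regularized objective. Fix $\vz \in \R^k$ and let $f(\vlambda) \triangleq \inpr{\vlambda}{\vz} + \Omega(\vlambda)$ on $\textsf{domain}(\Omega)$, which I take to lie inside $\Delta^k$, as in \cref{eq:reg-argmin-generic}. Then $P_\Omega(\vz) = \argmin f$. Let $\vp \triangleq P(\vz)$ be the unregularized minimizer of $\inpr{\vlambda}{\vz}$ over $\Delta^k$. A linear function attains its minimum over the simplex at a vertex, so $\vp$ can be chosen in $\gY = \{\ve_1, \ldots, \ve_k\}$. If the minimizer is not unique, I fix any minimizing vertex and read $P(\vz)$ as that choice. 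By hypothesis, $\vp \in \textsf{domain}(\Omega)$, so $f(\vp)$ is finite.

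\textbf{Step 1 (lower bound from strong convexity).} Since $\inpr{\cdot}{\vz}$ is linear and $\Omega$ is $\gamma$-strongly convex, $f$ is $\gamma$-strongly convex. By the first-order optimality of $P_\Omega(\vz)$ over the convex domain, for every feasible $\vlambda$,
\begin{equation*}
f(\vlambda) - f(P_\Omega(\vz)) \geq \frac{\gamma}{2}\eudi{\vlambda}{P_\Omega(\vz)}.
\end{equation*}
I would use the standard quadratic-growth property: strong convexity together with a variational inequality at the minimizer. I apply it with $\vlambda = \vp$.

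\textbf{Step 2 (upper bound on the same gap).} I split the gap as
\begin{equation*}
f(\vp) - f(P_\Omega(\vz)) = \inpr{\vp - P_\Omega(\vz)}{\vz} + \Omega(\vp) - \Omega(P_\Omega(\vz)).
\end{equation*}
The first term is $\leq 0$, because $\vp$ minimizes $\inpr{\cdot}{\vz}$ over $\Delta^k$ and $P_\Omega(\vz) \in \Delta^k$. The second term is at most $U - L$ by the boundedness assumption, since both points lie in the domain.

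\textbf{Step 3 (combine).} Steps 1 and 2 give $\frac{\gamma}{2}\eudi{\vp}{P_\Omega(\vz)} \leq U - L$. Dividing by $\gamma$ yields $\nicefrac{1}{2}\,\eudi{P(\vz)}{P_\Omega(\vz)} \leq (U-L)/\gamma$, as claimed.

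The only delicate point is Step 1, in the case where $\Omega$ is extended-valued or the minimizer lies on the boundary of the domain. Quadratic growth must then be justified through the subgradient optimality condition $\mathbf{0} \in \vz + \partial(\Omega + \iota_{\Delta^k})(P_\Omega(\vz))$, rather than through a gradient. That condition gives the needed inequality directly from the definition of strong convexity. Everything else is immediate.
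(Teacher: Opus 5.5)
Your proof is correct and matches the standard argument for this bound; the paper does not reprove it and only cites Blondel et al. You use quadratic growth of the $\gamma$-strongly convex objective $\inpr{\cdot}{\vz} + \Omega$ at its minimizer $P_\Omega(\vz)$, evaluate it at the unregularized minimizer $P(\vz)$, drop the nonpositive linear term, and bound the $\Omega$-difference by $U - L$. The boundary case you flag needs no subgradient calculus: for $t \in (0,1)$, minimality gives $f(P_\Omega(\vz)) \leq f\bigl(P_\Omega(\vz) + t(\vp - P_\Omega(\vz))\bigr)$. Applying strong convexity to the right-hand side, dividing by $t$ and letting $t \to 0$ yields the quadratic-growth inequality directly.
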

For our purposes $\textsf{domain}(\Omega) = \Delta^k$ and thus, $\gY \subset \Delta^k$. Quantifying the tightness of the upper-bound, we have
\begin{equation} \label{eq:obj-bound-tight}
\begin{split}
0 < \tilde{L}_\Omega^\tau(\vx, \mC) - L(\vx, \mC)
& = \tilde{\texttt{L}}_\Omega^\tau(\vr) - \texttt{L}(\vr)
\\
&
= \inpr{P_\Omega(\vr / \tau) - P(\vr) }{\vr}
\leq \norm{P(\vr) - P_\Omega(\vr / \tau)}\norm{\vr}.
\end{split}
\end{equation}
Thus, a smaller value of $\norm{P(\vr) - P_\Omega(\vr / \tau)}$ implies a tighter upper-bound. Note that for $\tau > 0$, $P(\vr) = P(\vr / \tau)$, and thus $\norm{P(\vr) - P_\Omega(\vr / \tau)} = \norm{P(\vr / \tau) - P_\Omega(\vr / \tau)}$, so it suffices to bound $\norm{P(\vz) - P_\Omega(\vz)}$ for any $\vz \in \R^k$

For sparsemax with $\Omega(\vlambda) = \nicefrac{1}{2}\norm{\vlambda}_2^2$, $\Omega$ is $\gamma$-strongly convex with $\gamma = 1$, and $\Omega(\vlambda) = \nicefrac{1}{2}\norm{\vlambda}_2^2 \in [\nicefrac{1}{2k}, \nicefrac{1}{2}]$ for any $\vlambda \in \Delta^k$. Thus $\nicefrac{1}{2} \eudi{P(\vz)}{P_\Omega(\vz)} \leq \nicefrac{1}{2}(1 - \nicefrac{1}{k}) < \nicefrac{1}{2}$ for any $\vz \in \R^k$ using \cref{prop:reg-arg-max-app-err}. Thus $\norm{P(\vr / \tau) - P_\Omega(\vr / \tau) } \leq \sqrt{1 - \nicefrac{1}{k}} < 1$.

For softmax with $\Omega(\vlambda) = \sum_{j \in \iset{ k } } \lambda_j \log \lambda_j$, with $0 \leq \lambda_j \leq M \leq 1$, $\Omega$ is $\gamma$-strongly convex with $\gamma = 1/M$.
Naively $M = 1$ and thus, $\Omega$ would be 1-strongly convex.
When $\vlambda$ is one-hot, $\Omega(\vlambda) = 0 = U$, while $\lambda_j = 1/k \forall j \in \iset{k}$ gives us $\Omega(\vlambda) = - \log k = L$. Thus, we have
\begin{equation}
\nicefrac{1}{2} \eudi{P(\vz)}{P_\Omega(\vz)} \leq \frac{U - L}{\gamma} =  \log k.
\end{equation}
Thus, for softmax, $\norm{P(\vr/\tau) - P_\Omega(\vr / \tau) } \leq \sqrt{2 \log k}$, giving a looser bound than with sparsemax of $\sqrt{1 - \nicefrac{1}{k}}$.

Next, we will establish the Lipschitz continuity of the smoothed objective $\tilde{\texttt{L}}_\Omega^\tau$. First, let us assume that the regularized $\min$ operation is $\omega$-input-stable. That is:
\begin{assumption} \label{asp:reg-argmax-stable}
The regularized $\min$ operation $P_\Omega: \R^k \to \Delta^k$ in \cref{eq:reg-argmin-generic} is $\omega_\tau$-input-stable for a $\tau$-dependent $\omega_\tau > 0$, such that, for any $\vr, \vr' \in \R^k$
\begin{equation}\label{eq:reg-argmax-stable}
\norm{P_\Omega(\vr/\tau) - P_\Omega(\vr'/\tau)}_1
\leq (\omega_\tau/\tau) \norm{\vr - \vr'}_1.
\end{equation}
\end{assumption}
This assumption is already satisfied for softmax with bounded inputs $\vr, \vr'$~\citep[Lemma B.1]{li2023transformers}:
\begin{align}
& \norm{P_{\text{\tiny NE}}(\vr/\tau) - P_{\text{\tiny NE}}(\vr'/\tau)}_1
\leq \frac{2 \exp(\delta/\tau)}{k \tau} \norm{\vr - \vr'}_1,
\\
&
\quad \text{where }
\delta \geq (\max_{j \in \iset{k}} r_j) - (\min_{j \in \iset{k}} r_j) \, \forall \vr
\qquad
\text{ and }
\delta \geq (\max_{j \in \iset{k}} r_j') - (\min_{j \in \iset{k}} r_j') \, \forall \vr',
\end{align}
with the corresponding $\omega_\tau = \exp(\nicefrac{\delta}{\tau}) / k$.
Thus we can show the following result:
\begin{proposition}\label{prop:smooth-obj-lip}
The smoothed $k$-means objective $\tilde{\texttt{L}}_\Omega^\tau: \R^k \to \R$ in \cref{eq:reg-argmin-kmeans-obj} is Lipschitz continuous, such that, with any $\vr, \vr' \in [0, r^\star]^k$, we have
\begin{equation} \label{eq:smooth-obj-lip}
\left|
\tilde{\texttt{L}}_\Omega^\tau (\vr) -  \tilde{\texttt{L}}_\Omega^\tau (\vr')
\right|
= \left|
\inpr{P_\Omega(\vr/\tau)}{\vr} - \inpr{P_\Omega(\vr'/\tau)}{\vr'}
\right|
\leq \left(1 + \frac{r^\star \omega_\tau}{ \tau } \right) \norm{ \vr - \vr' }_1.
\end{equation}
\end{proposition}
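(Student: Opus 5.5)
The plan is the standard add-and-subtract decomposition of a bilinear difference, followed by Hölder's inequality and \Cref{asp:reg-argmax-stable}. Write
\begin{equation*}
\inpr{P_\Omega(\vr/\tau)}{\vr} - \inpr{P_\Omega(\vr'/\tau)}{\vr'}
= \inpr{P_\Omega(\vr/\tau)}{\vr - \vr'} + \inpr{P_\Omega(\vr/\tau) - P_\Omega(\vr'/\tau)}{\vr'},
\end{equation*}
and bound the two terms separately using the triangle inequality on the absolute value.

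For the first term, I will use that $P_\Omega(\vr/\tau) \in \Delta^k$, so $\norm{P_\Omega(\vr/\tau)}_\infty \leq 1$. Hölder's inequality with the $(\infty,1)$ pairing then gives $|\inpr{P_\Omega(\vr/\tau)}{\vr-\vr'}| \leq \norm{\vr-\vr'}_1$. One could even sharpen this to $\max_j |r_j - r'_j|$, but the $\ell_1$ bound is what the statement asks for.

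For the second term, Hölder with the $(1,\infty)$ pairing gives
\begin{equation*}
|\inpr{P_\Omega(\vr/\tau) - P_\Omega(\vr'/\tau)}{\vr'}| \leq \norm{\vr'}_\infty \, \norm{P_\Omega(\vr/\tau) - P_\Omega(\vr'/\tau)}_1 .
\end{equation*}
Since $\vr' \in [0,r^\star]^k$, we have $\norm{\vr'}_\infty \leq r^\star$. Applying \Cref{asp:reg-argmax-stable} bounds the $\ell_1$ difference by $(\omega_\tau/\tau)\norm{\vr-\vr'}_1$, so this term is at most $(r^\star\omega_\tau/\tau)\norm{\vr-\vr'}_1$. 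Summing the two bounds yields $(1 + r^\star\omega_\tau/\tau)\norm{\vr-\vr'}_1$, as claimed.

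There is no real obstacle here; the only care needed is to pair norms correctly in the two Hölder steps, with $\ell_\infty$ on whichever factor has a uniform bound. A secondary point concerns the softmax instantiation. There the assumption holds only for bounded inputs, with $\delta$ depending on the range of $\vr$. The hypothesis $\vr,\vr'\in[0,r^\star]^k$ lets us take $\delta = r^\star$, and hence $\omega_\tau = \exp(r^\star/\tau)/k$, so the proposition is consistent with the softmax case.
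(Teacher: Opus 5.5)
Your proof is correct and is essentially the paper's argument. The paper makes the same add-and-subtract split, bounds the first term by $\max_j P_\Omega(\vr/\tau)_j \le 1$ times $\norm{\vr-\vr'}_1$ and the second by $\max_j |r'_j| \le r^\star$ times $\norm{P_\Omega(\vr/\tau)-P_\Omega(\vr'/\tau)}_1$, and then invokes \Cref{asp:reg-argmax-stable}. One small point in your softmax side remark: the cited stability bound $\frac{2\exp(\delta/\tau)}{k\tau}$ actually corresponds to $\omega_\tau = 2\exp(r^\star/\tau)/k$; this factor of $2$ is harmless, and the paper's text drops it as well.
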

\begin{proof}
First, we have the following inequalities by simply applying the triangle inequalities:
\begin{align*}
& \left|
\inpr{P_\Omega(\vr/\tau)}{\vr} - \inpr{P_\Omega(\vr'/\tau)}{\vr'}
\right|
\\
&\quad  \leq
\left| \inpr{P_\Omega(\vr/\tau)}{(\vr - \vr')} \right|
+ \left| \inpr{(P_\Omega(\vr/\tau)- P_\Omega(\vr'/\tau))}{\vr'} \right|
\\
&\quad \leq
\max_{j \in \iset{k}} | P_\Omega(\vr/\tau)_j | \norm{\vr - \vr'}_1
+ \max_{j \in \iset{k}} |r'_j| \norm{P_\Omega(\vr/\tau)- P_\Omega(\vr'/\tau))}_1.
\end{align*}
Noting that $\max_{j \in \iset{k}} P_\Omega(\vr/\tau)_j \leq 1$ as $P_\Omega: \R^k \to \Delta^k$, and $r_j, r'_j \in [0, r^\star]$, and applying \cref{eq:reg-argmax-stable} gives us the result in \cref{eq:smooth-obj-lip}.
\end{proof}
This result indicates that the Lipschitz constant of the smoothed loss usually increases as the $\min$-regularization penalty $\tau$ decreases. In general, the Lipschitz constant depends on $(1/\tau)$. Furthermore, if $\omega_\tau$ increases as $\tau$ decreases, then the Lipschitz constant also increases. With softmax, $\omega_\tau = \exp(\nicefrac{\delta}{\tau})/k $ also increases as $\tau$ decreases.

\section{Lipschitz-ness of the Smoothed Upperbound of the Clustering Objective} \label{asec:smoothed-obj-lip}

In this section, we will establish the Lipschitz constant for the smoothed upper-bound $\tilde{\Ls}_\Omega^\tau$ of the original discrete non-differentiable clustering objective $\Ls$. We begin with some standard assumptions:

\begin{assumption}[bounded inputs] \label{asp:bnd-input}
For any clustering task $(\mX, \mC) \sim \gD$ with $\mX \in \R^{d \times n}$ and $\mC \in \R^{d \times k}$, we assume that any point and initial cluster center is bounded to the unit hypercube, that is, $\mX_{:,i} \in [0,1]^d$ for all $i \in \iset{ n }$ and $\mC_{:,j} \in [0,1]^d$ for all $j \in \iset{ k }$.
\end{assumption}
We explicitly enforce this boundedness in all our experiments.
\begin{assumption}[bounded model parameters] \label{asp:bnd-params}
We assume that the query, key and value projection matrices $(\mQ_\mu, \mK_\mu, \mV_\mu), \mu = 1, \ldots, 4$ have a bounded spectral norm throughout the learning process, that is,
\begin{equation} \label{eq:bnd-params}
\norm{\mQ_\mu} \leq \Gamma_1, \quad
\norm{\mK_\mu} \leq \Gamma_1, \quad
\norm{\mV_\mu} \leq \Gamma_2, \quad
\mu = 1, \ldots, 4.
\end{equation}
\end{assumption}
For the ease of exposition, we assume that the query and key projection matrices share the norm bound.
\begin{lemma}[bounded model outputs] \label{lem:out-bound}
Under Assumptions~\ref{asp:bnd-input}~and~\ref{asp:bnd-params}, and the attention based updates in \Cref{def:updates}, the initial token embeddings, the intermediate token embeddings, and the cluster centers $\mC(\Theta) \triangleq \cmodel(\mX, \mC; \Theta)$ output by the model $\cmodel$ with initial token embedding scheme $(E_C, E_X)$ for any clustering task $(\mX, \mC)$ have bounded norms as follows:
\begin{equation} \label{eq:out-bound}
\begin{split}
\forall i \in \iset{ n },
&
\norm{ \bar{\mX}^{(0)}_{:,i} } \leq \sqrt{d},
\qquad \qquad
\norm{ \bar{\mX}^{(1)}_{:,i} } \leq (1 + \Gamma_2) \sqrt{d} + \Gamma_2 \sqrt{d_E},
\\
\forall j \in \iset{ k },
&
\norm{ \bar{\mC}^{(0)}_{:,j} } \leq \sqrt{d_E},
\qquad
\norm{ \mC(\Theta)_{:,j} } \leq ( 1 + 2 \Gamma_2 + 2 \Gamma_2^2 ) \sqrt{ d_E },
\\
& \quad
\text{where }
d_E = \begin{cases}
d + 1 & \text{if using \eqref{eq:tk-emb-v1}} \\%  E = E^{\text{\tiny OH}} \\
d + \ceil{\log_2 k}  & \text{if using \eqref{eq:tk-emb-v2}}
\end{cases}.
\end{split}
\end{equation}
\end{lemma}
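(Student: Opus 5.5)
We bound each stage of the single-layer forward pass in \Cref{def:updates} in turn, using two facts. First, a column-stochastic softmax turns attention outputs into convex combinations of value vectors. Second, spectral norms bound the value projections.

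\textbf{Initial embeddings.} By \Cref{asp:bnd-input}, each $\mX_{:,i}\in[0,1]^d$, so $\norm{\mX_{:,i}}\le\sqrt d$. Since $\mY^{(0)}=\mathbf 0$ for both \eqref{eq:tk-emb-v1} and \eqref{eq:tk-emb-v2}, we get $\norm{\bar{\mX}^{(0)}_{:,i}}\le\sqrt d$.

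For the centers, $\norm{\mC_{:,j}}^2\le d$. The appended part is a one-hot column $\mI_{k,:,j}$ with squared norm $1$, or a binary column $\vb_j$ with squared norm at most $\ceil{\log_2 k}$. This gives $\norm{\bar{\mC}^{(0)}_{:,j}}^2\le d_E$ with the two stated values of $d_E$. Note also $\sqrt d\le\sqrt{d_E}$, which we will use to merge terms.

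\textbf{Key lemma.} Each column of $\softmax(\cdot)$ lies in a probability simplex, for any finite $\gamma$ and regardless of $\mQ,\mK$. Hence the $j$-th column of $\attned{\gamma}(\mZ,\mZ';(\mQ,\mK,\mV))$ is $\mV\sum_i p_i\mZ'_{:,i}$ with $\vp$ in the simplex. Its norm is therefore at most $\norm{\mV}\max_i\norm{\mZ'_{:,i}}\le\Gamma_2\max_i\norm{\mZ'_{:,i}}$ by \Cref{asp:bnd-params} and convexity of the norm. The query/key bound $\Gamma_1$ plays no role here; it would only enter Lipschitz estimates later.

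\textbf{Point update.} Apply the lemma to $\bar{\mX}^{(1)}_{:,i}=\bar{\mX}^{(0)}_{:,i}+(\text{cross})+(\text{self})$ with the triangle inequality. The cross-attention term attends to $\bar{\mC}^{(0)}$, contributing at most $\Gamma_2\sqrt{d_E}$. The self-attention term attends to $\bar{\mX}^{(0)}$, contributing at most $\Gamma_2\sqrt d$. Together with the residual $\sqrt d$, this yields $(1+\Gamma_2)\sqrt d+\Gamma_2\sqrt{d_E}$.

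\textbf{Center update.} Similarly,
\[
\norm{\bar{\mC}^{(1)}_{:,j}}\le\sqrt{d_E}+\Gamma_2\max_i\norm{\bar{\mX}^{(1)}_{:,i}}+\Gamma_2\sqrt{d_E}.
\]
Substituting the point bound and using $\sqrt d\le\sqrt{d_E}$ gives
\[
\sqrt{d_E}\bigl(1+\Gamma_2+\Gamma_2(1+\Gamma_2)+\Gamma_2^2\bigr)=(1+2\Gamma_2+2\Gamma_2^2)\sqrt{d_E}.
\]
The model output $\mC(\Theta)_{:,j}$ is the first $d$ coordinates of $\bar{\mC}^{(1)}_{:,j}$. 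Truncating coordinates does not increase the norm, so the output obeys the same bound.

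\textbf{Main obstacle.} The only delicate point is the key lemma: we must argue that the bound is uniform in $\gamma$ and in the query/key matrices, since only the simplex weights matter. Beyond that, the only care needed is bookkeeping the $d$ versus $d_E$ terms so that the final constant comes out exactly as stated.
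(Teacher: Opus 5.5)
Your proposal is correct and follows essentially the same route as the paper: the attention weights lie in the simplex, so each attention output is a convex combination of value vectors with norm at most $\Gamma_2$ times the largest key-token norm. The triangle inequality through the two residual updates, together with $\sqrt{d}\le\sqrt{d_E}$ and dropping the last coordinates, then gives the stated constant $(1+2\Gamma_2+2\Gamma_2^2)\sqrt{d_E}$. The only difference from the paper is ordering (the paper writes the center bound first and then substitutes the point bound), which is cosmetic.
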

\begin{proof}
Denoting $\bar{\mX}^{(0)} \gets E_X(\mX)$ and $\bar{\mC}^{(0)} \gets E_C(\mC)$, we can see that $\norm{ \bar{\mX}^{(0)}_{:,i} } \leq \sqrt{d}$ for any $i \in \iset{ n }$ given Assumption~\ref{asp:bnd-input} and the fact that $\mY^{(0)} = \mathbf{0}_{(\demb - d) \times n}$ for any initial token embedding $E_X$ for the points. For the embedded initial cluster centers, for any $j \in \iset{ k }$, their norms $\norm{ \bar{\mC}^{(0)}_{:,j} } \leq \sqrt{ d + 1 }$ with the \oh token embedding scheme $\eohc$ and $\norm{ \bar{\mC}^{(0)}_{:,j} } \leq \sqrt{ d + \ceil{\log_2 k } }$ with the binary token embedding scheme $\ebnc$. Thus we can see that  $\norm{ \bar{\mC}^{(0)}_{:,j} } \leq \sqrt{d_E}$ with $d_E$ defined in \cref{eq:out-bound}.

Since the output of $\mC(\Theta) \triangleq \cmodel(\mX, \mC; \Theta)$ is the first $d$ rows $\bar{\mC}^{(1)}_{:d,:}$ of the token embeddings for the cluster centers $\bar{\mC}^{(1)}$ after a single transformer layer with updates defined as in \Cref{def:updates}, then we have
\begin{align}
\norm{ \mC(\Theta)_{:,j} }
= \norm{ \bar{\mC}^{(1)}_{:d,j} } \leq  \norm{ \bar{\mC}^{(1)}_{:,j} }
& =
\norm{
  \bar{\mC}^{(0)}_{:,j} + \left( \mV_3 \sum_{i = 1}^n a_{ji} \bar{\mX}^{(1)}_{:,i} \right)
+ \left( \mV_4 \sum_{j' = 1}^k b_{jj'} \bar{\mC}^{(0)}_{:,j'} \right)
}
\\
& \leq
\norm{\bar{\mC}^{(0)}_{:,j}} + \max_{i \in \iset{n}} \norm{ \mV_3 \bar{\mX}^{(1)}_{:,i} }
+ \max_{j' \in \iset{ k }} \norm{ \mV_4 \bar{\mC}^{(0)}_{:,j'} },
\label{eq:c-up-bnd}
\end{align}
where $a_{ji}$ are the cross-attention weights and $b_{jj'}$ are the self-attention weights in the cluster center updates in \cref{eq:updates}. The first inequality holds above since the norm of the first $d$ dimensions is smaller than the norm of all the $\demb$ dimensions. The last inequality applies triangle inequality, and the fact that the attention weights lie on a simplex (that is, $a_{ji} \in [0,1], \sum_i a_{ji} = 1$ and $b_{jj'} \in [0,1], \sum_{j'} b_{jj'} = 1$) thus corresponding to a convex sum of the value vectors, and a convex sum is upper-bounded by the maximum. Now, we need to bound the norm of the updated token embeddings $\bar{\mX}^{(1)}$ of the points:
\begin{align}
\forall i \in \iset{ n }, \norm{ \bar{\mX}^{(1)}_{:,i} }
& = \norm{ \bar{\mX}^{(0)}_{:,i} +  \left( \mV_1 \sum_{j' = 1}^k a'_{ij'} \bar{\mC}^{(0)}_{:,j'} \right)
+ \left( \mV_2 \sum_{i' = 1}^n b'_{ii'} \bar{\mX}^{(0)}_{:,i'} \right)
}
\\
& \leq \norm{ \bar{\mX}^{(0)}_{:,i} } + \max_{j' \in \iset{ k }}  \norm{ \mV_1 \bar{\mC}^{(0)}_{:,j'} }
+ \max_{i' \in \iset{ n } } \norm{ \mV_2 \bar{\mX}^{(0)}_{:,i'} }
\\
& \leq \sqrt{ d } + \Gamma_2 \sqrt{ d_E } + \Gamma_2 \sqrt{ d },
\end{align}
where $a'_{ij'}$ and $b'_{ii'}$ are the cross-attention and self-attention weights for the point embedding updates in \cref{eq:updates}. This gives us the bound on $\norm{ \bar{\mX}^{(1)}_{:,i} }$ in \cref{eq:out-bound} in the statement of the result. Plugging this in \cref{eq:c-up-bnd}, we see
\begin{align}
\norm{ \mC(\Theta)_{:,j} }
& \leq
\norm{\bar{\mC}^{(0)}_{:,j}} + \max_{i \in \iset{n}} \norm{ \mV_3 \bar{\mX}^{(1)}_{:,i} }
+ \max_{j' \in \iset{ k }} \norm{ \mV_4 \bar{\mC}^{(0)}_{:,j'} },
\\
& \leq
\sqrt{ d_E } + \Gamma_2 ( \sqrt{ d } + \Gamma_2 \sqrt{ d_E } + \Gamma_2 \sqrt{ d } )
+ \Gamma_2 \sqrt{ d_E }
\leq ( 1 + 2 \Gamma_2 + 2 \Gamma_2^2 ) \sqrt{ d_E },
\end{align}
thus giving us \cref{eq:out-bound} in the statement of lemma.
\end{proof}

\begin{lemma} \label{lem:loss-to-output}
Under Assumptions~\ref{asp:bnd-input},~\ref{asp:bnd-params},~\ref{asp:reg-argmax-stable}, and the conditions of \Cref{lem:out-bound}, for any clustering task $(\mX, \mC) \sim \gD$ with $n$ points in $d$ dimensions and $k$ clusters, the smoothed loss $\tilde{\Ls}_\Omega^\tau$ is input-stable with respect to the output of the model $\cmodel$ upon changes to the model parameters $\Theta$ as follows for any pair of parameters $\Theta, \Theta'$:
\begin{equation} \label{eq:loss-to-output}
\begin{split}
&
\left|
\tilde{\Ls}_\Omega^\tau( \mX, \cmodel( \mX, \mC; \Theta ) )
- \tilde{\Ls}_\Omega^\tau( \mX, \cmodel( \mX, \mC; \Theta' ) )
\right|
\\
& \quad
\leq
\kappa_1(n, \Gamma_2, d_E, \omega_\tau, \tau)
\norm{
\cmodel( \mX, \mC; \Theta ) - \cmodel( \mX, \mC; \Theta' )
}_{2,1}
\\ &
\quad
\text{where }
\kappa_1(n, \Gamma_2, d_E, \omega_\tau, \tau)
= 4 n   (1 + \Gamma_2)^2 \sqrt{d_E} \left( 1 + \frac{4 (1 + \Gamma_2)^4 d_E \omega_\tau }{\tau} \right)
\sim O \left(
\frac{n \omega_\tau
  d_E^{\nicefrac{3}{2}} }{\tau}
\right).
\end{split}
\end{equation}
\end{lemma}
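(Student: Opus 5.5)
The plan is to reduce the claim to \Cref{prop:smooth-obj-lip} applied point by point, and then to control the resulting distance vectors through the output bounds of \Cref{lem:out-bound}. Write $\hat{\mC} = \cmodel(\mX, \mC; \Theta)$ and $\hat{\mC}' = \cmodel(\mX, \mC; \Theta')$. For each point $\vx_i = \mX_{:,i}$ set $\vr^{(i)} = R(\vx_i, \hat{\mC})$ and $\vr'^{(i)} = R(\vx_i, \hat{\mC}')$. By \cref{eq:learn-obj} and the triangle inequality,
\[
\left| \tilde{\Ls}_\Omega^\tau(\mX, \hat{\mC}) - \tilde{\Ls}_\Omega^\tau(\mX, \hat{\mC}') \right| \leq \sum_{i=1}^n \left| \tilde{\texttt{L}}_\Omega^\tau(\vr^{(i)}) - \tilde{\texttt{L}}_\Omega^\tau(\vr'^{(i)}) \right| .
\]
So it suffices to (a) find a uniform range $[0, r^\star]$ containing all entries of $\vr^{(i)}$ and $\vr'^{(i)}$, (b) invoke \Cref{prop:smooth-obj-lip} with that $r^\star$, and (c) bound $\norm{\vr^{(i)} - \vr'^{(i)}}_1$ by the column differences of $\hat{\mC} - \hat{\mC}'$.

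For (a), \Cref{asp:bnd-input} gives $\norm{\vx_i} \leq \sqrt{d} \leq \sqrt{d_E}$. \Cref{lem:out-bound} gives $\norm{\hat{\mC}_{:,j}} \leq (1 + 2\Gamma_2 + 2\Gamma_2^2)\sqrt{d_E} \leq 2(1+\Gamma_2)^2 \sqrt{d_E}$, and the same bound holds for $\hat{\mC}'$, since $\Theta'$ also satisfies \Cref{asp:bnd-params}. Hence
\[
\norm{\vx_i - \hat{\mC}_{:,j}} \leq \sqrt{d_E} + 2(1+\Gamma_2)^2\sqrt{d_E} \leq 2(1+\Gamma_2)^2\sqrt{d_E} \cdot \tfrac{3}{2},
\]
which I will crudely round to $D \triangleq 2(1+\Gamma_2)^2\sqrt{d_E}$ up to a constant, adjusting constants to match $\kappa_1$. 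This yields $r^\star = D^2 \sim 4(1+\Gamma_2)^4 d_E$, which is exactly the factor appearing in $\kappa_1$. \Cref{prop:smooth-obj-lip} then gives per point the factor $\bigl(1 + 4(1+\Gamma_2)^4 d_E \omega_\tau/\tau\bigr)\norm{\vr^{(i)} - \vr'^{(i)}}_1$.

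For (c), use the factorization $|a^2 - b^2| = |a-b|(a+b)$ together with the reverse triangle inequality:
\[
\left| \eudi{\vx_i}{\hat{\mC}_{:,j}} - \eudi{\vx_i}{\hat{\mC}'_{:,j}} \right| \leq \bigl(\norm{\vx_i - \hat{\mC}_{:,j}} + \norm{\vx_i - \hat{\mC}'_{:,j}}\bigr)\norm{\hat{\mC}_{:,j} - \hat{\mC}'_{:,j}} \leq 2D \norm{\hat{\mC}_{:,j} - \hat{\mC}'_{:,j}} .
\]
Summing over $j$ gives $\norm{\vr^{(i)} - \vr'^{(i)}}_1 \leq 2D\norm{\hat{\mC} - \hat{\mC}'}_{2,1}$, where $\norm{\cdot}_{2,1}$ is the sum of the column $\ell_2$ norms. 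Summing over $i$ then produces the factor $n \cdot 2D = 4n(1+\Gamma_2)^2\sqrt{d_E}$. Multiplying by the factor from \Cref{prop:smooth-obj-lip} gives $\kappa_1$. The asymptotic form $O(n\omega_\tau d_E^{3/2}/\tau)$ follows by treating $\Gamma_2$ as constant.

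The only delicate step is the bookkeeping of constants in (a). The crude bound $\sqrt{d} + 2(1+\Gamma_2)^2\sqrt{d_E}$ slightly exceeds $2(1+\Gamma_2)^2\sqrt{d_E}$. To stay within the stated $\kappa_1$, I would sharpen the estimate. One option is to use $1 + 2\Gamma_2 + 2\Gamma_2^2 + 1 \leq 2(1+\Gamma_2)^2$ directly. This holds, since $2 + 2\Gamma_2 + 2\Gamma_2^2 \leq 2 + 4\Gamma_2 + 2\Gamma_2^2$. With it, $D = 2(1+\Gamma_2)^2\sqrt{d_E}$ is indeed a valid bound on all point-to-center distances, and the constants close exactly.
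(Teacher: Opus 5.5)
Your proposal is correct and follows the paper's proof essentially step for step. The paper also decomposes the loss per point, uses \Cref{lem:out-bound} to get the uniform range $r^\star \le 4(1+\Gamma_2)^4 d_E$ (your closing sharpening $2+2\Gamma_2+2\Gamma_2^2 \le 2(1+\Gamma_2)^2$ is exactly the inequality it relies on), applies \Cref{prop:smooth-obj-lip}, and bounds $\norm{\vr - \vr'}_1 \le 4(1+\Gamma_2)^2\sqrt{d_E}\,\norm{\hat{\mC} - \hat{\mC}'}_{2,1}$. The one cosmetic difference is how you bound $|r_j - r'_j|$: you use the difference of squares with the reverse triangle inequality, while the paper writes it as $\inpr{\hat{\mC}_{:,j} - \hat{\mC}'_{:,j}}{\hat{\mC}_{:,j} + \hat{\mC}'_{:,j} - 2\vx}$ and applies Cauchy--Schwarz, which yields the same constant.
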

\begin{proof}
Denoting the output cluster centers as $\mC(\Theta) \triangleq \cmodel( \mX, \mC; \Theta )$ and $\mC(\Theta') \triangleq \cmodel( \mX, \mC; \Theta' )$, first, we can see the expand the clustering loss $\tilde{\Ls}_\Omega^\tau$ to a per-point loss $\tilde{L}_\Omega^\tau$ as follows
\begin{align}
\left|
\tilde{\Ls}_\Omega^\tau( \mX, \mC( \Theta ) )
- \tilde{\Ls}_\Omega^\tau( \mX, \mC( \Theta' ) )
\right|
& =
\left|
\sum_{i = 1}^n \tilde{L}_\Omega^\tau( \mX_{:, i}, \mC( \Theta ) )
- \sum_{i = 1}^n \tilde{L}_\Omega^\tau( \mX_{:,i}, \mC( \Theta' ) )
\right|
\\
& \leq \sum_{i = 1}^n \left|
\tilde{L}_\Omega^\tau( \mX_{:, i}, \mC( \Theta ) )
- \tilde{L}_\Omega^\tau( \mX_{:,i}, \mC( \Theta' ) )
\right|
\\
& \leq
n \max_{i \in \iset{ n } } \left|
\tilde{L}_\Omega^\tau( \mX_{:, i}, \mC( \Theta ) )
- \tilde{L}_\Omega^\tau( \mX_{:,i}, \mC( \Theta' ) )
\right|.
\label{eq:full-per-sample-bnd}
\end{align}
We will bound the above for a generic $\vx \in [0,1]^d$. Using the notation that $\vr = R(\vx, \mC(\Theta))$ and $\vr' = R(\vx, \mC(\Theta'))$, where $R: \R^d \times \R^{d \times k} \to \R^k$ is as previously defined where the $j$-th entry of its output $r_j = \eudi{\vx}{ \mC(\Theta)_{:,j} }$ (and correspondingly $r'_j = \eudi{\vx}{ \mC(\Theta')_{:,j}}$), we have $\vr, \vr' \in [0, r^\star]^k$ with
\begin{align}
r^\star \triangleq \max_{\vx \in [0,1]^d, j \in \iset{ k }, \Theta} \eudi{\vx}{ \mC(\Theta)_{:,j} }
& \leq
\left( \max_{\vx \in [0,1]^d} \norm{ \vx } + \max_{j \in \iset{ k }, \Theta} \norm{ \mC(\Theta)_{:,j} } \right)^2
\\
& \leq
\left( \sqrt{d} +
(1 + 2\Gamma_2 + 2 \Gamma_2^2) \sqrt{d_E}
\right)^2
\\ &
\leq
4 d_E (1 + \Gamma_2 + \Gamma_2^2)^2 \leq 4(1 + \Gamma_2)^4 d_E.
\end{align}
where we bounded $\max_{j, \Theta} \norm{\mC(\Theta)}$ using \Cref{lem:out-bound}. Now we can utilize \Cref{prop:smooth-obj-lip} as follows:
\begin{align}
\left|
\tilde{L}_\Omega^\tau( \vx, \mC( \Theta ) )
- \tilde{L}_\Omega^\tau( \vx, \mC( \Theta' ) )
\right|
& =
\left|
\tilde{\texttt{L}}_\Omega^\tau( \vr )
- \tilde{\texttt{L}}_\Omega^\tau( \vr' )
\right|
\\ &
\leq
\left( 1 + \frac{r^\star \omega_\tau}{\tau} \right) \norm{ \vr - \vr' }_1
\\ &
= \left( 1 + \frac{4(1 + \Gamma_2)^4 d_E \omega_\tau}{\tau} \right)
\norm{ \vr - \vr' }_1.
\label{eq:per-sample-loss-dist-diff}
\end{align}
Finally, we can bound $\norm{\vr - \vr'}_1$ as follows:
\begin{align}
\norm{ \vr - \vr' }_1 & = \sum_{j = 1}^k | r_j - r'_j | \\
& = \sum_{j = 1}^k \left|
\inpr{
\left( \mC(\Theta)_{:,j} - \mC(\Theta')_{:,j} \right)
}{
\left( \mC(\Theta)_{:,j} + \mC(\Theta')_{:,j} - 2 \vx \right)
}
\right|
\\
& \leq
\sum_{j = 1}^k \norm{\mC(\Theta)_{:,j} - \mC(\Theta')_{:,j}}
\norm{\mC(\Theta)_{:,j} + \mC(\Theta')_{:,j} - 2 \vx}
\\
& \leq
\sum_{j = 1}^k \norm{\mC(\Theta)_{:,j} - \mC(\Theta')_{:,j}}
\left( 2 \sqrt{ d } + 2 ( 1 + 2\Gamma_2 + 2\Gamma_2^2 ) \sqrt{ d_E } \right)
\\
& \leq
\left( 4 ( 1 + \Gamma_2 )^2 \sqrt{ d_E } \right)
\sum_{j = 1}^k \norm{\mC(\Theta)_{:,j} - \mC(\Theta')_{:,j}}
\\
& =
4 ( 1 + \Gamma_2 )^2 \sqrt{ d_E }
\norm{\mC(\Theta) - \mC(\Theta')}_{2,1}.
\end{align}
Finally, combining the above with \cref{eq:per-sample-loss-dist-diff} and \cref{eq:full-per-sample-bnd} gives us the inequality in \cref{eq:loss-to-output} in the statement of the result.
\end{proof}
The above result ties the sensitivity of the loss to the model parameters $\left| \tilde{\Ls}_\Omega^\tau( \mX, \mC( \Theta ) ) - \tilde{\Ls}_\Omega^\tau( \mX, \mC( \Theta' ) ) \right|$ to the sensitivity of the model output to the model parameters $\norm{\mC(\Theta) - \mC(\Theta')}_{2,1}$. To bound this $\norm{\mC(\Theta) - \mC(\Theta')}_{2,1}$ with perturbations to the model parameters, we have to consider the attention based updates in \Cref{def:attn-op}. Since there are four different attention operations in \cref{eq:attn-op} in \Cref{def:attn-op}, we will first present bounds on the sensitivity of the output of a single attention module $\attned\gamma(\cdots)$ to changes in the specific model parameters (the query/key/value projection matrices) for generic query and key tokens $\mZ, \mZ'$, and then apply them to each of the four attention modules with the appropriate keys and queries. These results are presented in the following, and then summarized in \Cref{tab:terms-dependencies}.
\begin{definition} \label{def:dispersion}
For any $\vv \in \R^m$, let its dispersion be defined as $D(\vv)$ with a function $D: \R^m \to \R_{\geq 0}$ such that
\begin{equation}\label{eq:dispersion}
D(\vv) = \left( \max_{i \in \iset{ m } } v_i  \right) - \left( \min_{i \in \iset{ m } } v_i  \right).
\end{equation}
\end{definition}

\begin{lemma}[adapted from \citet{li2023transformers}, Lemma B.3] \label{lem:smax-in-stab}
For any $\vv, \vv' \in R^m$ such that $D(\vv), D(\vv') \leq \delta$ for some $\delta > 0$, the softmax operation is bounded and input-stable as follows:
\begin{equation} \label{eq:smax-in-stab}
\norm{ \softmax(\vv) }_\infty \leq \frac{\exp(\delta)}{m},
\quad
\norm{ \softmax(\vv) - \softmax(\vv') }_1 \leq \frac{2 \exp(\delta)}{m} \norm{ \vv - \vv' }_1.
\end{equation}
\end{lemma}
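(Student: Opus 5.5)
The plan is to prove the two claims in \Cref{lem:smax-in-stab} separately. Both follow from elementary properties of $\softmax(\vv)_i = e^{v_i}/\sum_{l} e^{v_l}$ together with the dispersion bound $D(\vv) \leq \delta$.

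For the $\ell_\infty$ bound, I would first use that softmax is invariant to adding a constant to every entry. This lets us take $\max_i v_i = 0$ without loss of generality, so that every $v_l \in [-\delta, 0]$. The numerator of each entry is then at most $e^{0} = 1$. Each term in the denominator is at least $e^{-\delta}$, so the denominator is at least $m e^{-\delta}$. Combining these gives $\softmax(\vv)_i \leq \exp(\delta)/m$ for every $i$, which is the first inequality in \cref{eq:smax-in-stab}.

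For the stability bound, I would interpolate along the segment between the two inputs. Set $\vv(s) = \vv' + s(\vv - \vv')$ for $s \in [0,1]$ and write $\vp(s) = \softmax(\vv(s))$. The dispersion $D$ is a seminorm (max minus min), hence convex, so $D(\vv(s)) \leq \max\{D(\vv), D(\vv')\} \leq \delta$ along the whole path. This means the $\ell_\infty$ bound already proved applies to every $\vp(s)$. The softmax Jacobian is $J = \mathrm{diag}(\vp) - \vp\vp^\top$, so for a direction $\vu = \vv - \vv'$ we have $(J\vu)_i = p_i(u_i - \langle \vp, \vu\rangle)$. I would bound its $\ell_1$ norm as
\[
\norm{J\vu}_1 \leq \sum_i p_i |u_i| + |\langle \vp, \vu\rangle| \sum_i p_i \leq 2 \norm{\vp}_\infty \norm{\vu}_1 .
\]
Substituting $\norm{\vp(s)}_\infty \leq \exp(\delta)/m$ and integrating from $s=0$ to $s=1$, via $\softmax(\vv) - \softmax(\vv') = \int_0^1 J(s)\vu \, ds$, yields the second inequality in \cref{eq:smax-in-stab}.

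The main subtlety is keeping the $\ell_\infty$ bound valid at every intermediate point of the path, since the hypothesis only controls the dispersion at the two endpoints. Convexity of $D$ is exactly what closes this gap, so I would state it explicitly: $D$ is a supremum of the linear functionals $v_i - v_j$, hence a convex function.
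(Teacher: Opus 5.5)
Your proof is correct. There is no in-paper argument to compare it with: the paper states this lemma without proof, importing it (``adapted'') from \citet{li2023transformers}. Your route is a clean, self-contained proof that reaches exactly the stated constants. It uses shift invariance for the entrywise bound. It then integrates the Jacobian $\mathrm{diag}(\vp)-\vp\vp^\top$ along the segment, with $\norm{(\mathrm{diag}(\vp)-\vp\vp^\top)\bm{u}}_1\le 2\norm{\vp}_\infty\norm{\bm{u}}_1$.

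The step you flagged is indeed the one that carries the adaptation. The hypothesis bounds only the spread $D$, not the level of the entries, so $\vv$ and $\vv'$ may sit at very different offsets. What keeps every intermediate $\vv(s)$ admissible is the convexity of the seminorm $D$, rather than membership in a common bounded box. In the paper's later uses, both vectors already lie in a common box. The attention scores $-(\gamma/\sqrt{\demb})\,\sqdist(\cdot,\cdot)$ all lie in $[-\delta,0]$, and the regularized-$\min$ inputs lie in $[0,r^\star/\tau]$. A box-based version would therefore also suffice there, so your argument proves slightly more than those applications need.
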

\begin{lemma}[attention score dispersion] \label{lem:smax-l2-dist-disp}
For queries $\mZ \in \R^{\demb \times m}$ and keys $\mZ' \in \R^{\demb \times m'}$ such that $ \norm{ \mZ_{:, i} } \leq \rho$ for all $i \in \iset{ m }$ and $\norm{ \mZ'_{:,i} } \leq \rho'$ for all $i \in \iset{ m' }$, under the assumption~\ref{asp:bnd-params}, for $\vv_i = -(\nicefrac{\gamma}{\sqrt{\demb}}) \sqdist( \mQ_\mu \mZ_{:,i}, \mK_\mu \mZ' )$, we have the dispersion bounded as
\begin{equation} \label{eq:smax-l2-dist-disp}
\forall i \in \iset{ m },
D (\vv_i) \leq \delta(\rho, \rho', \Gamma_1, \gamma, \demb) = (\nicefrac{\gamma}{\sqrt{\demb}}) \Gamma_1^2 (\rho + \rho')^2.
\end{equation}
\end{lemma}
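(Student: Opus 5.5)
The plan is to bound each entry of $\vv_i$ explicitly and then bound the spread between its largest and smallest entries. Fix a query index $i \in \iset{m}$. By \Cref{def:attn-op}, the $l$-th entry of $\vv_i$ is
\[
(\vv_i)_l = -(\nicefrac{\gamma}{\sqrt{\demb}}) \eudi{\mQ_\mu \mZ_{:,i}}{\mK_\mu \mZ'_{:,l}}, \qquad l \in \iset{m'}.
\]
Every entry is nonpositive, so $\max_l (\vv_i)_l \leq 0$. It follows that
\[
D(\vv_i) \leq -\min_l (\vv_i)_l = (\nicefrac{\gamma}{\sqrt{\demb}}) \max_l \eudi{\mQ_\mu \mZ_{:,i}}{\mK_\mu \mZ'_{:,l}}.
\]
It therefore suffices to bound the largest squared distance uniformly in $l$. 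Here I assume $\gamma \geq 0$, which the definition of an inverse temperature provides.

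Next I would use the triangle inequality and submultiplicativity of the spectral norm:
\[
\norm{\mQ_\mu \mZ_{:,i} - \mK_\mu \mZ'_{:,l}} \leq \norm{\mQ_\mu}\norm{\mZ_{:,i}} + \norm{\mK_\mu}\norm{\mZ'_{:,l}} \leq \Gamma_1 \rho + \Gamma_1 \rho'.
\]
This step relies on Assumption~\ref{asp:bnd-params} (the shared bound $\Gamma_1$ for queries and keys) together with the hypotheses $\norm{\mZ_{:,i}} \leq \rho$ and $\norm{\mZ'_{:,l}} \leq \rho'$. Squaring gives $\Gamma_1^2(\rho+\rho')^2$, and multiplying by $\nicefrac{\gamma}{\sqrt{\demb}}$ yields exactly the claimed $\delta(\rho,\rho',\Gamma_1,\gamma,\demb)$. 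The bound holds for every $i$.

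There is no real obstacle in this argument. The only point needing care is to keep the one-sided bound rather than the naive $\max - \min \leq 2\max|\cdot|$, which would introduce a spurious factor of $2$. Nonpositivity of all the scores is what makes the sharper bound available.
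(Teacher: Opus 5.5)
Your proposal is correct and follows essentially the same route as the paper. The paper factors out $\nicefrac{\gamma}{\sqrt{\demb}}$ and drops the nonnegative minimum squared distance, which is the same step as your observation that the largest score is nonpositive. It then bounds the largest squared distance by the triangle inequality and the spectral-norm bound $\Gamma_1$, exactly as you do.
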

\begin{proof}
We have the following for any $i \in \iset{ m }$:
\begin{align}
D(\vv_i)
& =
\max_{i' \in \iset{ m' } } \left\{ -(\nicefrac{\gamma}{\sqrt{\demb}}) \eudi{ \mQ_\mu \mZ_{:,i}}{ \mK_\mu \mZ'_{:,i'}} \right\}
- \min_{i' \in \iset{ m' } } \left\{ -(\nicefrac{\gamma}{\sqrt{\demb}}) \eudi{ \mQ_\mu \mZ_{:,i}}{ \mK_\mu \mZ'_{:,i'}} \right\}
\\
& = \nicefrac{\gamma}{\sqrt{\demb}} \left(
\max_{i' \in \iset{ m' } } \left\{ \eudi{ \mQ_\mu \mZ_{:,i}}{ \mK_\mu \mZ'_{:,i'}} \right\}
- \min_{i' \in \iset{ m' } } \left\{ \eudi{ \mQ_\mu \mZ_{:,i}}{ \mK_\mu \mZ'_{:,i'}} \right\}
\right)
\\
& \leq \nicefrac{\gamma}{\sqrt{\demb}}
\max_{i' \in \iset{ m' } } \left\{ \eudi{ \mQ_\mu \mZ_{:,i}}{ \mK_\mu \mZ'_{:,i'}} \right\}
\\
& \leq \nicefrac{\gamma}{\sqrt{\demb}} \left(
\norm{ \mQ_\mu \mZ_{:,i} } + \max_{i' \in \iset{m'} }\norm{ \mK_\mu \mZ'_{:,i'} }
\right)^2
\leq \nicefrac{\gamma}{\sqrt{\demb}} \left( \Gamma_1 (\rho + \rho') \right)^2,
\end{align}
giving us the definition of $\delta( \rho, \rho', \Gamma_1,\gamma, \demb)$ in the statement of the result.
\end{proof}
\begin{lemma} \label{lem:attn-op-vproj}
Under conditions of \Cref{lem:smax-l2-dist-disp}, the attention operation is stable with respect to the value projection matrix as follows:
\begin{equation}\label{eq:attn-op-vproj}
\norm{
\attned\gamma(\mZ, \mZ'; (\mQ_\mu, \mK_\mu, \mV_\mu))
- \attned\gamma(\mZ, \mZ'; (\mQ_\mu, \mK_\mu, \mV_\mu'))
}_{2,1}
\leq \kappa_2(m, \rho') \norm{ \mV_\mu - \mV'_\mu },
\end{equation}
with $\kappa_2(m, \rho') = m \rho'$.
\end{lemma}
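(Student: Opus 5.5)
The plan is to note that the value projection enters the attention operation in \Cref{def:attn-op} only linearly, outside the softmax. Write $\mA \triangleq \softmax\left( -(\nicefrac{\gamma}{\sqrt{\demb}}) \sqdist( \mQ_\mu \mZ, \mK_\mu \mZ') \right) \in \R^{m' \times m}$. This matrix depends only on $(\mQ_\mu, \mK_\mu)$ and the tokens, so it is \emph{identical} for $\mV_\mu$ and $\mV'_\mu$. The difference of the two attention outputs is therefore exactly $(\mV_\mu - \mV'_\mu)\mZ'\mA$. No softmax stability (\Cref{lem:smax-in-stab}) or dispersion bound (\Cref{lem:smax-l2-dist-disp}) is needed here; those are reserved for the query/key perturbation lemmas.

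Next I would bound the $(2,1)$-norm column by column. The $i$-th column of $(\mV_\mu - \mV'_\mu)\mZ'\mA$ equals $(\mV_\mu - \mV'_\mu)\sum_{i'=1}^{m'} \mA_{i' i}\, \mZ'_{:,i'}$. Because softmax is applied columnwise, the weights $\mA_{i'i}$ are nonnegative and sum to one over $i'$. So $\sum_{i'} \mA_{i'i}\mZ'_{:,i'}$ is a convex combination of key tokens, and its norm is at most $\max_{i'}\norm{\mZ'_{:,i'}} \leq \rho'$ by the triangle inequality. The spectral norm then gives a column norm of at most $\norm{\mV_\mu - \mV'_\mu}\,\rho'$.

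Summing over the $m$ query columns gives $\norm{(\mV_\mu - \mV'_\mu)\mZ'\mA}_{2,1} \leq m\rho' \norm{\mV_\mu - \mV'_\mu}$, which is $\kappa_2(m,\rho') = m\rho'$. There is no real obstacle. The only point needing care is the orientation of the softmax, and the resulting dimensions $m' \times m$ of $\mA$. It must be columnwise over keys for each query, as in \Cref{def:attn-op}, so that the convex-combination argument applies per query column.
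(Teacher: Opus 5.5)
Your proposal is correct and takes essentially the same approach as the paper: the attention weights are unchanged when $\mV_\mu$ is replaced by $\mV'_\mu$, so each output column difference is $(\mV_\mu - \mV'_\mu)$ applied to a convex combination of key tokens of norm at most $\rho'$, and summing over the $m$ query columns gives $m\rho'\norm{\mV_\mu - \mV'_\mu}$. Your remark that neither softmax stability nor the dispersion bound is needed is also accurate; the paper's proof uses neither.
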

\begin{proof}
Expanding out the attention operation, we can see that
\begin{align}
& \norm{
\attned\gamma(\mZ, \mZ'; (\mQ_\mu, \mK_\mu, \mV_\mu))
- \attned\gamma(\mZ, \mZ'; (\mQ_\mu, \mK_\mu, \mV_\mu'))
}_{2,1}
\nonumber
\\ & \quad
=
\sum_{i = 1}^m \norm{
\sum_{j = 1}^{m'} a_{ij} \mV_\mu \mZ'_{:,j} -
\sum_{j = 1}^{m'} a_{ij} \mV'_\mu \mZ'_{:,j}
}
=
\sum_{i = 1}^m \norm{
(\mV_\mu - \mV'_\mu) \sum_{j = 1}^{m'} a_{ij} \mZ'_{:,j}
}
\\ & \quad
\leq
\sum_{i = 1}^m \norm{(\mV_\mu - \mV'_\mu)} \norm{ \sum_{j = 1}^{m'} a_{ij} \mZ'_{:,j} }
\leq
\sum_{i = 1}^m \norm{ \mV_\mu - \mV'_\mu } \max_{j \in \iset{ m' } } \norm{ \mZ'_{:,j} }
\leq m \rho' \norm{ \mV_\mu - \mV'_\mu },
\end{align}
where $a_{ij}$ are the softmax attention weights for the $i$-th query computed by $\softmax(-\nicefrac{\gamma}{\sqrt{\demb}} \sqdist( \mQ_\mu \mZ_{:,i}, \mK_\mu \mZ' ) )$, thus giving us $\kappa_2(m, \rho')$ in the statement of the result.
\end{proof}
\begin{lemma} \label{lem:attn-op-qproj}
Under conditions of \Cref{lem:smax-l2-dist-disp}, the attention operation is stable with respect to the query projection matrix as follows:
\begin{equation}\label{eq:attn-op-qproj}
\begin{split}
&
\norm{
\attned\gamma(\mZ, \mZ'; (\mQ_\mu, \mK_\mu, \mV_\mu))
- \attned\gamma(\mZ, \mZ'; (\mQ_\mu', \mK_\mu, \mV_\mu))
}_{2,1}
\\ & \quad
\leq \kappa_3(m, \rho, \rho', \Gamma_1, \Gamma_2, \gamma, \demb)
\norm{ \mQ_\mu - \mQ'_\mu },
\\
& \quad\
\text{where} \quad
\kappa_3(m, \rho, \rho', \Gamma_1, \Gamma_2, \gamma, \demb) = \frac{4 m \gamma \Gamma_1 \Gamma_2 \rho \rho' (\rho + \rho') }{\sqrt{\demb}} \exp( \delta( \rho, \rho', \Gamma_1, \gamma, \demb) ).
\end{split}
\end{equation}
\end{lemma}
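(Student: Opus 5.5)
The plan is to bound each query column separately and sum over $i \in \iset{m}$. Write $a_{ij}$ and $a'_{ij}$ for the softmax weights of the $i$-th query computed with $\mQ_\mu$ and with $\mQ'_\mu$, and let $\vv_i, \vv'_i \in \R^{m'}$ be the corresponding score vectors,
\[
(\vv_i)_j = -(\nicefrac{\gamma}{\sqrt{\demb}})\eudi{\mQ_\mu \mZ_{:,i}}{\mK_\mu \mZ'_{:,j}} ,
\]
and similarly for $\vv'_i$ with $\mQ'_\mu$. The $i$-th column of the difference of the two attention outputs is then $\mV_\mu \sum_j (a_{ij} - a'_{ij}) \mZ'_{:,j}$. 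Its norm is at most $\Gamma_2 \rho' \norm{\va_i - \va'_i}_1$, using $\norm{\mV_\mu} \leq \Gamma_2$ and $\norm{\mZ'_{:,j}} \leq \rho'$. This reduces the problem to controlling the $\ell_1$ change in the softmax weights.

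Next I would apply \Cref{lem:smax-in-stab}. Since $\mQ'_\mu$ also satisfies Assumption~\ref{asp:bnd-params}, \Cref{lem:smax-l2-dist-disp} gives $D(\vv_i), D(\vv'_i) \leq \delta = \delta(\rho,\rho',\Gamma_1,\gamma,\demb)$. Hence
\[
\norm{\va_i - \va'_i}_1 \leq \frac{2\exp(\delta)}{m'} \norm{\vv_i - \vv'_i}_1 .
\]
To bound the score change entrywise, set $\vq = \mQ_\mu \mZ_{:,i}$, $\vq' = \mQ'_\mu \mZ_{:,i}$ and $\vk = \mK_\mu \mZ'_{:,j}$. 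Then
\[
\eudi{\vq}{\vk} - \eudi{\vq'}{\vk} = \inpr{\vq - \vq'}{\vq + \vq' - 2\vk}.
\]
Its absolute value is at most $\norm{\mQ_\mu - \mQ'_\mu}\rho \cdot (2\Gamma_1\rho + 2\Gamma_1\rho') = 2\Gamma_1 \rho(\rho+\rho')\norm{\mQ_\mu - \mQ'_\mu}$. Summing over the $m'$ keys and multiplying by $\gamma/\sqrt{\demb}$ gives
\[
\norm{\vv_i - \vv'_i}_1 \leq \frac{2 m' \gamma \Gamma_1 \rho (\rho+\rho')}{\sqrt{\demb}} \norm{\mQ_\mu - \mQ'_\mu}.
\]
The $m'$ factor cancels the $1/m'$ from the softmax lemma.

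Combining the three estimates, each column is bounded by $\Gamma_2\rho' \cdot 2\exp(\delta) \cdot 2\gamma\Gamma_1\rho(\rho+\rho')\norm{\mQ_\mu - \mQ'_\mu}/\sqrt{\demb}$. Summing the $m$ columns gives exactly
\[
\kappa_3 = \frac{4m\gamma\Gamma_1\Gamma_2\rho\rho'(\rho+\rho')}{\sqrt{\demb}} \exp(\delta).
\]

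The main obstacle is the softmax step. As stated, \Cref{lem:smax-in-stab} carries a factor $\exp(\delta)/m'$, which is only valid with the dispersion hypothesis holding for both score vectors. So I must check that the dispersion bound of \Cref{lem:smax-l2-dist-disp} applies uniformly to the perturbed parameters, which it does under the uniform norm bound. If one prefers to avoid that lemma's $1/m'$ normalization, a fallback is the mean-value form of softmax stability: the Jacobian of softmax has $\ell_1 \to \ell_1$ norm at most $2\norm{\softmax}_\infty \leq 2\exp(\delta)/m'$ along the segment between $\vv_i$ and $\vv'_i$, whose dispersion is also at most $\delta$ by convexity. This gives the same constant.
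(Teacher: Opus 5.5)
Your proposal is correct and matches the paper's own proof. You bound each query column by $\Gamma_2\rho'\norm{\mA_{:,i}-\mA'_{:,i}}_1$, apply \Cref{lem:smax-in-stab} with the dispersion bound of \Cref{lem:smax-l2-dist-disp} for both $\mQ_\mu$ and $\mQ'_\mu$, and control the score change through $\inpr{(\mQ_\mu-\mQ'_\mu)\mZ_{:,i}}{(\mQ_\mu+\mQ'_\mu)\mZ_{:,i}-2\mK_\mu\mZ'_{:,j}}$; the $m'$ then cancels and summing the $m$ columns gives exactly $\kappa_3$. Your mean-value fallback (softmax Jacobian with $\ell_1\to\ell_1$ norm at most $2\norm{\softmax(\cdot)}_\infty$, dispersion convex along the segment) is also valid and gives the same constant, though it is not needed.
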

\begin{proof}
First, focusing on just the $i$-th query corresponding to $\mZ_{:,i}$ for some $i \in \iset{ m }$, let $\vv = -(\nicefrac{\gamma}{\sqrt{\demb}}) \sqdist(\mQ_\mu \mZ_{:,i}, \mK_\mu \mZ')$ and $\vv' = -(\nicefrac{\gamma}{\sqrt{\demb}} ) \sqdist(\mQ'_\mu \mZ_{:,i}, \mK_\mu \mZ')$, \Cref{lem:smax-l2-dist-disp} tells us that their dispersions are bounded as $D(\vv), D(\vv') \leq \delta( \rho, \rho', \Gamma_1,\gamma, \demb)$.
Now the $\norm{ \vv - \vv' }_1$ can be bounded as
\begin{align}
\norm{\vv - \vv'}_1 & = \sum_{j=1}^{m'} | v_j - v'_j |
= \frac{\gamma}{\sqrt{\demb}} \sum_{j=1}^{m'} \left|
\eudi{\mQ_\mu \mZ_{:,i}}{\mK_\mu \mZ'_{:,j}} - \eudi{\mQ'_\mu \mZ_{:,i}}{\mK_\mu \mZ'_{:,j}}
\right|
\\
& \leq \frac{\gamma}{\sqrt{\demb}} \sum_{j=1}^{m'}
\norm{ (\mQ_\mu - \mQ'_\mu) \mZ_{:,i} } \norm{ (\mQ_\mu + \mQ'_\mu) \mZ_{:,i} - 2 \mK_\mu \mZ'_{:,j} }
\\
& \leq \frac{\gamma}{\sqrt{\demb}} m'
\norm{ (\mQ_\mu - \mQ'_\mu) } \norm{ \mZ_{:,i} } \left( \norm{ \mQ_\mu \mZ_{:,i} } + \norm{ \mQ'_\mu \mZ_{:,i}} + 2 \norm{ \mK_\mu \mZ'_{:,j} } \right)
\\
& \leq \frac{m' \gamma}{\sqrt{\demb}}
\norm{ (\mQ_\mu - \mQ'_\mu) } \rho  \left( 2\Gamma_1 (\rho + \rho')  \right).
\end{align}
Combining this with \Cref{lem:smax-in-stab}, we can see that
\begin{align}
\norm{ \softmax(\vv) - \softmax(\vv') }_1
& \leq \frac{2 \exp(\delta(\rho, \rho', \Gamma_1, \gamma, \demb)) }{m'} \norm{ \vv - \vv' }_1
\\
& \leq
\frac{2 \exp(\delta(\rho, \rho', \Gamma_1, \gamma, \demb)) }{m'}
\frac{m' \gamma}{\sqrt{\demb}}
\rho  \left( 2 \Gamma_1 (\rho + \rho')  \right)
\norm{ \mQ_\mu - \mQ'_\mu }
\\
\label{eq:smax-stab-qproj}
& =
\frac{4 \gamma \Gamma_1 \rho (\rho + \rho') \exp(\delta(\rho, \rho', \Gamma_1, \gamma, \demb)) }{\sqrt{\demb}}
\norm{ \mQ_\mu - \mQ'_\mu }.
\end{align}
Denoting $\mA = \softmax(- (\nicefrac{\gamma}{\sqrt{\demb}}) \sqdist(\mQ_\mu \mZ, \mK_\mu \mZ'))$ and $\mA' = \softmax(- (\nicefrac{\gamma}{\sqrt{\demb}}) \sqdist(\mQ'_\mu \mZ, \mK_\mu \mZ'))$ with the softmax applied columnwise, we have $\mA, \mA' \in \R^{m' \times m}$. Now expanding the left-hand-side of \cref{eq:attn-op-qproj}, and we see that
\begin{align}
& \norm{
\attned\gamma(\mZ, \mZ'; (\mQ_\mu, \mK_\mu, \mV_\mu))
- \attned\gamma(\mZ, \mZ'; (\mQ_\mu', \mK_\mu, \mV_\mu))
}_{2,1}
\nonumber
\\ & \quad
= \sum_{i = 1}^m \norm{ \mV_\mu \sum_{j = 1}^{m'} (A_{ji} - A'_{ji}) \mZ'_{:,j}  }
\\ & \quad
\leq
\norm{\mV_\mu} \sum_{i = 1}^m \norm{ \sum_{j = 1}^{m'} (A_{ji} - A'_{ji}) \mZ'_{:,j}  }
\leq
\Gamma_2  \sum_{i = 1}^m  \sum_{j = 1}^{m'} | A_{ji} - A'_{ji} | \norm{ \mZ'_{:,j}  }
\\ & \quad
\leq
\Gamma_2 \rho'  \sum_{i = 1}^m  \norm{ \mA_{:,i} - \mA'_{:,i} }_1
\\ & \quad
\leq
\Gamma_2 \rho' m
\frac{4 \gamma \Gamma_1 \rho (\rho + \rho') \exp(\delta(\rho, \rho', \Gamma_1, \gamma, \demb)) }{\sqrt{\demb}}
\norm{ \mQ_\mu - \mQ'_\mu },
\end{align}
where we invoked \cref{eq:smax-stab-qproj}, noting that  $\mA_{:,i} = \softmax(\vv)$ and $\mA'_{:,i} = \softmax(\vv')$ and summed over $i \in \iset{ m }$. This gives us the coefficient $\kappa_3(m, \rho, \rho', \Gamma_1, \Gamma_2, \gamma, \demb)$ in \cref{eq:attn-op-qproj} in the statement of the result.
\end{proof}
\begin{lemma} \label{lem:attn-op-kproj}
Under conditions of \Cref{lem:smax-l2-dist-disp}, the attention operation is stable with respect to the key projection matrix as follows:
\begin{equation}\label{eq:attn-op-kproj}
\begin{split}
&
\norm{
\attned\gamma(\mZ, \mZ'; (\mQ_\mu, \mK_\mu, \mV_\mu))
- \attned\gamma(\mZ, \mZ'; (\mQ_\mu, \mK'_\mu, \mV_\mu))
}_{2,1}
\\ & \quad
\leq \kappa_4(m, \rho, \rho', \Gamma_1, \Gamma_2, \gamma, \demb)
 \norm{ \mK_\mu - \mK'_\mu },
\\
& \quad \text{where} \quad
\kappa_4(m, \rho, \rho', \Gamma_1, \Gamma_2, \gamma, \demb) = \frac{4 m \gamma \Gamma_1 \Gamma_2  \rho'^2 (\rho + \rho') }{\sqrt{\demb}} \exp( \delta(\rho, \rho', \Gamma_1,  \gamma, \demb) ).
\end{split}
\end{equation}
\end{lemma}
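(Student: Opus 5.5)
The argument mirrors the proof of \Cref{lem:attn-op-qproj}, with the perturbation now entering through the keys instead of the query. Fix a query index $i \in \iset{m}$ and set $\vv = -(\nicefrac{\gamma}{\sqrt{\demb}})\,\sqdist(\mQ_\mu \mZ_{:,i}, \mK_\mu \mZ')$ and $\vv' = -(\nicefrac{\gamma}{\sqrt{\demb}})\,\sqdist(\mQ_\mu \mZ_{:,i}, \mK'_\mu \mZ')$. Both $\mK_\mu$ and $\mK'_\mu$ obey \Cref{asp:bnd-params}, so \Cref{lem:smax-l2-dist-disp} gives $D(\vv), D(\vv') \leq \delta(\rho, \rho', \Gamma_1, \gamma, \demb)$. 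This puts us in the setting of \Cref{lem:smax-in-stab}.

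\textbf{Step 1: bound $\norm{\vv - \vv'}_1$ entrywise.} For each key $j$, use the difference-of-squares identity
\[
\eudi{\vq}{\vk} - \eudi{\vq}{\vk'} = \inpr{\vk' - \vk}{2\vq - \vk - \vk'},
\]
with $\vq = \mQ_\mu \mZ_{:,i}$, $\vk = \mK_\mu \mZ'_{:,j}$ and $\vk' = \mK'_\mu \mZ'_{:,j}$. The first factor satisfies $\norm{(\mK_\mu - \mK'_\mu)\mZ'_{:,j}} \leq \rho' \norm{\mK_\mu - \mK'_\mu}$. The second satisfies $\norm{2\vq - \vk - \vk'} \leq 2\Gamma_1\rho + 2\Gamma_1\rho' = 2\Gamma_1(\rho+\rho')$. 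Summing over the $m'$ keys gives
\[
\norm{\vv - \vv'}_1 \leq \frac{m'\gamma}{\sqrt{\demb}}\,\rho'\,2\Gamma_1(\rho+\rho')\,\norm{\mK_\mu - \mK'_\mu}.
\]

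\textbf{Step 2: pass through the softmax.} Applying \Cref{lem:smax-in-stab}, the factor $2\exp(\delta)/m'$ cancels the $m'$ from Step 1. This yields
\[
\norm{\mA_{:,i} - \mA'_{:,i}}_1 \leq \frac{4\gamma\Gamma_1\rho'(\rho+\rho')\exp(\delta)}{\sqrt{\demb}}\,\norm{\mK_\mu - \mK'_\mu}.
\]

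\textbf{Step 3: bound the attention outputs.} The value matrix $\mV_\mu$ and the keys $\mZ'$ are the same on both sides, so the output difference for query $i$ is $\mV_\mu \sum_j (A_{ji} - A'_{ji})\mZ'_{:,j}$. Its norm is at most $\Gamma_2\rho'\,\norm{\mA_{:,i} - \mA'_{:,i}}_1$. Summing over $i \in \iset{m}$ then produces exactly
\[
\kappa_4 = \frac{4m\gamma\Gamma_1\Gamma_2\rho'^2(\rho+\rho')}{\sqrt{\demb}}\exp(\delta).
\]
One factor of $\rho'$ comes from Step 1 and the other from this step; this is the only difference from $\kappa_3$, which carries $\rho\rho'$ instead.

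\textbf{Main point to check.} The step needing care is Step 1. There the perturbation acts on the key side, so the norm bound must use $\norm{\mZ'_{:,j}} \leq \rho'$ rather than $\norm{\mZ_{:,i}} \leq \rho$, and the cross term must be bounded uniformly over $j$. The rest is bookkeeping identical to \Cref{lem:attn-op-qproj}.
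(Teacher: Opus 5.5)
Your proposal is correct and follows the paper's proof essentially step for step: the same per-query setup with dispersion controlled by \Cref{lem:smax-l2-dist-disp}, the same difference-of-squares bound on $\norm{\vv - \vv'}_1$ with the perturbation on the key side, the same cancellation of $m'$ via \Cref{lem:smax-in-stab}, and the same $\Gamma_2\rho'$ bound summed over the $m$ queries. Your explicit remark that $\mK'_\mu$ must also satisfy the spectral-norm bound (so that $D(\vv') \leq \delta$) is a point the paper leaves implicit.
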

\begin{proof}
First, focusing on just the $i$-th query corresponding to $\mZ_{:,i}$ for some $i \in \iset{ m }$, let $\vv = -(\nicefrac{\gamma}{\sqrt{\demb}}) \sqdist(\mQ_\mu \mZ_{:,i}, \mK_\mu \mZ')$ and $\vv' = -(\nicefrac{\gamma}{\sqrt{\demb}}) \sqdist(\mQ_\mu \mZ_{:,i}, \mK'_\mu \mZ')$, \Cref{lem:smax-l2-dist-disp} tells us that their dispersions are bounded as $D(\vv), D(\vv') \leq \delta(\rho, \rho', \Gamma_1, \gamma, \demb)$.
Now the $\norm{ \vv - \vv' }_1$ can be bounded as
\begin{align}
\norm{\vv - \vv'}_1 & = \sum_{j=1}^{m'} | v_j - v'_j |
= \frac{\gamma}{\sqrt{\demb}} \sum_{j=1}^{m'} \left|
\eudi{\mQ_\mu \mZ_{:,i}}{\mK_\mu \mZ'_{:,j}} - \eudi{\mQ_\mu \mZ_{:,i}}{\mK'_\mu \mZ'_{:,j}}
\right|
\\
& \leq \frac{\gamma}{\sqrt{\demb}} \sum_{j=1}^{m'}
\norm{ (\mK_\mu - \mK'_\mu) \mZ'_{:,j} } \norm{ (\mK_\mu + \mK'_\mu) \mZ'_{:,j} - 2 \mQ_\mu \mZ_{:,i} }
\\
& \leq \frac{\gamma}{\sqrt{\demb}} m'
\norm{ (\mK_\mu - \mK'_\mu) } \norm{ \mZ'_{:,j} } \left( \norm{ \mK_\mu \mZ'_{:,j} } + \norm{ \mK'_\mu \mZ'_{:,j}} + 2 \norm{ \mQ_\mu \mZ_{:,i} } \right)
\\
& \leq \frac{m' \gamma}{\sqrt{\demb}}
\norm{ (\mK_\mu - \mK'_\mu) } \rho'  \left( 2\Gamma_1 (\rho' + \rho)  \right).
\end{align}
Combining this with \Cref{lem:smax-in-stab}, we can see that
\begin{align}
\norm{ \softmax(\vv) - \softmax(\vv') }_1
& \leq \frac{2 \exp(\delta(\rho, \rho', \Gamma_1, \gamma, \demb)) }{m'} \norm{ \vv - \vv' }_1
\\
& \leq
\frac{2 \exp(\delta(\rho, \rho', \Gamma_1, \gamma, \demb)) }{m'}
\frac{m' \gamma}{\sqrt{\demb}}
\rho'  \left( 2\Gamma_1 (\rho + \rho')  \right)
\norm{ \mK_\mu - \mK'_\mu }
\\
\label{eq:smax-stab-kproj}
& =
\frac{4 \gamma \Gamma_1 \rho' (\rho + \rho') \exp(\delta(\rho, \rho', \Gamma_1, \gamma, \demb)) }{\sqrt{\demb}}
\norm{ \mK_\mu - \mK'_\mu }.
\end{align}
Denoting $\mA = \softmax(- (\nicefrac{\gamma}{\sqrt{\demb}}) \sqdist(\mQ_\mu \mZ, \mK_\mu \mZ'))$ and $\mA' = \softmax(- (\nicefrac{\gamma}{\sqrt{\demb}}) \sqdist(\mQ_\mu \mZ, \mK'_\mu \mZ'))$ with the softmax applied columnwise, we have $\mA, \mA' \in \R^{m' \times m}$. Now expanding the left-hand-side of \cref{eq:attn-op-kproj}, and we see that
\begin{align}
& \norm{
\attned\gamma(\mZ, \mZ'; (\mQ_\mu, \mK_\mu, \mV_\mu))
- \attned\gamma(\mZ, \mZ'; (\mQ_\mu, \mK'_\mu, \mV_\mu))
}_{2,1}
\nonumber
\\ & \quad
= \sum_{i = 1}^m \norm{ \mV_\mu \sum_{j = 1}^{m'} (A_{ji} - A'_{ji}) \mZ'_{:,j}  }
\\ & \quad
\leq
\norm{\mV_\mu} \sum_{i = 1}^m \norm{ \sum_{j = 1}^{m'} (A_{ji} - A'_{ji}) \mZ'_{:,j}  }
\leq
\Gamma_2  \sum_{i = 1}^m  \sum_{j = 1}^{m'} | A_{ji} - A'_{ji} | \norm{ \mZ'_{:,j}  }
\\ & \quad
\leq
\Gamma_2 \rho'  \sum_{i = 1}^m  \norm{ \mA_{:,i} - \mA'_{:,i} }_1
\\ & \quad
\leq
\Gamma_2 \rho' m
\frac{4 \gamma \Gamma_1 \rho' (\rho + \rho') \exp(\delta(\rho, \rho', \Gamma_1, \gamma, \demb)) }{\sqrt{\demb}}
\norm{ \mK_\mu - \mK'_\mu },
\end{align}
where we invoked \cref{eq:smax-stab-kproj}, noting that  $\mA_{:,i} = \softmax(\vv)$ and $\mA'_{:,i} = \softmax(\vv')$ and summed over $i \in \iset{ m }$. This gives us the coefficient $\kappa_4(m, \rho, \rho', \Gamma_1, \Gamma_2, \gamma, \demb)$ in \cref{eq:attn-op-kproj} in the statement of the result.
\end{proof}
\begin{lemma} \label{lem:attn-op-keys}
Under conditions of \Cref{lem:smax-l2-dist-disp}, the attention operation is stable with respect to the keys as follows:
\begin{equation}\label{eq:attn-op-keys}
\begin{split}
& \forall \mZ',  \mZ'' \in \R^{\demb \times m'}, \forall j \in \iset{ m' }, \norm{\mZ'_{:,j}} \leq \rho', \norm{\mZ''_{:,j}} \leq \rho',
\\
& \norm{
\attned\gamma(\mZ, \mZ'; (\mQ_\mu, \mK_\mu, \mV_\mu))
- \attned\gamma(\mZ, \mZ''; (\mQ_\mu, \mK_\mu, \mV_\mu))
}_{2,1}
\\ & \quad
\leq \kappa_5(m, m', \rho, \rho', \Gamma_1, \Gamma_2, \gamma, \demb)
 \norm{ \mZ' - \mZ'' }_{2,1},
\\
& \quad \text{where} \quad
\kappa_5(m, m', \rho, \rho', \Gamma_1, \Gamma_2, \gamma, \demb)
\\ & \qquad
= \left( 1 + 4 \left(\frac{\gamma}{\sqrt{\demb}}\right) \Gamma_1^2 \rho' (\rho + \rho') \right) \left(\frac{m \Gamma_2}{m'}\right) \exp( \delta(\rho, \rho', \Gamma_1, \gamma, \demb) ).
\end{split}
\end{equation}
\end{lemma}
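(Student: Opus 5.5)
We split the change in the attention output into two parts: the change in the value vectors with the attention weights held fixed, and the change in the attention weights with the values held fixed. Write $\mA = \softmax(-(\nicefrac{\gamma}{\sqrt{\demb}})\sqdist(\mQ_\mu\mZ, \mK_\mu\mZ'))$ and $\mA''$ for the same matrix with $\mZ''$ in place of $\mZ'$; both are in $\R^{m'\times m}$. For the $i$-th query,
\begin{equation*}
\mV_\mu\mZ'\mA_{:,i} - \mV_\mu\mZ''\mA''_{:,i}
= \mV_\mu(\mZ'-\mZ'')\mA_{:,i} + \mV_\mu\mZ''(\mA_{:,i}-\mA''_{:,i}).
\end{equation*}
We then take norms, use $\norm{\mV_\mu}\leq\Gamma_2$, and sum over $i\in\iset{m}$.

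For the first term we use the entrywise bound of \Cref{lem:smax-in-stab}. Since $\norm{\mA_{:,i}}_\infty \leq \exp(\delta)/m'$, with $\delta = \delta(\rho,\rho',\Gamma_1,\gamma,\demb)$ from \Cref{lem:smax-l2-dist-disp}, we get
\begin{equation*}
\norm{(\mZ'-\mZ'')\mA_{:,i}} \leq \sum_j A_{ji}\norm{\mZ'_{:,j}-\mZ''_{:,j}} \leq \frac{\exp(\delta)}{m'}\norm{\mZ'-\mZ''}_{2,1}.
\end{equation*}
Summing over $i$ gives $\frac{m\Gamma_2}{m'}\exp(\delta)\norm{\mZ'-\mZ''}_{2,1}$, which is the "$1$" part of $\kappa_5$. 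This is why we use the $\ell_\infty$ bound on softmax rather than the trivial bound $A_{ji}\le 1$: it produces the $1/m'$ factor.

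For the second term, $\norm{\mZ''(\mA_{:,i}-\mA''_{:,i})} \leq \rho'\norm{\mA_{:,i}-\mA''_{:,i}}_1$. The dispersions of both score vectors are at most $\delta$ by \Cref{lem:smax-l2-dist-disp}, since all key columns have norm at most $\rho'$. The stability part of \Cref{lem:smax-in-stab} then gives $\norm{\mA_{:,i}-\mA''_{:,i}}_1\leq \frac{2\exp(\delta)}{m'}\norm{\vv-\vv''}_1$. We bound $\norm{\vv-\vv''}_1$ entrywise, as in \Cref{lem:attn-op-kproj}, by a difference of squares:
\begin{equation*}
|v_j-v''_j| = \frac{\gamma}{\sqrt{\demb}}\left|\inpr{\mK_\mu(\mZ'_{:,j}-\mZ''_{:,j})}{\mK_\mu(\mZ'_{:,j}+\mZ''_{:,j})-2\mQ_\mu\mZ_{:,i}}\right| \leq \frac{\gamma}{\sqrt{\demb}}\Gamma_1\norm{\mZ'_{:,j}-\mZ''_{:,j}}\cdot 2\Gamma_1(\rho+\rho').
\end{equation*}
Summing over $j$ yields $\norm{\vv-\vv''}_1\leq 2(\nicefrac{\gamma}{\sqrt{\demb}})\Gamma_1^2(\rho+\rho')\norm{\mZ'-\mZ''}_{2,1}$. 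Multiplying through by $\Gamma_2\rho'$ and summing over $i$, the second term contributes $\frac{m\Gamma_2}{m'}\exp(\delta)\cdot 4(\nicefrac{\gamma}{\sqrt{\demb}})\Gamma_1^2\rho'(\rho+\rho')\norm{\mZ'-\mZ''}_{2,1}$. Adding the two contributions gives exactly $\kappa_5$.

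The main point requiring care is the first term. The naive bound on the attention weights loses the $1/m'$ factor, so \Cref{lem:smax-in-stab} must be applied there. The second term simply repeats the key-projection argument with the perturbation moved from $\mK_\mu$ onto the keys, and the rest is bookkeeping.
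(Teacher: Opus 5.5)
Your proof is correct and matches the paper's argument essentially step for step. It uses the same split $\mV_\mu(\mZ'-\mZ'')\mA + \mV_\mu\mZ''(\mA-\mA'')$, the entrywise bound $A_{ji}\le \exp(\delta)/m'$ from \Cref{lem:smax-in-stab} for the first term, and, for the second term, the difference-of-squares bound on $\norm{\vv-\vv''}_1$ combined with softmax input-stability, so the two contributions add up to exactly $\kappa_5$.
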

\begin{proof}
We can denote with $\mA = \softmax(- (\nicefrac{\gamma}{\sqrt{\demb}}) \sqdist(\mQ_\mu \mZ, \mK_\mu \mZ'))$ using  the keys $\mZ'$, and the corresponding $\mA' = \softmax(- (\nicefrac{\gamma}{\sqrt{\demb}}) \sqdist(\mQ_\mu \mZ, \mK_\mu \mZ''))$ with the keys $\mZ''$ (the softmax applied columnwise) we have $\mA, \mA' \in \R^{m' \times m}$. Now expanding the left-hand-side of \cref{eq:attn-op-keys}, and we see that
\begin{align}
& \nonumber
\norm{
\attned\gamma(\mZ, \mZ'; (\mQ_\mu, \mK_\mu, \mV_\mu))
- \attned\gamma(\mZ, \mZ''; (\mQ_\mu, \mK_\mu, \mV_\mu))
}_{2,1}
\\
& \quad
= \norm{ \mV_\mu \mZ' \mA - \mV_\mu \mZ'' \mA' }_{2,1}
\leq
\norm{ \mV_\mu (\mZ' - \mZ'') \mA }_{2,1}
+ \norm{ \mV_\mu \mZ'' (\mA - \mA') }_{2,1}.
\label{eq:attn-stab-input-terms}
\end{align}
We can bound each of these terms separately. First, noting that $A_{ji} \leq \exp(\delta(\rho, \rho', \Gamma_1, \gamma, \demb) ) / m'$ for any $(i, j)$ from \Cref{lem:smax-in-stab}, we have
\begin{align}
\norm{ \mV_\mu (\mZ' - \mZ'') \mA }_{2,1}
&
= \sum_{i=1}^m \norm{ \mV_\mu (\mZ' - \mZ'') \mA_{:,i} }
\leq \norm{\mV_\mu} \sum_{i = 1}^m \norm{ (\mZ' - \mZ'') \mA_{:,i} }
\\
&
= \norm{\mV_\mu} \sum_{i = 1}^m \norm{ \sum_{j = 1}^{m'} (\mZ'_{:,j} - \mZ''_{:,j}) A_{ji} }
\leq \norm{\mV_\mu} \sum_{i = 1}^m \sum_{j = 1}^{m'} \norm{ \mZ'_{:,j} - \mZ''_{:,j} } | A_{ji} |
\\
&
\leq \Gamma_2 \frac{\exp(\delta(\rho, \rho', \Gamma_1, \gamma, \demb) ) }{m'} \sum_{i = 1}^m \sum_{j = 1}^{m'}  \norm{ \mZ'_{:,j} - \mZ''_{:,j} }
\\ &
= \Gamma_2 \frac{\exp(\delta(\rho, \rho', \Gamma_1, \gamma, \demb) ) }{m'} m  \norm{ \mZ' - \mZ'' }_{2,1}.
\label{eq:attn-stab-input-term-1}
\end{align}
Next, focusing on just the $i$-th query corresponding to $\mZ_{:,i}$ for some $i \in \iset{ m }$, let $\vv =  -(\nicefrac{\gamma}{\sqrt{\demb}}) \sqdist(\mQ_\mu \mZ_{:,i}, \mK_\mu \mZ')$ and $\vv' = -(\nicefrac{\gamma}{\sqrt{\demb}}) \sqdist(\mQ_\mu \mZ_{:,i}, \mK_\mu \mZ'')$, \Cref{lem:smax-l2-dist-disp} tells us that their dispersions are bounded as $D(\vv), D(\vv') \leq \delta(\rho, \rho', \Gamma_1, \gamma, \demb)$. Note that $\mA_{:, i} = \softmax(\vv), \mA'_{:,i} = \softmax(\vv')$.
Now the $\norm{ \vv - \vv' }_1$ can be bounded as
\begin{align}
\norm{\vv - \vv'}_1 & = \sum_{j=1}^{m'} | v_j - v'_j |
= \frac{\gamma}{\sqrt{\demb}} \sum_{j=1}^{m'} \left|
\eudi{\mQ_\mu \mZ_{:,i}}{\mK_\mu \mZ'_{:,j}} - \eudi{\mQ_\mu \mZ_{:,i}}{\mK_\mu \mZ''_{:,j}}
\right|
\\
& \leq \frac{\gamma}{\sqrt{\demb}} \sum_{j=1}^{m'}
\norm{ \mK_\mu (\mZ'_{:,j} - \mZ''_{:,j} ) } \norm{ \mK_\mu ( \mZ'_{:,j} + \mZ''_{:,j} ) - 2 \mQ_\mu \mZ_{:,i} }
\\
& \leq \frac{\gamma}{\sqrt{\demb}} \sum_{j=1}^{m'}
\norm{ \mK_\mu } \norm{ \mZ'_{:,j} - \mZ''_{:,j} } \left( \norm{ \mK_\mu \mZ'_{:,j} } + \norm{ \mK_\mu \mZ''_{:,j}} + 2 \norm{ \mQ_\mu \mZ_{:,i} } \right)
\\
& \leq \frac{\gamma}{\sqrt{\demb}} \sum_{j=1}^{m'}
\Gamma_1 \norm{ \mZ'_{:,j} - \mZ''_{:,j} }  \left( 2\Gamma_1 (\rho' + \rho)  \right)
\\ &
= \frac{\gamma}{\sqrt{\demb}} \Gamma_1 \left( 2\Gamma_1 (\rho' + \rho)  \right)
\sum_{j=1}^{m'} \norm{ \mZ'_{:,j} - \mZ''_{:,j} }
\\ &
= \frac{\gamma}{\sqrt{\demb}} \Gamma_1 \left( 2\Gamma_1 (\rho' + \rho)  \right) \norm{ \mZ' - \mZ'' }_{2,1}.
\end{align}
Combining this with \Cref{lem:smax-in-stab}, we can see that
\begin{align}
\norm{ \softmax(\vv) - \softmax(\vv') }_1
& \leq \frac{2 \exp(\delta(\rho, \rho', \Gamma_1, \gamma, \demb)) }{m'} \norm{ \vv - \vv' }_1
\\
& \leq
\frac{2 \exp(\delta(\rho, \rho', \Gamma_1, \gamma, \demb)) }{m'}
\frac{\gamma}{\sqrt{\demb}} 2 \Gamma_1^2  (\rho' + \rho)  \norm{ \mZ' - \mZ'' }_{2,1}
\label{eq:smax-stab-keys}
\end{align}
Considering the second term on the right-hand-side of \cref{eq:attn-stab-input-terms}, we see that
\begin{align}
\norm{ \mV_\mu \mZ'' (\mA - \mA') }_{2,1}
&
= \sum_{i = 1}^m \norm{ \mV_\mu \mZ'' (\mA_{:,i} - \mA'_{:,i} ) }
= \sum_{i = 1}^m \norm{ \sum_{j = 1}^{m'} \mV_\mu \mZ''_{:,j} ( A_{ji} - A'_{ji} ) }
\\ &
\leq \sum_{i = 1}^m \sum_{j = 1}^{m'} \norm{ \mV_\mu \mZ''_{:,j} } | A_{ji} - A'_{ji} |
\leq \sum_{i = 1}^m \Gamma_2 \rho'  \sum_{j = 1}^{m'} | A_{ji} - A'_{ji} |
\\ &
=  \Gamma_2 \rho' \sum_{i = 1}^m  \norm{ \mA_{:,i} - \mA'_{:,i} }_1
\\ &
\leq \Gamma_2 \rho' \sum_{i = 1}^m
\frac{2 \exp(\delta(\rho, \rho', \Gamma_1, \gamma, \demb)) }{m'}
\frac{\gamma}{\sqrt{\demb}} 2 \Gamma_1^2  (\rho' + \rho)  \norm{ \mZ' - \mZ'' }_{2,1},
\label{eq:attn-stab-input-term-2}
\end{align}
where we applied \cref{eq:smax-stab-keys} noting that $\mA_{:,i} = \softmax(\vv)$ and $\mA'_{:,i} = \softmax(\vv')$.  Now combining \cref{eq:attn-stab-input-term-1} and \cref{eq:attn-stab-input-term-2} in \cref{eq:attn-stab-input-terms}, we have
\begin{align}
&
\norm{
\attned\gamma(\mZ, \mZ'; (\mQ_\mu, \mK_\mu, \mV_\mu))
- \attned\gamma(\mZ, \mZ''; (\mQ_\mu, \mK_\mu, \mV_\mu))
}_{2,1}
\nonumber
\\ & \quad
\leq
\norm{ \mZ' - \mZ'' }_{2,1}
\frac{m \Gamma_2}{m'}  \exp(\delta(\rho, \rho', \Gamma_1, \gamma, \demb) ) \left(
1 +
4 \frac{ \gamma  }{\sqrt{\demb}} \Gamma_1^2 \rho' (\rho' + \rho)
\right)
\\ & \quad
=  \norm{ \mZ' - \mZ'' }_{2,1} \kappa_5(m, m', \rho, \rho', \Gamma_1, \Gamma_2, \gamma, \demb),
\end{align}
as in \cref{eq:attn-op-keys} in the statement of the result.
\end{proof}
We summarize the above results --- the sensitivity of the attention module output to perturbations to its parameters (value projection matrix in \Cref{lem:attn-op-vproj}, query projection matrix in \Cref{lem:attn-op-qproj}, key projection matrix in \Cref{lem:attn-op-kproj}) and its input (keys in \Cref{lem:attn-op-keys}) --- in the following \Cref{tab:terms-dependencies} for convenience, noting the factors that affect the bounds in the first sub-table (such as the number of queries or key, their respective norm upper-bounds, parameter matrix upper-bounds and such), and how the factors affect the bounds in the second and third sub-tables (in terms of whether they worsen or improve with an increase in the factor).

\begin{table}[!!ht]
\caption{Different quantities affecting the various bounds in \Cref{lem:smax-l2-dist-disp}, \Cref{lem:smax-in-stab}, \Cref{lem:attn-op-vproj}, \Cref{lem:attn-op-qproj}, \Cref{lem:attn-op-kproj} and \Cref{lem:attn-op-keys}. First, we provide a description of the terms, and then we present the effect on the bound when the terms of interest are increased. In almost all cases, the bounds worsen with increase in the term. The only quantities that lead to a decrease in the terms are the embedding dimension $\demb$ and the number of key tokens $m'$. We}
\label{tab:terms-dependencies}
\centering
{\small
\begin{tabular}{ll}
\toprule
Term & Description \\
\midrule
$m$ & Number of query tokens \\
$m'$ & Number of key tokens \\
$\rho$ & Upper-bound on the query token norms \\
$\rho'$ & Upper-bound on the key token norms \\
$\Gamma_1$ & Upper-bound on the spectral norm of the query/key projection matrix \\
$\Gamma_2$ & Upper-bound on the spectral norm of the value projection matrix \\
$\gamma$ & Inverse temperature for the attention operation in \cref{eq:attn-op} \\
$\demb$ & Token embedding dimension for the transformer architecture \\
$\delta(\rho, \rho', \Gamma_1, \gamma, \demb)$ & Maximum dispersion in the attention softmax (defined in \Cref{lem:smax-l2-dist-disp}) \\
$\kappa_2(m, \rho')$ & Stability constant w.r.t. the value projection matrix in \Cref{lem:attn-op-vproj} \\
$\kappa_3(m, \rho, \rho', \Gamma_1, \Gamma_2, \gamma, \demb)$ & Stability constant w.r.t. the query projection matrix in \Cref{lem:attn-op-qproj} \\
$\kappa_4(m, \rho, \rho', \Gamma_1, \Gamma_2, \gamma, \demb)$ & Stability constant w.r.t. the key projection matrix in \Cref{lem:attn-op-kproj} \\
$\kappa_5(m, m', \rho, \rho', \Gamma_1, \Gamma_2, \gamma, \demb)$ & Stability constant w.r.t. the key tokens in \Cref{lem:attn-op-keys} \\
\bottomrule
\end{tabular}
{\footnotesize\begin{tabular}{lccc}
\toprule
&
$\delta(\rho, \rho', \Gamma_1, \gamma, \demb)$ (\cref{lem:smax-l2-dist-disp}) &
$\kappa_2(m, \rho')$ (\cref{lem:attn-op-vproj}) &
$\kappa_3(m, \rho, \rho', \Gamma_1, \Gamma_2, \gamma, \demb)$ (\cref{lem:attn-op-qproj}) \\
\midrule
&
$(\nicefrac{\gamma}{\sqrt{\demb}}) \Gamma_1^2 (\rho + \rho')^2$ &
$m \rho'$ &
$4 (\nicefrac{\gamma}{\sqrt{\demb}}) m \rho \rho' (\rho+\rho') \Gamma_1 \Gamma_2 \exp(\delta(\cdots))$
\\
\midrule
$m$ &  & \up & \up \\
$m'$ & \\
$\rho$ & \up &  & \up \\
$\rho'$ & \up & \up & \up \\
$\Gamma_1$ & \up &  & \up \\
$\Gamma_2$ & & & \up \\
$\gamma$ & \up &  & \up \\
$\demb$ & \down & & \down \\
\bottomrule
\end{tabular}}
{\footnotesize \begin{tabular}{lcc}
\toprule
&
$\kappa_4(m, \rho, \rho', \Gamma_1, \Gamma_2, \gamma, \demb)$ (\cref{lem:attn-op-kproj}) &
$\kappa_5(m, m', \rho, \rho', \Gamma_1, \Gamma_2, \gamma, \demb)$ (\cref{lem:attn-op-keys})\\
\midrule
&
$4 (\nicefrac{\gamma}{\sqrt{\demb}}) m \rho'^2 (\rho+\rho') \Gamma_1 \Gamma_2 \exp(\delta(\cdots))$ &
$(\nicefrac{m\Gamma_2}{m'})(1 + 4 (\nicefrac{\gamma}{\sqrt{\demb}}) \Gamma_1^2 \rho'(\rho+\rho')) \exp(\delta(\cdots))$ \\
\midrule
$m$ &  \up & \up \\
$m'$ &  & \down \\
$\rho$ & \up & \up \\
$\rho'$ & \up & \up \\
$\Gamma_1$ & \up & \up \\
$\Gamma_2$ & \up & \up \\
$\gamma$ & \up & \up \\
$\demb$ & \down & \down \\
\bottomrule
\end{tabular}}
}
\end{table}
\begin{table}[!!h]
\caption{The value of the relevant terms from \Cref{tab:terms-dependencies} in the different attention operations in \Cref{def:attn-op}. We write the attention operations compactly as $\attned\gamma(\mZ, \mZ', (\cdot, \mu))$ for query tokens $\mZ \in \R^{\demb \times m}$, key tokens $\mZ' \in \R^{\demb \times m'}$, where (i)~the query token norms are bounded as $\norm{\mZ_{:,i} } \leq \rho$ for all $i \in \iset{ m }$, (ii)~the key token norms are $\norm{\mZ'_{:,j}} \leq \rho'$ for all $j \in \iset{ m' }$, and (iii)~the $(\cdots, \mu)$ corresponds to the appropriate $(\mQ_\mu, \mK_\mu, \mV_\mu)$ query/key/value projection matrices. The bounds on the norms of the key and query tokens are obtained from \Cref{lem:out-bound}.}
\label{tab:term-value-attn-op}
\centering
{\footnotesize\begin{tabular}{lcccc}
\toprule
&
$\attned\gamma(\bar{\mX}^{(0)}, \bar{\mC}^{(0)}; (\cdot, 1))$ &
$\attned\gamma(\bar{\mX}^{(0)}, \bar{\mX}^{(0)}; (\cdot, 2))$ &
$\attned\gamma(\bar{\mC}^{(0)}, \bar{\mX}^{(1)}; (\cdot, 3))$ &
$\attned\gamma(\bar{\mC}^{(0)}, \bar{\mC}^{(0)}; (\cdot, 4))$ \\
\midrule
$m$    & $n$ & $n$ & $k$ & $k$ \\
$m'$   & $k$ & $n$ & $n$ & $k$ \\
$\rho$ & $\sqrt{d}$ & $\sqrt{d}$ & $\sqrt{d_E}$ & $\sqrt{d_E}$ \\
$\rho'$ & $\sqrt{d_E}$ & $\sqrt{d}$ & $(1+\Gamma_2)\sqrt{d} + \Gamma_2 \sqrt{d_E}$ & $\sqrt{d_E}$ \\
\bottomrule
\end{tabular}}
\end{table}

We can now apply these results to the attention modules in \Cref{def:attn-op} to bound the stability of the model output $\norm{ \mC(\Theta) - \mC(\Theta') }_{2,1}$. In \Cref{tab:term-value-attn-op}, we note the relevant quantities necessary to apply our general results in \Cref{tab:terms-dependencies} to the specific attention operations in \Cref{def:attn-op}. This presentation is for the ease of exposition, and we will formally present the bounds in the following.

\begin{theorem} \label{thm:out-stab-params}
Assuming bounded inputs (assumption~\ref{asp:bnd-input}) and bounded model parameters (assumption~\ref{asp:bnd-params}), we can bound the stability of the model output $\mC(\Theta)$ with respect to perturbations to the model parameters $\Theta$ as follows for any $\Theta, \Theta'$ given $(\mX, \mC)$ as input:
\begin{equation} \label{eq:out-stab-params}
\begin{split}
& \norm{ \mC(\Theta) - \mC(\Theta') }_{2,1}
\leq \phi \cdot \norm{ \Theta - \Theta' },
\\
&
\quad \text{where }
\phi \sim O \left(
\max\{ 1, \gamma^2 \} \cdot
\left( k \sqrt{d_E} \right) \cdot
\max\left\{
 1, \left(\frac{d_E}{\sqrt{\demb}} \right)^2
\right\}
\cdot
\textsf{poly}\left( \exp \left( \gamma \frac{d_E}{\sqrt{\demb}} \right) \right)
\right).
\end{split}
\end{equation}
\end{theorem}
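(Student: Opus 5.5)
The plan is to bound $\norm{\mC(\Theta)-\mC(\Theta')}_{2,1}$ by changing the parameters one attention module at a time. Each change is controlled by the stability lemmas (\Cref{lem:attn-op-vproj}, \Cref{lem:attn-op-qproj}, \Cref{lem:attn-op-kproj}, \Cref{lem:attn-op-keys}), using the token norm bounds of \Cref{lem:out-bound} collected in \Cref{tab:term-value-attn-op}.

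Since $\mC(\Theta)$ is the first $d$ rows of $\bar{\mC}^{(1)}$ and $\bar{\mC}^{(0)}$ does not depend on $\Theta$, the residual cancels. This gives
\begin{align*}
\norm{\mC(\Theta)-\mC(\Theta')}_{2,1}
&\leq \norm{\attned\gamma(\bar{\mC}^{(0)},\bar{\mX}^{(1)}(\Theta);\Theta_3)-\attned\gamma(\bar{\mC}^{(0)},\bar{\mX}^{(1)}(\Theta');\Theta'_3)}_{2,1}\\
&\quad+\norm{\attned\gamma(\bar{\mC}^{(0)},\bar{\mC}^{(0)};\Theta_4)-\attned\gamma(\bar{\mC}^{(0)},\bar{\mC}^{(0)};\Theta'_4)}_{2,1}.
\end{align*}

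\textbf{Self-attention term ($\mu=4$).} Telescope $(\mQ_4,\mK_4,\mV_4)\to(\mQ'_4,\mK_4,\mV_4)\to(\mQ'_4,\mK'_4,\mV_4)\to(\mQ'_4,\mK'_4,\mV'_4)$ and apply $\kappa_3,\kappa_4,\kappa_2$ with $m=m'=k$ and $\rho=\rho'=\sqrt{d_E}$. Every parameter difference is at most $\norm{\Theta-\Theta'}$, so this term is $O\big(k\sqrt{d_E}+\gamma k d_E^{3/2}/\sqrt{\demb}\,e^{\delta}\big)\norm{\Theta-\Theta'}$ with $\delta=O(\gamma d_E/\sqrt{\demb})$.

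\textbf{Cross-attention term ($\mu=3$).} Insert the intermediate quantity $\attned\gamma(\bar{\mC}^{(0)},\bar{\mX}^{(1)}(\Theta');\Theta_3)$ and split into two pieces.
\begin{itemize}
\item The parameter change at fixed keys is handled as for $\mu=4$, with $m=k$, $m'=n$, $\rho=\sqrt{d_E}$ and $\rho'=O(\sqrt{d_E})$. This is valid because $d\leq d_E$ in both embedding schemes.
\item The key change is bounded with \Cref{lem:attn-op-keys} by $\kappa_5(k,n,\cdot)\norm{\bar{\mX}^{(1)}(\Theta)-\bar{\mX}^{(1)}(\Theta')}_{2,1}$, where $\kappa_5=O\big((k/n)(1+\gamma d_E/\sqrt{\demb})e^{\delta}\big)$.
\end{itemize}

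\textbf{Bounding the $\bar{\mX}^{(1)}$ difference.} Again the residual $\bar{\mX}^{(0)}$ cancels, leaving the modules $\mu=1,2$ with queries and keys fixed, since $\bar{\mX}^{(0)}$ and $\bar{\mC}^{(0)}$ do not depend on $\Theta$. Telescoping with $\kappa_2,\kappa_3,\kappa_4$, now with $m=n$, gives $O\big(n\sqrt{d_E}(1+\gamma d_E/\sqrt{\demb})e^{\delta}\big)\norm{\Theta-\Theta'}$.

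\textbf{Assembly.} Multiplying by $\kappa_5$, the factor $n$ cancels against the $1/n$, which is why $\phi$ carries $k$ and not $n$. The product yields $k\sqrt{d_E}\,(1+\gamma d_E/\sqrt{\demb})^2 e^{2\delta}$. Using $(1+ab)^2\leq 4\max\{1,a^2\}\max\{1,b^2\}$ this becomes $\max\{1,\gamma^2\}\max\{1,(d_E/\sqrt{\demb})^2\}$. The exponentials $e^{\delta}$ are all of the form $\exp(c\,\gamma d_E/\sqrt{\demb})$ with $c$ depending only on $\Gamma_1,\Gamma_2$, so they collect into $\textsf{poly}(\exp(\gamma d_E/\sqrt{\demb}))$. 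Summing the $\mu=3$ and $\mu=4$ contributions gives $\phi$.

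The step needing the most care is this chained cross-attention term. There one must check that the dispersion bound $\delta$ of \Cref{lem:smax-l2-dist-disp} stays $O(\gamma d_E/\sqrt{\demb})$ when the key norm is $(1+\Gamma_2)\sqrt d+\Gamma_2\sqrt{d_E}$. One must also track that the $n/n$ cancellation and the product of two $(1+\gamma d_E/\sqrt{\demb})$ factors produce exactly the squared terms claimed, rather than a higher power.
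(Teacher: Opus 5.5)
Your proposal is correct and follows essentially the same route as the paper: the residual cancels, the $\mu=3,4$ modules are handled directly with $\kappa_2,\kappa_3,\kappa_4$, and the $\mu=1,2$ perturbations are propagated through $\bar{\mX}^{(1)}$ via $\kappa_5$. As in the paper, the factor $n$ in the inner bounds cancels the $1/n$ in $\kappa_5$, and $d\le d_E$ absorbs the key norm $(1+\Gamma_2)\sqrt d+\Gamma_2\sqrt{d_E}$ into $(1+2\Gamma_2)\sqrt{d_E}$. The only difference is organizational: the paper perturbs one weight matrix at a time and takes a maximum over the twelve resulting bounds, whereas your telescoping correctly handles all matrices changing at once and gives a sum of those terms, which differs by at most a constant factor and leaves the $O(\cdot)$ bound unchanged.
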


\begin{proof}
Given that $\Theta = \{ (\mQ_\mu, \mK_\mu, \mV_\mu), \mu = 1, \ldots, 4 \}$ and $\Theta' = \{ (\mQ'_\mu, \mK'_\mu, \mV'_\mu), \mu = 1, \ldots, 4 \}$, we will study the effect of modifying a single weight matrix at a time for ease of exposition (for example, effect of modifying $\mQ_1$ to $\mQ'_1$). In terms of notation, when we modify a single weight matrix (say $\mQ_1$ to $\mQ'_1$), and compare the outputs, we denote the outputs of the respective models as $\mC(\mQ_1)$ and $\mC(\mQ'_1)$ to make it explicit that we are studying the effect of just modifying $\mQ_1$ to $\mQ'_1$. To make explicit the effect of the model parameters, we will also write the final embeddings of points and clusters centers as $\bar{\mX}^{(1)}(\mQ_\mu)$ vs $\bar{\mX}^{(1)}(\mQ'_\mu)$ and $\bar{\mC}^{(1)}(\mQ_\mu)$ vs $\bar{\mC}^{(1)}(\mQ'_\mu)$ only if the change in the considered weight matrix affects the token embeddings. Note that modifying any of the $(\mQ_\mu, \mK_\mu, \mV_\mu), \mu = 3,4$ does not affect the final point embeddings $\bar{\mX}^{(1)}$, and in those situations, we will just denote the point embeddings as $\bar{\mX}^{(1)}$.

With that in mind, we first study the effect of modifying $(\mQ_4, \mK_4, \mV_4)$ one at a time, applying \Cref{lem:attn-op-qproj}, \Cref{lem:attn-op-kproj}, \Cref{lem:attn-op-vproj}, respectively. We see that
\begin{align}
\highlightmath{Gray}{ \norm{ \mC(\mQ_4) - \mC(\mQ'_4) }_{2,1} }
& \leq  \norm{ \bar{\mC}^{(1)}(\mQ_4) - \bar{\mC}^{(1)}(\mQ'_4) }_{2,1}
\\
& = \norm{ \attned\gamma(\bar{\mC}^{(0)}, \bar{\mC}^{(0)}; (\mQ_4, \mK_4, \mV_4))
- \attned\gamma(\bar{\mC}^{(0)}, \bar{\mC}^{(0)}; (\mQ'_4, \mK_4, \mV_4))  }_{2,1}
\\
& \leq \highlightmath{Gray}{
\kappa_3(k, \sqrt{d_E}, \sqrt{d_E}, \Gamma_1, \Gamma_2, \gamma, \demb) \norm{ \mQ_4 - \mQ'_4 }
}
\label{eq:q4-stab}
\\
\highlightmath{Gray}{ \norm{ \mC(\mK_4) - \mC(\mK'_4) }_{2,1} }
& \leq  \norm{ \bar{\mC}^{(1)}(\mK_4) - \bar{\mC}^{(1)}(\mK'_4) }_{2,1}
\\
& = \norm{ \attned\gamma(\bar{\mC}^{(0)}, \bar{\mC}^{(0)}; (\mQ_4, \mK_4, \mV_4))
- \attned\gamma(\bar{\mC}^{(0)}, \bar{\mC}^{(0)}; (\mQ_4, \mK'_4, \mV_4))  }_{2,1}
\\
& \leq \highlightmath{Gray}{
\kappa_4(k, \sqrt{d_E}, \sqrt{d_E}, \Gamma_1, \Gamma_2, \gamma, \demb) \norm{ \mK_4 - \mK'_4 }
}
\label{eq:k4-stab}
\\
\highlightmath{Gray}{ \norm{ \mC(\mV_4) - \mC(\mV'_4) }_{2,1} }
& \leq  \norm{ \bar{\mC}^{(1)}(\mV_4) - \bar{\mC}^{(1)}(\mV'_4) }_{2,1}
\\
& = \norm{ \attned\gamma(\bar{\mC}^{(0)}, \bar{\mC}^{(0)}; (\mQ_4, \mK_4, \mV_4))
- \attned\gamma(\bar{\mC}^{(0)}, \bar{\mC}^{(0)}; (\mQ_4, \mK_4, \mV'_4))  }_{2,1}
\\
& \leq \highlightmath{Gray}{\kappa_2(k, \sqrt{d_E}) \norm{ \mV_4 - \mV'_4 } }.
\label{eq:v4-stab}
\end{align}
Next, we consider the effect of modifying $(\mQ_3, \mK_3, \mV_3)$ one at a time, applying \Cref{lem:attn-op-qproj}, \Cref{lem:attn-op-kproj}, \Cref{lem:attn-op-vproj}, respectively:
\begin{align}
\highlightmath{Gray}{ \norm{ \mC(\mQ_3) - \mC(\mQ'_3) }_{2,1} }
& = \norm{ \attned\gamma(\bar{\mC}^{(0)}, \bar{\mX}^{(1)}; (\mQ_3, \mK_3, \mV_3))
- \attned\gamma(\bar{\mC}^{(0)}, \bar{\mX}^{(1)}; (\mQ'_3, \mK_3, \mV_3))  }_{2,1}
\\
& \leq \highlightmath{Gray}{
\kappa_3(k, \sqrt{d_E}, \sqrt{d_E} (1+2\Gamma_2), \Gamma_1, \Gamma_2, \gamma, \demb)
\norm{ \mQ_3 - \mQ'_3 }
} \label{eq:q3-stab}
\\
\highlightmath{Gray}{ \norm{ \mC(\mK_3) - \mC(\mK'_3) }_{2,1} }
& = \norm{ \attned\gamma(\bar{\mC}^{(0)}, \bar{\mX}^{(1)}; (\mQ_3, \mK_3, \mV_3))
- \attned\gamma(\bar{\mC}^{(0)}, \bar{\mX}^{(1)}; (\mQ_3, \mK'_3, \mV_3))  }_{2,1}
\\
& \leq \highlightmath{Gray}{
\kappa_4(k, \sqrt{d_E}, \sqrt{d_E} (1+2\Gamma_2), \Gamma_1, \Gamma_2, \gamma, \demb)
\norm{ \mK_3 - \mK'_3 }
} \label{eq:k3-stab}
\\
\highlightmath{Gray}{ \norm{ \mC(\mV_3) - \mC(\mV'_3) }_{2,1} }
& = \norm{ \attned\gamma(\bar{\mC}^{(0)}, \bar{\mX}^{(1)}; (\mQ_3, \mK_3, \mV_3))
- \attned\gamma(\bar{\mC}^{(0)}, \bar{\mX}^{(1)}; (\mQ_3, \mK_3, \mV'_3))  }_{2,1}
\\
& \leq \highlightmath{Gray}{
\kappa_2(k, \sqrt{d_E} (1+2\Gamma_2)) \norm{ \mV_3 - \mV'_3 }
} \label{eq:v3-stab}
\end{align}
Studying the effect of modifying $(\mQ_1, \mK_1, \mV_1)$ one at a time, we need to incorporate the effect of the modified $\bar{\mX}^{(1)}$ using \Cref{lem:attn-op-keys} along with the use of \Cref{lem:attn-op-qproj}, \Cref{lem:attn-op-kproj}, \Cref{lem:attn-op-vproj}:
\begin{align}
&
\highlightmath{Gray}{ \norm{ \mC(\mQ_1) - \mC(\mQ'_1) }_{2,1} }
\\ & \quad
= \norm{ \attned\gamma(\bar{\mC}^{(0)}, \bar{\mX}^{(1)} (\mQ_1); (\mQ_3, \mK_3, \mV_3))
- \attned\gamma(\bar{\mC}^{(0)}, \bar{\mX}^{(1)}(\mQ'_1); (\mQ_3, \mK_3, \mV_3))  }_{2,1}
\\ & \quad
\leq \kappa_5(k, n, \sqrt{d_E}, \sqrt{d_E} (1+2\Gamma_2), \Gamma_1, \Gamma_2, \gamma, \demb)
\norm{ \bar{\mX}^{(1)}(\mQ_1) - \bar{\mX}^{(1)}(\mQ'_1) }_{2,1}
\\ & \quad
\leq \kappa_5(k, n, \sqrt{d_E}, \sqrt{d_E} (1+2\Gamma_2), \Gamma_1, \Gamma_2, \gamma, \demb)
\nonumber
\\ & \qquad\qquad
\norm{ \attned\gamma(\bar{\mX}^{(0)}, \bar{\mC}^{(0)}; (\mQ_1, \mK_1, \mV_1))
- \attned\gamma(\bar{\mX}^{(0)}, \bar{\mC}^{(0)}; (\mQ'_1, \mK_1, \mV_1))  }_{2,1}
\\ & \quad
\leq \highlightmath{Gray}{
\kappa_5(k, n, \sqrt{d_E}, \sqrt{d_E} (1+2\Gamma_2), \Gamma_1, \Gamma_2, \gamma, \demb)
\cdot \kappa_3(n, \sqrt{d}, \sqrt{d_E}, \Gamma_1, \Gamma_2, \gamma, \demb)
\norm{ \mQ_1 - \mQ'_1 }
} \label{eq:q1-stab}
\\
&
\highlightmath{Gray}{ \norm{ \mC(\mK_1) - \mC(\mK'_1) }_{2,1} }
\\ & \quad
= \norm{ \attned\gamma(\bar{\mC}^{(0)}, \bar{\mX}^{(1)} (\mK_1); (\mQ_3, \mK_3, \mV_3))
- \attned\gamma(\bar{\mC}^{(0)}, \bar{\mX}^{(1)}(\mK'_1); (\mQ_3, \mK_3, \mV_3))  }_{2,1}
\\ & \quad
\leq \kappa_5(k, n, \sqrt{d_E}, \sqrt{d_E} (1+2\Gamma_2), \Gamma_1, \Gamma_2, \gamma, \demb)
\norm{ \bar{\mX}^{(1)}(\mK_1) - \bar{\mX}^{(1)}(\mK'_1) }_{2,1}
\\ & \quad
\leq \kappa_5(k, n, \sqrt{d_E}, \sqrt{d_E} (1+2\Gamma_2), \Gamma_1, \Gamma_2, \gamma, \demb)
\nonumber
\\ & \qquad\qquad
\norm{ \attned\gamma(\bar{\mX}^{(0)}, \bar{\mC}^{(0)}; (\mQ_1, \mK_1, \mV_1))
- \attned\gamma(\bar{\mX}^{(0)}, \bar{\mC}^{(0)}; (\mQ_1, \mK'_1, \mV_1))  }_{2,1}
\\ & \quad
\leq \highlightmath{Gray}{
\kappa_5(k, n, \sqrt{d_E}, \sqrt{d_E} (1+2\Gamma_2), \Gamma_1, \Gamma_2, \gamma, \demb)
\cdot \kappa_4(n, \sqrt{d}, \sqrt{d_E}, \Gamma_1, \Gamma_2, \gamma, \demb)
\norm{ \mK_1 - \mK'_1 }
} \label{eq:k1-stab}
\\
&
\highlightmath{Gray}{ \norm{ \mC(\mV_1) - \mC(\mV'_1) }_{2,1} }
\\ & \quad
= \norm{ \attned\gamma(\bar{\mC}^{(0)}, \bar{\mX}^{(1)} (\mV_1); (\mQ_3, \mK_3, \mV_3))
- \attned\gamma(\bar{\mC}^{(0)}, \bar{\mX}^{(1)}(\mV'_1); (\mQ_3, \mK_3, \mV_3))  }_{2,1}
\\ & \quad
\leq \kappa_5(k, n, \sqrt{d_E}, \sqrt{d_E} (1+2\Gamma_2), \Gamma_1, \Gamma_2, \gamma, \demb)
\norm{ \bar{\mX}^{(1)}(\mV_1) - \bar{\mX}^{(1)}(\mV'_1) }_{2,1}
\\ & \quad
\leq \kappa_5(k, n, \sqrt{d_E}, \sqrt{d_E} (1+2\Gamma_2), \Gamma_1, \Gamma_2, \gamma, \demb)
\nonumber
\\ & \qquad\qquad
\norm{ \attned\gamma(\bar{\mX}^{(0)}, \bar{\mC}^{(0)}; (\mQ_1, \mK_1, \mV_1))
- \attned\gamma(\bar{\mX}^{(0)}, \bar{\mC}^{(0)}; (\mQ_1, \mK_1, \mV'_1))  }_{2,1}
\\ & \quad
\leq \highlightmath{Gray}{
\kappa_5(k, n, \sqrt{d_E}, \sqrt{d_E} (1+2\Gamma_2), \Gamma_1, \Gamma_2, \gamma, \demb)
\cdot \kappa_2(n, \sqrt{d_E}) \norm{ \mV_1 - \mV'_1 }
} \label{eq:v1-stab}
\end{align}
Finally, we consider $(\mQ_2, \mK_2, \mV_2)$ employing the previously used results:
\begin{align}
&
\highlightmath{Gray}{ \norm{ \mC(\mQ_2) - \mC(\mQ'_2) }_{2,1} }
\\ & \quad
= \norm{ \attned\gamma(\bar{\mC}^{(0)}, \bar{\mX}^{(1)} (\mQ_2); (\mQ_3, \mK_3, \mV_3))
- \attned\gamma(\bar{\mC}^{(0)}, \bar{\mX}^{(1)}(\mQ'_2); (\mQ_3, \mK_3, \mV_3))  }_{2,1}
\\ & \quad
\leq \kappa_5(k, n, \sqrt{d_E}, \sqrt{d_E} (1+2\Gamma_2), \Gamma_1, \Gamma_2, \gamma, \demb)
\norm{ \bar{\mX}^{(1)}(\mQ_2) - \bar{\mX}^{(1)}(\mQ'_2) }_{2,1}
\\
& \quad
\leq \kappa_5(k, n, \sqrt{d_E}, \sqrt{d_E} (1+2\Gamma_2), \Gamma_1, \Gamma_2, \gamma, \demb)
\nonumber
\\ & \qquad\qquad
\norm{ \attned\gamma(\bar{\mX}^{(0)}, \bar{\mX}^{(0)}; (\mQ_2, \mK_2, \mV_2))
- \attned\gamma(\bar{\mX}^{(0)}, \bar{\mX}^{(0)}; (\mQ'_2, \mK_2, \mV_2))  }_{2,1}
\\
& \quad
\leq \highlightmath{Gray}{
\kappa_5(k, n, \sqrt{d_E}, \sqrt{d_E} (1+2\Gamma_2), \Gamma_1, \Gamma_2, \gamma, \demb)
\kappa_3(n, \sqrt{d}, \sqrt{d}, \Gamma_1, \Gamma_2, \gamma, \demb)
\norm{ \mQ_2 - \mQ'_2 }
} \label{eq:q2-stab}
\\
&
\highlightmath{Gray}{ \norm{ \mC(\mK_2) - \mC(\mK'_2) }_{2,1} }
\\ & \quad
= \norm{ \attned\gamma(\bar{\mC}^{(0)}, \bar{\mX}^{(1)} (\mK_2); (\mQ_3, \mK_3, \mV_3))
- \attned\gamma(\bar{\mC}^{(0)}, \bar{\mX}^{(1)}(\mK'_2); (\mQ_3, \mK_3, \mV_3))  }_{2,1}
\\
& \quad
\leq \kappa_5(k, n, \sqrt{d_E}, \sqrt{d_E} (1+2\Gamma_2), \Gamma_1, \Gamma_2, \gamma, \demb)
\norm{ \bar{\mX}^{(1)}(\mK_2) - \bar{\mX}^{(1)}(\mK'_2) }_{2,1}
\\
& \quad
\leq \kappa_5(k, n, \sqrt{d_E}, \sqrt{d_E} (1+2\Gamma_2), \Gamma_1, \Gamma_2, \gamma, \demb)
\nonumber
\\ & \qquad\qquad
\norm{ \attned\gamma(\bar{\mX}^{(0)}, \bar{\mX}^{(0)}; (\mQ_2, \mK_2, \mV_2))
- \attned\gamma(\bar{\mX}^{(0)}, \bar{\mX}^{(0)}; (\mQ_2, \mK'_2, \mV_2))  }_{2,1}
\\
& \quad
\leq \highlightmath{Gray}{
\kappa_5(k, n, \sqrt{d_E}, \sqrt{d_E} (1+2\Gamma_2), \Gamma_1, \Gamma_2, \gamma, \demb)
\kappa_4(n, \sqrt{d}, \sqrt{d}, \Gamma_1, \Gamma_2, \gamma, \demb)
\norm{ \mK_2 - \mK'_2 }
} \label{eq:k2-stab}
\\
&
\highlightmath{Gray}{ \norm{ \mC(\mV_2) - \mC(\mV'_2) }_{2,1} }
\\ & \quad
= \norm{ \attned\gamma(\bar{\mC}^{(0)}, \bar{\mX}^{(1)} (\mV_2); (\mQ_3, \mK_3, \mV_3))
- \attned\gamma(\bar{\mC}^{(0)}, \bar{\mX}^{(1)}(\mV'_2); (\mQ_3, \mK_3, \mV_3))  }_{2,1}
\\
& \quad
\leq \kappa_5(k, n, \sqrt{d_E}, \sqrt{d_E} (1+2\Gamma_2), \Gamma_1, \Gamma_2, \gamma, \demb)
\norm{ \bar{\mX}^{(1)}(\mV_2) - \bar{\mX}^{(1)}(\mV'_2) }_{2,1}
\\
& \quad
\leq \kappa_5(k, n, \sqrt{d_E}, \sqrt{d_E} (1+2\Gamma_2), \Gamma_1, \Gamma_2, \gamma, \demb)
\nonumber
\\ & \qquad\qquad
\norm{ \attned\gamma(\bar{\mX}^{(0)}, \bar{\mX}^{(0)}; (\mQ_2, \mK_2, \mV_2))
- \attned\gamma(\bar{\mX}^{(0)}, \bar{\mX}^{(0)}; (\mQ_2, \mK_2, \mV'_2))  }_{2,1}
\\
& \quad
\leq \highlightmath{Gray}{
\kappa_5(k, n, \sqrt{d_E}, \sqrt{d_E} (1+2\Gamma_2), \Gamma_1, \Gamma_2, \gamma, \demb)
\kappa_2(n, \sqrt{d}) \norm{ \mV_2 - \mV'_2 }
} \label{eq:v2-stab}
\end{align}
Putting all these together, and noting the definition of $\norm{ \Theta - \Theta' }$ as the maximum of the norms of the differences of the individual weight matrices, we can see that:
\begin{equation}
\norm{ \mC(\Theta) - \mC(\Theta') }_{2,1}
\leq
\phi \cdot \norm{ \Theta - \Theta' },
\end{equation}
where the quantity $\phi$ is defined as (also noting the equations from which the quantity is derived):
\begin{align}
\phi &=
\max \left\{
\begin{array}{ll}
\kappa_5(k, n, \sqrt{d_E}, \sqrt{d_E} (1+2\Gamma_2), \Gamma_1, \Gamma_2, \gamma, \demb)
\cdot
\kappa_3(n, \sqrt{d}, \sqrt{d_E}, \Gamma_1, \Gamma_2, \gamma, \demb)
& \eqref{eq:q1-stab}
\\
\kappa_5(k, n, \sqrt{d_E}, \sqrt{d_E} (1+2\Gamma_2), \Gamma_1, \Gamma_2, \gamma, \demb)
\cdot
\kappa_4(n, \sqrt{d}, \sqrt{d_E}, \Gamma_1, \Gamma_2, \gamma, \demb)
& \eqref{eq:k1-stab}
\\
\kappa_5(k, n, \sqrt{d_E}, \sqrt{d_E} (1+2\Gamma_2), \Gamma_1, \Gamma_2, \gamma, \demb)
\cdot
\kappa_2(n, \sqrt{d_E})
& \eqref{eq:v1-stab}
\\
\kappa_5(k, n, \sqrt{d_E}, \sqrt{d_E} (1+2\Gamma_2), \Gamma_1, \Gamma_2, \gamma, \demb)
\cdot
\kappa_3(n, \sqrt{d}, \sqrt{d}, \Gamma_1, \Gamma_2, \gamma, \demb)
& \eqref{eq:q2-stab}
\\
\kappa_5(k, n, \sqrt{d_E}, \sqrt{d_E} (1+2\Gamma_2), \Gamma_1, \Gamma_2, \gamma, \demb)
\cdot
\kappa_4(n, \sqrt{d}, \sqrt{d}, \Gamma_1, \Gamma_2, \gamma, \demb)
& \eqref{eq:k2-stab}
\\
\kappa_5(k, n, \sqrt{d_E}, \sqrt{d_E} (1+2\Gamma_2), \Gamma_1, \Gamma_2, \gamma, \demb)
\cdot
\kappa_2(n, \sqrt{d})
& \eqref{eq:v2-stab}
\\
\kappa_3(k, \sqrt{d_E}, \sqrt{d_E} (1+2\Gamma_2), \Gamma_1, \Gamma_2, \gamma, \demb)
& \eqref{eq:q3-stab}
\\
\kappa_4(k, \sqrt{d_E}, \sqrt{d_E} (1+2\Gamma_2), \Gamma_1, \Gamma_2, \gamma, \demb)
& \eqref{eq:k3-stab}
\\
\kappa_2(k, \sqrt{d_E} (1+2\Gamma_2))
& \eqref{eq:v3-stab}
\\
\kappa_3(k, \sqrt{d_E}, \sqrt{d_E}, \Gamma_1, \Gamma_2, \gamma, \demb)
& \eqref{eq:q4-stab}
\\
\kappa_4(k, \sqrt{d_E}, \sqrt{d_E}, \Gamma_1, \Gamma_2, \gamma, \demb)
& \eqref{eq:k4-stab}
\\
\kappa_2(k, \sqrt{d_E})
& \eqref{eq:v4-stab}
\end{array}
\right\}
\end{align}
To simplify the above definition of $\phi$, we note that, for a fixed $(m, \rho, \Gamma_1, \Gamma_2, \gamma, \demb)$, a larger $\rho'$ implies larger constant. Thus, we can see that
\begin{align}
\kappa_2(k, \sqrt{d_E}) & < \kappa_2(k, \sqrt{d_E}(1 + 2\Gamma_2)) \\
\kappa_2(k, \sqrt{d}) & < \kappa_2(k, \sqrt{d_E}) \\
\kappa_3(k, \sqrt{d_E}, \sqrt{d_E}, \Gamma_1, \Gamma_2, \gamma, \demb)
& <
\kappa_3(k, \sqrt{d_E}, \sqrt{d_E} (1+2\Gamma_2), \Gamma_1, \Gamma_2, \gamma, \demb)
\\
\kappa_3(n, \sqrt{d}, \sqrt{d}, \Gamma_1, \Gamma_2, \gamma, \demb)
& <
\kappa_3(n, \sqrt{d}, \sqrt{d_E}, \Gamma_1, \Gamma_2, \gamma, \demb)
\\
\kappa_4(k, \sqrt{d_E}, \sqrt{d_E}, \Gamma_1, \Gamma_2, \gamma, \demb)
& <
\kappa_4(k, \sqrt{d_E}, \sqrt{d_E} (1+2\Gamma_2), \Gamma_1, \Gamma_2, \gamma, \demb)
\\
\kappa_4(n, \sqrt{d}, \sqrt{d}, \Gamma_1, \Gamma_2, \gamma, \demb)
& <
\kappa_4(n, \sqrt{d}, \sqrt{d_E}, \Gamma_1, \Gamma_2, \gamma, \demb),
\end{align}
thus simplifying further to
\begin{align}
\phi &=
\max \left\{
\begin{array}{l}
\kappa_5(k, n, \sqrt{d_E}, \sqrt{d_E} (1+2\Gamma_2), \Gamma_1, \Gamma_2, \gamma, \demb)
\cdot
\kappa_3(n, \sqrt{d}, \sqrt{d_E}, \Gamma_1, \Gamma_2, \gamma, \demb)
\\
\kappa_5(k, n, \sqrt{d_E}, \sqrt{d_E} (1+2\Gamma_2), \Gamma_1, \Gamma_2, \gamma, \demb)
\cdot
\kappa_4(n, \sqrt{d}, \sqrt{d_E}, \Gamma_1, \Gamma_2, \gamma, \demb)
\\
\kappa_5(k, n, \sqrt{d_E}, \sqrt{d_E} (1+2\Gamma_2), \Gamma_1, \Gamma_2, \gamma, \demb)
\cdot
\kappa_2(n, \sqrt{d_E})
\\
\kappa_3(k, \sqrt{d_E}, \sqrt{d_E} (1+2\Gamma_2), \Gamma_1, \Gamma_2, \gamma, \demb)
\\
\kappa_4(k, \sqrt{d_E}, \sqrt{d_E} (1+2\Gamma_2), \Gamma_1, \Gamma_2, \gamma, \demb)
\\
\kappa_2(k, \sqrt{d_E} (1+2\Gamma_2))
\end{array}
\right\}
\end{align}
Next given the definitions of $\kappa_3$ and $\kappa_4$ in \Cref{lem:attn-op-qproj} and \Cref{lem:attn-op-kproj} respectively, we can see that if $\rho < \rho'$, $\kappa_4(\cdot)$ provides a larger constant, giving us:
\begin{align}
\phi &=
\max \left\{
\begin{array}{l}
\kappa_5(k, n, \sqrt{d_E}, \sqrt{d_E} (1+2\Gamma_2), \Gamma_1, \Gamma_2, \gamma, \demb)
\cdot
\kappa_4(n, \sqrt{d}, \sqrt{d_E}, \Gamma_1, \Gamma_2, \gamma, \demb)
\\
\kappa_5(k, n, \sqrt{d_E}, \sqrt{d_E} (1+2\Gamma_2), \Gamma_1, \Gamma_2, \gamma, \demb)
\cdot
\kappa_2(n, \sqrt{d_E})
\\
\kappa_4(k, \sqrt{d_E}, \sqrt{d_E} (1+2\Gamma_2), \Gamma_1, \Gamma_2, \gamma, \demb)
\\
\kappa_2(k, \sqrt{d_E} (1+2\Gamma_2))
\end{array}
\right\}
\end{align}
Writing things out explicitly while highlighting the dependencies on problem and architectural choices, $n,k, \demb, d_E$, we first apply the definition of $\kappa_2(\cdots)$ in \Cref{lem:attn-op-vproj} to $\kappa_2(k, \sqrt{d_E} (1+2\Gamma_2))$ to get:
\begin{equation}
\kappa_2(k, \sqrt{d_E} (1+2\Gamma_2))
= \left[ 1 + 2\Gamma_2 \right]
\highlightmath{Gray}{k d_E^{1/2}}
\end{equation}
Next, we apply the definition of $\kappa_4(\cdots)$ in \Cref{lem:attn-op-kproj} to $\kappa_4(k, \sqrt{d_E}, \sqrt{d_E} (1+2\Gamma_2), \Gamma_1, \Gamma_2, \gamma, \demb)$ to get:
\begin{align}
& \kappa_4(k, \sqrt{d_E}, \sqrt{d_E} (1+2\Gamma_2), \Gamma_1, \Gamma_2, \gamma, \demb)
\\ & \quad
= 4 (\nicefrac{\gamma}{\sqrt{\demb}}) k d_E (1+2 \Gamma_2)^2 \sqrt{d_E} (2 + 2 \Gamma_2) \Gamma_1 \Gamma_2
\exp\left((\nicefrac{\gamma}{\sqrt{\demb}}) \Gamma_1^2 d_E(2+2\Gamma_2)^2 \right)
\\ & \quad
\leq
\frac{k d_E^{3/2}}{\sqrt{\demb}}
\left[ 8 \gamma (1 + 2 \Gamma_2)^2 (1 + \Gamma_2) \Gamma_1 \Gamma_2 \right]
\exp\left(
  \frac{d_E}{\sqrt{\demb}}
  \left[ \gamma \Gamma_1^2 (2+2\Gamma_2)^2 \right]
\right)
\\ & \quad
\leq
\left[ 8 \gamma (1 + 2 \Gamma_2)^3 \Gamma_1 \Gamma_2 \right]
\cdot
\highlightmath{Gray}{\frac{k d_E^{3/2}}{\sqrt{\demb}}}
\cdot
\exp\left(
  \highlightmath{Gray}{\frac{d_E}{\sqrt{\demb}}}
  \left[ \gamma \Gamma_1^2 (2+2\Gamma_2)^2 \right]
\right)
\end{align}
Applying definitions of $\kappa_2(\cdots)$ and $\kappa_5(\cdots)$ from \Cref{lem:attn-op-vproj} and \Cref{lem:attn-op-keys} respectively to $\kappa_5(k, n, \sqrt{d_E}, \sqrt{d_E} (1+2\Gamma_2), \Gamma_1, \Gamma_2, \gamma, \demb)\,\kappa_2(n, \sqrt{d_E})$, we get:
\begin{align}
&
\kappa_5(k, n, \sqrt{d_E}, \sqrt{d_E} (1+2\Gamma_2), \Gamma_1, \Gamma_2, \gamma, \demb)
\,\kappa_2(n, \sqrt{d_E})
\\ & \quad
= \frac{k \Gamma_2}{n} \left[
  1 + 4 (\nicefrac{\gamma}{\sqrt{\demb}}) \Gamma_1^2 d_E (1 + 2\Gamma_2) (2 + 2 \Gamma_2)
\right]
\exp\left((\nicefrac{\gamma}{\sqrt{\demb}}) \Gamma_1^2 d_E(2+2\Gamma_2)^2 \right)
n \sqrt{d_E}
\\ & \quad
\leq  \Gamma_2 \left[
  \highlightmath{Gray}{k d_E^{1/2}} + \highlightmath{Gray}{\frac{k d_E^{3/2}}{\sqrt{\demb}}} \left[ 8 \gamma \Gamma_1^2 (1 + 2\Gamma_2)^2 \right]
\right]
\exp\left(
  \highlightmath{Gray}{\frac{d_E}{\sqrt{\demb}}}
  \left[ \gamma \Gamma_1^2 (2+2\Gamma_2)^2 \right]
\right)
\end{align}
Applying definitions of $\kappa_4(\cdots)$ and $\kappa_5(\cdots)$ from \Cref{lem:attn-op-kproj} and \Cref{lem:attn-op-keys} respectively to $\kappa_5(k, n, \sqrt{d_E}, \sqrt{d_E} (1+2\Gamma_2), \Gamma_1, \Gamma_2, \gamma, \demb)\,\kappa_4(n, \sqrt{d}, \sqrt{d_E}, \Gamma_1, \Gamma_2, \gamma, \demb)$, we get:
\begin{align}
&
\kappa_5(k, n, \sqrt{d_E}, \sqrt{d_E} (1+2\Gamma_2), \Gamma_1, \Gamma_2, \gamma, \demb)
\,\kappa_4(n, \sqrt{d}, \sqrt{d_E}, \Gamma_1, \Gamma_2, \gamma, \demb)
\\ & \quad
= \frac{k \Gamma_2}{n} \left[
  1 + 4 (\nicefrac{\gamma}{\sqrt{\demb}}) \Gamma_1^2 d_E (1 + 2\Gamma_2) (2 + 2 \Gamma_2)
\right]
\exp\left((\nicefrac{\gamma}{\sqrt{\demb}}) \Gamma_1^2 d_E(2+2\Gamma_2)^2 \right)
\nonumber
\\ & \qquad \quad
\cdot
4 (\nicefrac{\gamma}{\sqrt{\demb}}) n d_E (\sqrt{d} + \sqrt{d_E}) \Gamma_1 \Gamma_2
\exp\left((\nicefrac{\gamma}{\sqrt{\demb}}) \Gamma_1^2 (\sqrt{d} + \sqrt{d_E})^2 \right)
\\ & \quad
\leq
k \Gamma_2 \left[ 1 + \frac{d_E}{\sqrt{\demb}} \left[ 8 \gamma \Gamma_1^2 (1 + 2 \Gamma_2)^2 \right] \right]
\cdot \frac{d_E^{3/2}}{\sqrt{\demb}} \left[ 8 \gamma \Gamma_1 \Gamma_2 \right]
\cdot \exp \left( (\nicefrac{\gamma}{\sqrt{\demb}}) \Gamma_1^2 \left[  d_E(2+2\Gamma_2)^2 + 4 d_E \right] \right)
\\ & \quad
=
\left[ 8 \gamma \Gamma_1 \Gamma_2^2 \right]
\cdot \left[
\highlightmath{Gray}{\frac{k d_E^{3/2}}{\sqrt{\demb}}} +
\highlightmath{Gray}{\frac{k d_E^{5/2}}{\demb}}
\left[ 8 \gamma \Gamma_1^2 (1 + 2 \Gamma_2)^2 \right]
\right]
\cdot \exp \left(
\highlightmath{Gray}{\frac{d_E}{\sqrt{\demb}}}
\left[ 4 \gamma \Gamma_1^2 \left[  (1+\Gamma_2)^2 + 1 \right] \right]
\right)
\end{align}
Note that in the above terms, there are two quantities that seem to repeatedly occur: (i)~$k \sqrt{d_E}$, and (ii)~$\frac{d_E}{\sqrt{\demb}}$. Considering $\Gamma_1, \Gamma_2$ to be constants that do not depend on the architectural choices or problem setting (such as the number of clusters, points, data dimensionality or token embedding scheme), we can write $\phi$ as following:
\begin{equation}
\phi = \max \left\{ \begin{array}{l}
c_1 \left( k \sqrt{d_E} \right) \\
c_2 \cdot \gamma \cdot \left( k \sqrt{d_E} \right) \cdot \left( \frac{d_E}{\sqrt{\demb}} \right)
\cdot \exp \left( c_3 \cdot \gamma \cdot \left( \frac{d_E}{\sqrt{\demb}} \right) \right)
\\
\Gamma_2 \left[ \left( k \sqrt{d_E} \right) + c_4 \cdot \gamma \cdot \left( k \sqrt{d_E} \right) \cdot
\left( \frac{d_E}{\sqrt{\demb}} \right)  \right]
\cdot \exp \left( c_3 \cdot \gamma \cdot  \left( \frac{d_E}{\sqrt{\demb}} \right) \right)
\\
\gamma \cdot \left( k \sqrt{d_E} \right) \cdot \left( \frac{d_E}{\sqrt{\demb}} \right) \left[
c_5 + c_6 \cdot \gamma \cdot \left( \frac{d_E}{\sqrt{\demb}} \right)
\right]
\cdot \exp \left( c_7 \cdot \gamma \cdot  \left( \frac{d_E}{\sqrt{\demb}} \right) \right)
\end{array} \right\}
\end{equation}
where
\begin{align}
c_1 & = (1 + 2 \Gamma_2) \\
c_2 & = 8  \Gamma_1 \Gamma_2 c_1^3 =  4 \Gamma_1 \Gamma_2 (1 + 2 \Gamma_2)^3 \\
c_3 & = 4 \Gamma_1^2 (1 + \Gamma_2)^2 \\
c_4 & = 8 \Gamma_1^2 c_1^2 = 4 \Gamma_1^2 (1 + 2 \Gamma_2)^2 \\
c_5 & = 8 \Gamma_1 \Gamma_2^2 \\
c_6 & = c_4 \cdot c_5 = 64 \Gamma_1^3 \Gamma_2^2 (1 + 2 \Gamma_2)^2 \\
c_7 & = c_3 + 4 \Gamma_1^2 = 4 \Gamma_1^2 ((1 + \Gamma_2)^2 + 1).
\end{align}
Thus, simplifying all, we can see that
\begin{equation}
\phi \sim O \left(
\max\{ 1, \gamma^2 \} \cdot
\left( k \sqrt{d_E} \right) \cdot
\max\left\{
 1, \left(\frac{d_E}{\sqrt{\demb}} \right)^2
\right\}
\cdot
\textsf{poly}\left( \exp \left( \gamma \frac{d_E}{\sqrt{\demb}} \right) \right)
\right),
\end{equation}
which gives us the result in \cref{eq:out-stab-params} in the statement of the theorem.
\end{proof}

If $d_E > \sqrt{\demb}$, which is the case with the proposed binary initial token embeddings, with $d_E = d + \ceil{\log_2 k}$ and $\demb = d + \ceil{\log_2 k}$, thus, $ \frac{d_E}{\sqrt{\demb}} = \sqrt{d + \ceil{\log_2 k } } > 1$, we can see that
\begin{equation}
\phi \sim O \left(
\max\{ 1, \gamma^2 \} \cdot \left( k \sqrt{d_E} \right) \cdot
\left(\frac{d_E}{\sqrt{\demb}} \right)^2
\cdot
\textsf{poly}\left( \exp \left( \gamma \frac{d_E}{\sqrt{\demb}} \right) \right)
\right).
\end{equation}
However $d_E < \sqrt{\demb}$ is possible with the \oh initial token embedding of \citet{clarkson2026transformer} as $d_E = d + 1$ and $\demb = d + k$, and $k \sim O((d+1)^2)$ implies that $d + 1 < \sqrt{d + k}$. In this case, we have
\begin{equation}
\phi \sim O \left(
\max\{ 1, \gamma^2 \} \cdot
\left( k \sqrt{d_E}  \right)
\cdot
\textsf{poly}\left( \exp \left( \gamma \frac{d_E}{\sqrt{\demb}} \right) \right)
\right).
\end{equation}
In all cases, for a fixed $k$, an architecture with a larger $\demb$ and smaller $d_E$ results in a smaller $\phi$.

Combining the above \Cref{thm:out-stab-params} with \Cref{lem:loss-to-output}, we can obtain the following main result regarding the Lipschitz-ness of the smoothed clustering loss function $\tilde{\Ls}_\Omega^\tau(\mX, \cmodel(\mX, \mC; \Theta))$ with respect to the learnable model parameters $\Theta$ (this is an extended version of \Cref{thm:loss-lip-short}):
\begin{theorem} \label{thm:loss-lip-full}
Assuming bounded points and initial cluster centers $(\mX, \mC) \sim \gD$ (assumption~\ref{asp:bnd-input}), and bounded learnable model parameters $\Theta, \Theta'$ (assumption~\ref{asp:bnd-params}), we can show that
\begin{equation} \label{eq:loss-lip-varphi}
\begin{split}
& \left|
\tilde{\Ls}_\Omega^\tau( \mX, \cmodel( \mX, \mC; \Theta) )
- \tilde{\Ls}_\Omega^\tau( \mX, \cmodel( \mX, \mC; \Theta' ) )
\right| \leq
\ \varphi \, \norm{ \Theta - \Theta' },
\\
& \quad \text{where} \ \varphi \sim O\left(
\frac{k n \omega_\tau}{\tau} \cdot
\max\{ 1, \gamma^2 \} \cdot
\sqrt{d_E} \cdot
\max\left\{
 1, \left(\frac{d_E}{\sqrt{\demb}} \right)^2
\right\}
\cdot
\textsf{poly} \left( \exp \left( \gamma \frac{d_E}{\sqrt{\demb}} \right) \right)
\right)
\end{split}
\end{equation}
\end{theorem}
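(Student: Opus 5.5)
The bound follows by chaining the two sensitivity results already established. The loss depends on $\Theta$ only through the model output $\mC(\Theta) = \cmodel(\mX, \mC; \Theta)$. \Cref{lem:loss-to-output} controls the loss change by the output change in the $\norm{\cdot}_{2,1}$ norm. \Cref{thm:out-stab-params} controls the output change in that same norm by $\norm{\Theta - \Theta'}$. So, for any fixed task $(\mX, \mC)$ satisfying Assumption~\ref{asp:bnd-input} and parameters $\Theta, \Theta'$ satisfying Assumption~\ref{asp:bnd-params}, I will write
\begin{equation*}
\left| \tilde{\Ls}_\Omega^\tau( \mX, \mC(\Theta) ) - \tilde{\Ls}_\Omega^\tau( \mX, \mC(\Theta') ) \right|
\leq \kappa_1(n, \Gamma_2, d_E, \omega_\tau, \tau) \norm{ \mC(\Theta) - \mC(\Theta') }_{2,1}
\leq \kappa_1 \cdot \phi \cdot \norm{ \Theta - \Theta' },
\end{equation*}
and set $\varphi \triangleq \kappa_1 \phi$. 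Assumption~\ref{asp:reg-argmax-stable} must also be in force here, since \Cref{lem:loss-to-output} invokes it through \Cref{prop:smooth-obj-lip}; the informal \Cref{thm:loss-lip-short} states it explicitly. Both bounds are uniform over tasks in $\gD$ and over parameters obeying the norm bounds, so $\varphi$ is a valid uniform Lipschitz constant.

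The remaining work is the asymptotic bookkeeping, treating $\Gamma_1, \Gamma_2$ as constants.
\begin{itemize}
\item From \Cref{lem:loss-to-output}, $\kappa_1 = 4n(1+\Gamma_2)^2\sqrt{d_E}\,(1 + 4(1+\Gamma_2)^4 d_E \omega_\tau/\tau)$, which is $O(n\sqrt{d_E}\,(1 + d_E\omega_\tau/\tau))$.
\item From \Cref{thm:out-stab-params}, $\phi = O\big(\max\{1,\gamma^2\}\, k\sqrt{d_E}\, \max\{1, (d_E/\sqrt{\demb})^2\}\, \textsf{poly}(\exp(\gamma d_E/\sqrt{\demb}))\big)$.
\end{itemize}
Multiplying these gives a leading factor $k n \omega_\tau/\tau$ times the $\gamma$, $d_E/\sqrt{\demb}$ and exponential factors.

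The main obstacle is reconciling the powers of $d_E$ with the stated form, which carries only a single $\sqrt{d_E}$ next to $kn\omega_\tau/\tau$. The naive product gives $d_E^{3/2}$ from $\kappa_1$ times $\sqrt{d_E}$ from $\phi$, that is $d_E^2$. I would handle the surplus in one of two ways.
\begin{itemize}
\item Write the surplus $d_E$ factors as $(d_E/\sqrt{\demb})\cdot\sqrt{\demb}$, absorb the ratio into the $\max\{1,(d_E/\sqrt{\demb})^2\}$ and $\textsf{poly}(\exp(\gamma d_E/\sqrt{\demb}))$ factors (using $x \le e^x$), and treat the residual $\sqrt{\demb}$ dependence as subsumed.
\item Alternatively, state honestly that the $O(\cdot)$ tracks the dependence on $k$, $n$, $\tau$, $\omega_\tau$, $\gamma$ and the ratio $d_E/\sqrt{\demb}$, while the extra $d_E$ powers are absorbed into the polynomial-in-exponential term.
\end{itemize}
I would also note that $\omega_\tau/\tau \ge$ const is needed to drop the additive $1$ in $\kappa_1$. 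Alternatively, that term contributes only a lower-order $n\sqrt{d_E}\,\phi$ piece, dominated whenever $\tau$ is small.
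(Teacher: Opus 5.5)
Your chaining step, $\varphi = \kappa_1 \phi$ obtained by composing \Cref{lem:loss-to-output} with \Cref{thm:out-stab-params}, is exactly the paper's proof; the paper's proof consists of that combination and nothing more. You are also right that Assumption~\ref{asp:reg-argmax-stable} must be in force: the full statement omits it, while the informal \Cref{thm:loss-lip-short} includes it. You have spotted something the paper's one-line proof passes over. \Cref{lem:loss-to-output} itself records $\kappa_1 \sim O(n\omega_\tau d_E^{3/2}/\tau)$, so the honest product is $O\big(\frac{kn\omega_\tau}{\tau}\, d_E^{2}\,\max\{1,\gamma^2\}\max\{1,(d_E/\sqrt{\demb})^2\}\,\textsf{poly}(\exp(\gamma d_E/\sqrt{\demb}))\big)$. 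The stated $\varphi$ keeps only $\sqrt{d_E}$, so the paper silently drops the $d_E^{3/2}$.

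The gap is in your reconciliation: neither absorption argument works in general.
\begin{itemize}
\item Rewriting $d_E^{3/2} = (d_E/\sqrt{\demb})^{3/2}\,\demb^{3/4}$ leaves a power of $\demb$ that nothing in the stated expression controls.
\item Absorbing the surplus into the polynomial-in-exponential term fails whenever $d_E$ grows while $d_E/\sqrt{\demb}$ stays bounded. Take \oh, where $d_E = d+1$ and $\demb = d+k$, with $k \geq (d+1)^2$. Then $d_E/\sqrt{\demb} < 1$, so $\max\{1,(d_E/\sqrt{\demb})^2\}\cdot\textsf{poly}(\exp(\gamma d_E/\sqrt{\demb}))$ is bounded by a constant depending only on $\gamma$ (and $\Gamma_1,\Gamma_2$). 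The surplus $(d+1)^{3/2}$, however, is unbounded in $d$.
\item The surplus cannot go into $\omega_\tau$ either. That is a free parameter of the statement, and for sparsemax a valid $\omega_\tau$ does not depend on $d_E$ at all.
\end{itemize}

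Absorption does work for \bn. There $d_E = \demb$ gives $d_E^{3/2} = (d_E/\sqrt{\demb})^3 \leq (3/(e\gamma))^3 \exp(\gamma d_E/\sqrt{\demb})$, at the price of a $\gamma$-dependent constant. So what your argument (and the paper's) actually establishes is one of two things: the bound with $d_E^2$ in place of $\sqrt{d_E}$, or the stated form only in regimes such as \bn where $d_E^{3/2}$ is polynomially controlled by $d_E/\sqrt{\demb}$. You should state it that way rather than call the residual factor ``subsumed.''
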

\begin{proof}
We can get the above result by combining \cref{eq:out-stab-params} in \Cref{thm:out-stab-params} with \cref{eq:loss-to-output} in \Cref{lem:loss-to-output}:
\begin{equation}
\begin{split}
&
\left|
\tilde{\Ls}_\Omega^\tau ( \mX, \cmodel(\mX, \mC; \Theta) )
- \tilde{\Ls}_\Omega^\tau ( \mX, \cmodel(\mX, \mC; \Theta') )
\right| \leq
\varphi \cdot \norm{ \Theta - \Theta' }
\\
& \quad \text{where} \ \varphi \sim O\left(
\frac{n \omega_\tau}{\tau} \cdot
\max\{ 1, \gamma^2 \} \cdot
\left( k \sqrt{d_E} \right) \cdot
\max\left\{
 1, \left(\frac{d_E}{\sqrt{\demb}} \right)^2
\right\}
\cdot
\textsf{poly}\left( \exp \left( \gamma \frac{d_E}{\sqrt{\demb}} \right) \right)
\right),
\end{split}
\end{equation}
thus giving us the main result.
\end{proof}
This highlights the role of the different choices we make such as the choice of the $\min$-regularizer $\Omega$, the corresponding regularization penalty $\tau$, the attention inverse temperature $\gamma$, the number of points $n$ and the number of clusters $k$, the embedding dimension $\demb$ and the maximum initial cluster center embedding norm $d_E$.

%%%%%%%%%%%%%%%%%%%%%%%%%%%%%%%%%%%%%%%%%%%%%%%%%%%%%%%%%%%%

\end{document}